\newlength{\pairgap}   \setlength{\pairgap}{2pt}     % small gap between neighbors
\newlength{\groupgap}  \setlength{\groupgap}{1.2em}  % big gap between groups (2+2+2 below)
\newlength{\colwAll}
\newlength{\colwGrp}
\definecolor{plum}{rgb}{0.56, 0.27, 0.52}
\definecolor{paradise}{RGB}{242, 129, 0}
\definecolor{navyblue}{RGB}{0, 84, 181}
\definecolor{shiraz}{RGB}{186, 25, 63}
\theoremstyle{plain}
\newtheorem{theorem}{Theorem}[section]
\newtheorem{corollary}[theorem]{Corollary}
\theoremstyle{definition}
\theoremstyle{remark}
\newtheorem{remark}[theorem]{Remark}
\icmltitlerunning{MMD Guidance: Training-Free Distribution Adaptation for Diffusion Models via Maximum Mean Discrepancy Guidance}
\begin{document}

\twocolumn[
  \icmltitle{MMD Guidance: Training-Free Distribution Adaptation for Diffusion Models via Maximum Mean Discrepancy Guidance}

  % It is OKAY to include author information, even for blind submissions: the
  % style file will automatically remove it for you unless you've provided
  % the [accepted] option to the icml2026 package.

  % List of affiliations: The first argument should be a (short) identifier you
  % will use later to specify author affiliations Academic affiliations
  % should list Department, University, City, Region, Country Industry
  % affiliations should list Company, City, Region, Country

  % You can specify symbols, otherwise they are numbered in order. Ideally, you
  % should not use this facility. Affiliations will be numbered in order of
  % appearance and this is the preferred way.
  \icmlsetsymbol{equal}{*}

  \begin{icmlauthorlist}
    \icmlauthor{Matina Mahdizadeh Sani}{equal,waterloo}
    \icmlauthor{Nima Jamali}{equal,waterloo}
    \icmlauthor{Mohammad Jalali}{equal,cuhk}
    \icmlauthor{Farzan Farnia}{cuhk}
  \end{icmlauthorlist}

  \icmlaffiliation{waterloo}{School of Computer Science, University of Waterloo}
  \icmlaffiliation{cuhk}{Department of Computer Science and Engineering, The Chinese University of Hong Kong}

  \icmlcorrespondingauthor{Matina Mahdizadeh Sani}{m3mahdizadehsani@uwaterloo.ca}
  \icmlcorrespondingauthor{Nima Jamali}{nima.jamali@uwaterloo.ca}
  \icmlcorrespondingauthor{Mohammad Jalali}{mjalali24@cse.cuhk.edu.hk}
  \icmlcorrespondingauthor{Farzan Farnia}{farnia@cse.cuhk.edu.hk}

  % You may provide any keywords that you find helpful for describing your
  % paper; these are used to populate the "keywords" metadata in the PDF but
  % will not be shown in the document
  \icmlkeywords{Machine Learning, ICML}

  \vskip 0.3in
]

% this must go after the closing bracket ] following \twocolumn[ ...

% This command actually creates the footnote in the first column listing the
% affiliations and the copyright notice. The command takes one argument, which
% is text to display at the start of the footnote. The \icmlEqualContribution
% command is standard text for equal contribution. Remove it (just {}) if you
% do not need this facility.

% Use ONE of the following lines. DO NOT remove the command.
% If you have no special notice, KEEP empty braces:
% \printAffiliationsAndNotice{}  % no special notice (required even if empty)
% Or, if applicable, use the standard equal contribution text:
\printAffiliationsAndNotice{\icmlEqualContribution}

\begin{abstract}
Pre-trained diffusion models have emerged as powerful generative priors for both unconditional and conditional sample generation, yet their outputs often deviate from the characteristics of user-specific target data. Such mismatches are especially problematic in domain adaptation tasks, where only a few reference examples are available and retraining the diffusion model is infeasible. Existing inference-time guidance methods can adjust sampling trajectories, but they typically optimize surrogate objectives such as classifier likelihoods rather than directly aligning with the target distribution. We propose \emph{MMD Guidance}, a training-free mechanism that augments the reverse diffusion process with gradients of the \textit{Maximum Mean Discrepancy (MMD)} between generated samples and a reference dataset. MMD provides reliable distributional estimates from limited data, exhibits low variance in practice, and is efficiently differentiable, which makes it particularly well-suited for the guidance task. Our framework naturally extends to prompt-aware adaptation in conditional generation models via product kernels. Also, it can be applied with computational efficiency in latent diffusion models (LDMs), since guidance is applied in the latent space of the LDM. Experiments on synthetic and real-world benchmarks demonstrate that MMD Guidance can achieve distributional alignment while preserving sample fidelity. The project code is available at \url{github.com/matinamehdizadeh/MMD-Guidance}.
\end{abstract}

\section{Introduction}
The rapid advancement of generative artificial intelligence is largely driven by the paradigm of foundation models: pre-trained neural networks capable of generalization across diverse tasks and modalities \citep{bommasani2021opportunities}. These models, including large language models (LLMs) \citep{brown2020language} and denoising diffusion models \citep{ho2020denoising,song2021denoising,rombach2022high}, offer powerful priors that can be adapted to specific downstream applications with minimal computational overhead. While this adaptation can occur through fine-tuning or prompt engineering, training-free inference-time methods remain relatively underexplored, despite their practical advantages of no memory overhead and immediate deployment.

A fundamental challenge in deploying generative models is \textit{distribution matching}, i.e., aligning model outputs with a user's target distribution that differs from the pre-training corpus. For example, consider a medical imaging scenario where a practitioner needs synthetic X-rays matching their hospital's specific equipment characteristics, or a designer requiring images that follow their brand's specific visual style, where both tasks are specified through a small number (e.g. 50-100) of reference examples. A generic text-conditioned image generation model will generate samples following its learned priors, leading to a distribution mismatch. Existing methods cannot fully address the distribution mismatch without computationally expensive retraining on a sufficiently large sample set from the target distribution, which is often unavailable to  users of generative AI services. 

\begin{figure*}[t]
    \centering
    \includegraphics[width=0.98\linewidth]{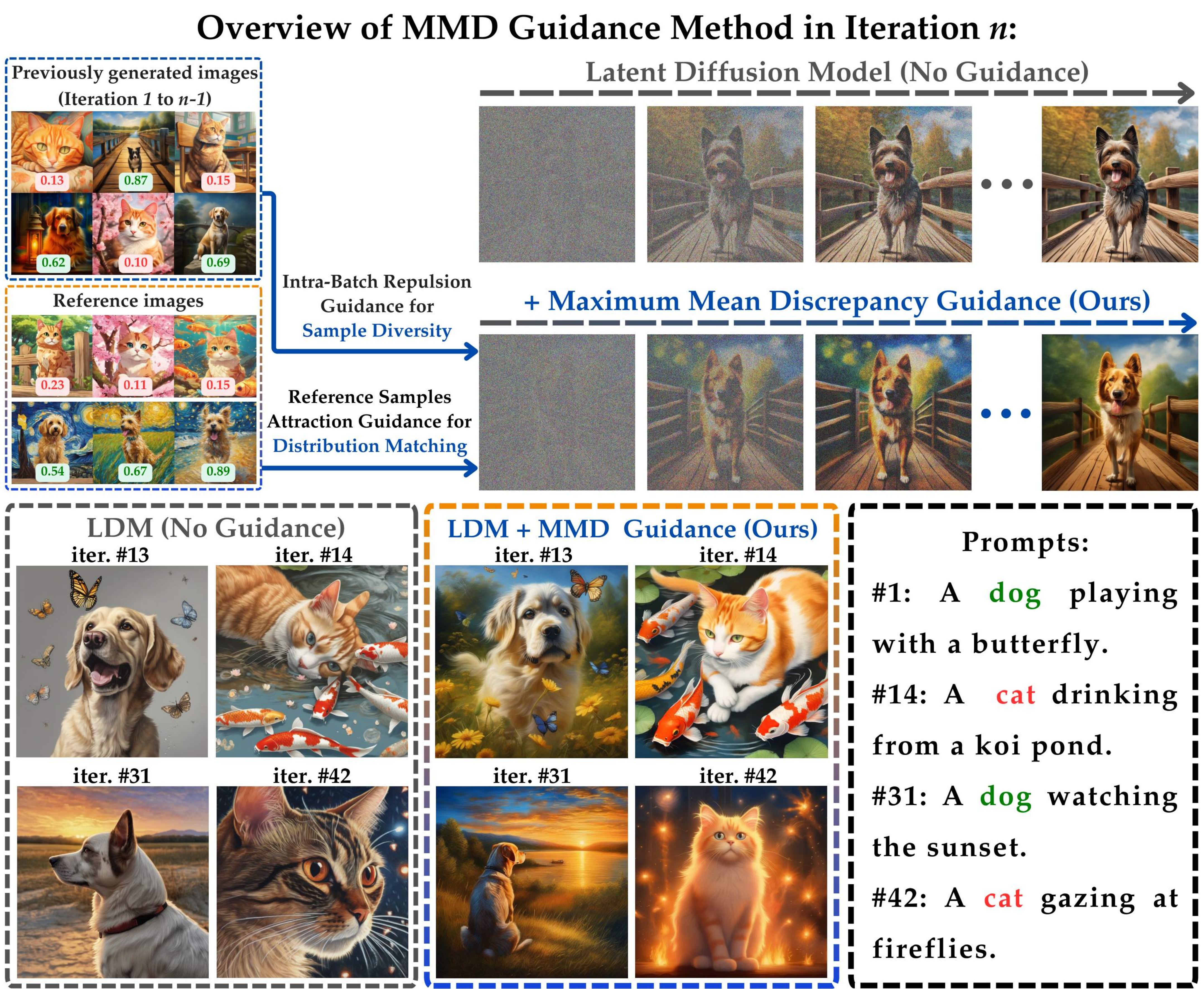}
    \caption{Image generation via a text-conditioned latent diffusion model (LDM) with no guidance vs. our proposed MMD guidance. The LDM (Stable Diffusion-XL) following our proposed MMD guidance over 100 reference samples of "cat" and "dog" images could exhibit the visual format of the target distribution, but the unguided LDM's output samples differ in style from the target model.}
    \label{fig:figure_one}
  
\end{figure*}

The mentioned gap between model capabilities and deployment requirements motivates the following question: Can we guide pre-trained diffusion models to match a user's target distribution at inference time using limited reference samples from their distribution? Note that this distribution matching task can be considered in both the prompt-free (unconditional) and prompt-aware (conditional) data generation, where the distribution alignment targets the user's distribution of interest. 

In particular, diffusion models are suited for such training-free adaptation due to their iterative denoising process. During the sampling phase of diffusion models, external signals can steer the generation trajectory through guidance in sample generation without modifying the parameters of the pre-trained neural net denoiser. Existing guidance techniques successfully bias samples toward specific objectives: classifier guidance \citep{dhariwal2021diffusion} maximizes class probabilities, classifier-free guidance \citep{ho2022classifierfree} amplifies conditioning, and score distillation \citep{poole2022dreamfusion} optimizes external losses. However, none of these methods can explicitly match a target distribution defined by reference samples, since they optimize surrogate objectives rather than directly minimizing distributional discrepancy. The key challenge is to quantify and minimize distribution mismatch in reverse diffusion dynamics.

In this work, we propose the \textit{MMD Guidance} approach, an inference-time mechanism that augments the reverse diffusion process with gradients from the Maximum Mean Discrepancy (MMD) \citep{gretton2012kernel}. This approach aims to address distribution matching by incorporating $\nabla_{x_t} \text{MMD}^2$ into the sampling dynamics at each timestep. MMD is uniquely suited for this task. Unlike the family of $f$-divergences and Wasserstein distances, the MMD distance provides unbiased, low-variance estimates from limited samples without suffering from the curse of dimensionality. Moreover, the kernel-based formulation in MMD is efficiently differentiable, enabling scalable gradient computation with respect to generated samples.

Our proposed MMD guidance method can smoothly extend to \textit{prompt-aware distribution matching} in text-conditioned image generation. Users often need outputs that satisfy both textual semantics ("a portrait of a person") and distributional constraints (matching the style of their reference portraits). We attempt to achieve this property through the product kernel function $k_\text{multimodal}\bigl([p,x], [p',x']\bigr) = k_\text{text}\bigl(p,p'\bigr)\cdot k_\text{image}\bigl(x,x'\bigr) $ that jointly measure similarity in prompt and visual spaces, enabling prompt-aware control without any retraining of the denoiser in conditional diffusion model sample generation. %This addresses a limitation where current text-conditioned image generation models can encounter in adapting to user-specific visual distributions.

For practical deployment, we implement MMD Guidance efficiently in the \emph{latent space} of a \textit{latent diffusion model (LDM)} \citep{rombach2022high}. By computing MMD and its gradients directly on the latent representation $z \in \mathcal{Z}$ rather than pixel space, we achieve a significant speedup while maintaining alignment quality. This enhances the efficiency and scalability of the MMD guidance method for state-of-the-art conditional diffusion generation models.

We validate MMD Guidance in our numerical experiments on synthetic distributions, standard image databases, and domain adaptation benchmarks. On synthetic Gaussian mixture models, the MMD guidance can recover the target mixture from a limited number of samples. In the case of image generation, the MMD Guidance framework reduces distribution discrepancy compared to the baseline training-free guidance methods while maintaining comparable fidelity scores in image generation. Also, the MMD guidance method could preserve sample quality through inference-time guidance in the latent space with proper generalization. Figure~\ref{fig:figure_one} illustrates an example of applying the prompt-aware MMD guidance with only 100 reference samples of "cat" and "dog" categories to the latent space of the SD-XL \citep{podell2023sdxl} latent diffusion model. In summary, our work's main contributions are:\vspace{-2mm}
\begin{itemize}[leftmargin=*,itemsep=0pt]
\item We introduce MMD Guidance, a training-free method for inference-time adaptation of diffusion models to a target distribution specified by a limited reference set.
\item We propose a practical extension for prompt-aware adaptation using product kernels, enabling joint alignment to both text prompts and visual styles.
\item We develop an efficient and scalable implementation that operates directly in the latent space of LDMs.
\item We provide empirical validation demonstrating that MMD guidance can achieve satisfactory distribution matching.
\end{itemize}

\vspace{-3mm}
\section{Related Work}\vspace{-1mm}
\noindent \textbf{Guidance in diffusion models.}
Inference-time guidance has been central to the success of diffusion models. Classifier guidance augments reverse diffusion with classifier gradients, boosting conditional fidelity at the expense of diversity \citep{dhariwal2021diffusion}. Classifier-free guidance (CFG) removes the external classifier by interpolating conditional and unconditional scores \citep{ho2022classifierfree}. This line of work has been extended in several ways. \cite{malarz2025classifier} introduce $\beta$-CFG, which controls the impact of guidance during the denoising process. Similarly, CFG++ \citep{chung2025cfg} aims to mitigate the off-manifold behavior associated with the standard CFG.

Follow-ups extend guidance to multiple conditions via product-of-experts composition \citep{liu2022composablediffusion} and to editing through cross-attention control or partial noising \citep{hertz2023prompttoprompt,meng2022sdedit}. These approaches bias sampling toward predefined conditions, but do not explicitly align generations to an arbitrary reference distribution.

\textbf{Conditional Generation with Guidance.} Controlling generative processes with specific conditions is increasingly vital for practical applications, utilizing inputs such as text guidance~\citep{kim2022diffusionclip, nichol2022glide, liu2023more}, class labels~\citep{dhariwal2021diffusion}, style images~\citep{mou2024t2iadapter, zhang2023adding} and human motions~\citep{tevet2023human}.
Conditional generation methods are divided into training-based and training-free approaches. Training-based methods either learn a time-dependent classifier that guides the noisy sample $\boldsymbol{x}t$ toward the condition $\boldsymbol{y}$~\citep{dhariwal2021diffusion, nichol2022glide, zhao2022egsde, liu2023more} or directly train a conditional denoising model $\boldsymbol{\epsilon}_\theta(\boldsymbol{x}_t,t,\boldsymbol{y})$ through few-shot adaptation~\citep{mou2024t2iadapter, rombach2022high, ruiz2023dreambooth}.
In contrast, training-free guidance enables zero-shot conditional generation using a pre-trained differentiable predictor, such as a classifier or energy function, to assess the alignment of generated samples with target conditions~\citep{he2024manifold, bansal2023universal, yu2023freedom, ye2024tfg}. The diversity guidance frameworks in \citep{jalali2025sparke,jalali2024conditional,zhang2026exploring} leverage the RKE \citep{jalali2023information,ospanov2025towards} and Vendi/Conditional-Vendi scores \citep{friedman2023vendiscorediversityevaluation,ospanov2024statvendi} to improve diversity in the data generated by diffusion models. Our work is a training-free approach that utilizes the MMD distance to improve alignment while maintaining diversity.

\noindent \textbf{Adaptation of Diffusion Models.}
A complementary line adapts pretrained text-to-image models to user-specific concepts and styles. Textual Inversion learns novel token embeddings for new subjects \citep{gal2022textualinversion}; DreamBooth fine-tunes for subject fidelity with a handful of images \citep{ruiz2023dreambooth}; and LoRA enables parameter-efficient adaptation via low-rank updates \citep{hu2022lora}. Structural controllers such as ControlNet and T2I-Adapter add spatial/structural conditions with lightweight modules \citep{zhang2023controlnet,mou2024t2iadapter}. 
In a recent work, Domain Guidance \citep{zhong2025domain} uses the pretrained diffusion models and a fine-tuned model to guide generation toward the target domain.
While effective, these methods require optimization and parameter storage, and can overfit tiny reference sets. Our approach is training-free and operates at inference time, making it complementary in deployment and privacy-constrained settings.

\noindent \textbf{MMD and distribution alignment.}
The maximum mean discrepancy (MMD) is a kernel-based integral probability metric with strong finite-sample properties, widely used in two-sample testing \citep{gretton2012kernel}, domain adaptation \citep{long2015dan}, and generative modeling \citep{li2015gmmn,li2017mmdgan,binkowski2018kid,wang2023distributedKID,wu2026maximum_isit}. Also, the MMD and kernel-based scores have been used for online evaluation and selection of generative models \citep{hu2025multiarmedbandit,rezaei2024be,hu2025online,hu2025promptwise,jafari2026dakucb,nia2026mixturegreedy}, and have been used for embedding alignment \citep{jalali2025spec,gong2025kernel,Ospanov_2025_ICCV,wu2026koda} and generative model comparison \citep{zhang2024interpretable,zhang2025finc,lotfian2026promptsplit}. 
In concurrent work, \cite{galashov2025} replaces the static guidance weight $w$ with a timestep-dependent learnable function. They train a neural network to predict this function using MMD as the optimization objective. In contrast, our training-free method applies MMD directly within the denoising process by injecting $\nabla_{x_t}\mathrm{MMD}^2$ into the diffusion sampler, guiding samples toward the reference distribution, whereas prior work employs MMD as a training loss or evaluation metric.

\section{Preliminaries}
\subsection{Kernel Functions and Maximum Mean Discrepancy}

Let $k: \mathcal{X} \times \mathcal{X} \to \mathbb{R}$ be a positive semi-definite kernel with associated reproducing kernel Hilbert space (RKHS) $\mathcal{H}_k$ and feature map $\varphi: \mathcal{X} \to \mathcal{H}_k$ satisfying $k(x, x') = \langle \varphi(x), \varphi(x') \rangle_{\mathcal{H}_k}$. The \textit{kernel mean embedding} of a distribution $P$ over $\mathcal{X}$ is defined as:
\begin{equation}
\mu_P := \mathbb{E}_{x \sim P}[\varphi(x)] \in \mathcal{H}_k.
\end{equation}

The \textit{Maximum Mean Discrepancy (MMD)} between distributions $P$ and $Q$ measures the RKHS distance between their mean embeddings \citep{gretton2012kernel}:
\begin{align*}
\text{MMD}^2&(P, Q) = \bigl\|\mu_P - \mu_Q\bigr\|_{\mathcal{H}_k}^2 = \mathbb{E}_{x,x' \stackrel{\text{iid}}{\sim} P}\bigl[k(x,x')\bigr] \\
&+ \mathbb{E}_{y,y' \stackrel{\text{iid}}{\sim} Q}\bigl[k(y,y')\bigr] - 2\cdot\mathbb{E}_{x \sim P, y \sim Q}\bigl[k(x,y)\bigr].
\end{align*}

For characteristic kernels (e.g., Gaussian kernel), MMD defines a metric on probability measures: $\text{MMD}(P,Q) = 0$ if and only if $P = Q$, and it  satisfies the triangle inequality. 

\subsection{Latent Diffusion Models (LDMs)}

Latent Diffusion Models (LDMs) \citep{rombach2022high} perform diffusion in the latent space of a pre-trained variational autoencoder (VAE), achieving computational efficiency while maintaining generation quality. Given encoder $\mathcal{E}: \mathcal{X} \to \mathcal{Z}$ and decoder $\mathcal{D}: \mathcal{Z} \to \mathcal{X}$, LDMs operate on latent codes $z = \mathcal{E}(x)$ in the latent space $\mathcal{Z}$. The forward diffusion progressively adds Gaussian noise to latents:
\begin{align*}
q(z_t | z_0) &= \mathcal{N}(z_t; \sqrt{\bar{\alpha}_t} z_0, (1-\bar{\alpha}_t)I), \\ z_t &= \sqrt{\bar{\alpha}_t} z_0 + \sqrt{1-\bar{\alpha}_t} \epsilon_t,
\end{align*}
where $\epsilon_t \sim \mathcal{N}(0,I)$, $\bar{\alpha}_t = \prod_{s=1}^t \alpha_s$, and the parameters $\{\alpha_t\}_{t=1}^T$ follow a proper schedule. A denoising U-Net $\epsilon_\theta(z_t, t, c)$ is trained to predict the noise $\epsilon$ given noisy latent $z_t$, timestep $t$, and optional conditioning $c$ (e.g., text embeddings from CLIP). The training objective is $
\mathcal{L}(\theta) = \mathbb{E}_{z_0, \epsilon, t, c}\left[\|\epsilon - \epsilon_\theta(z_t, t, c)\|^2\right]$. In the \textit{sampling} phase, we proceed via iterative denoising starting from $z_T \sim \mathcal{N}(0,I)$:
\begin{equation}
z_{t-1} = {\frac{1}{\sqrt{\alpha_t}}\left(z_t - \frac{1-\alpha_t}{\sqrt{1-\bar{\alpha}_t}}\epsilon_\theta(z_t, t, c)\right)}  + \sigma_t \eta,
\label{eq:ddpm_sampling}
\end{equation}
where $\eta \sim \mathcal{N}(0,I)$, $\sigma_t^2 = \tilde{\beta}_t = \frac{1-\bar{\alpha}_{t-1}}{1-\bar{\alpha}_t}\beta_t$ for DDPM sampling \citep{ho2020denoising}. The \textit{guidance mechanisms} modify the predicted mean $\mu_\theta$ to steer generation. Notably, the classifier-free guidance \citep{ho2022classifierfree} interpolates conditional and unconditional predictions as follows, in which $w > 1$ amplifies conditioning strength,
\begin{equation}
\tilde{\epsilon}_\theta(z_t, t, c) = (1-w)\cdot\epsilon_\theta(z_t, t, \varnothing) + w \cdot \epsilon_\theta(z_t, t, c)
\end{equation}
 %Our proposed MMD guidance in the next section similarly modifies the sampling trajectory but uses distribution-aware gradients $\nabla_{z_t}\text{MMD}^2$ to match target distributions rather than amplifying text conditioning in the classifier-free guidance.

\section{MMD Guidance for Diffusion Models}
\subsection{Distribution Matching via Divergence Guidance in Diffusion Models}

Consider the task of adapting a pre-trained diffusion model to generate samples matching a target distribution $Q$, specified only through a small set of reference samples $\mathcal{R} = \{x_j^{(r)}\}_{j=1}^{N_r}$. In our work, we aim to develop a guidance-based framework to address this challenge and perform a training-free adaptation of the pre-trained diffusion model. To guide the sampling process toward this target, we propose augmenting the reverse diffusion with gradients of a divergence measure between the distributions of generated and reference samples.

The choice of divergence measure is critical when estimating from limited data. Standard $f$-divergences such as the KL-divergence and total variation distance suffer from the curse of dimensionality in high-dimensional spaces, requiring sample complexity exponential in dimension for reliable estimation \citep{sriperumbudur2012empirical}. Similarly, Wasserstein distances, while providing a powerful metric in distinguishing probability distributions, exhibit high sample complexity, with minimax estimation rates of $O(n^{-1/d})$ for $d$-dimensional data \citep{weed2019sharp}.

We propose using Maximum Mean Discrepancy (MMD) as the divergence measure in the distribution matching guidance process. The choice of MMD distance offers two key advantages: (i) sample complexity independent of ambient dimension for characteristic kernels, and (ii) closed-form gradient computation enabling efficient optimization. More specifically, considering samples $\{z_t^{(i)}\}_{i=1}^B$ at timestep $t$ with empirical distribution $\widehat{P}_t = \frac{1}{B}\sum_{i=1}^B \delta_{z_t^{(i)}}$ and reference distribution $\widehat{Q} = \frac{1}{N_r}\sum_{j=1}^{N_r} \delta_{z_j^{(r)}}$, the empirical squared MMD is:
\begin{align*}
&\widehat{\text{MMD}}^2(\widehat{P}_t, \widehat{Q}) = \frac{1}{B^2}\sum_{i,i'=1}^B k(z_t^{(i)}, z_t^{(i')})\\ &+ \frac{1}{N_r^2}\sum_{j,j'=1}^{N_r} k(z_j^{(r)}, z_{j'}^{(r)}) - \frac{2}{BN_r}\sum_{i=1}^B\sum_{j=1}^{N_r} k(z_t^{(i)}, z_j^{(r)}).
\end{align*}

\subsection{MMD-Guided Sampling in Diffusion Models}

We incorporate MMD minimization into the diffusion sampling process by modifying the reverse trajectory with its gradients. The MMD-guided sampling update becomes:
\begin{equation}
z_{t-1}^{(i)} = \text{sampler}(z_t^{(i)}, t, \epsilon_\theta) - \lambda_t \nabla_{z_t^{(i)}} \widehat{\text{MMD}}^2(\widehat{P}_t, \widehat{Q}),
\label{eq:mmd_guided_sampling}
\end{equation}
where $\text{sampler}(\cdot)$ denotes any standard sampling scheme (e.g. DDPM or DDIM), and $\lambda_t$ controls guidance strength. Note that we subtract the MMD-squared gradient in the guidance process, as we seek to \textit{minimize} the MMD.

\begin{remark}\label{remark LDM} For computational efficiency, we perform MMD guidance directly in the \emph{latent space} of \textit{latent diffusion models (LDMs)}. Given reference images $\{x_j^{(r)}\}_{j=1}^{N_r}$, we encode them once as $z_j^{(r)} = \mathcal{E}(x_j^{(r)})$ using the common VAE encoder of latent diffusion models (LDMs) \citep{rombach2022high}. This latent-space guidance offers two advantages: 1) \textit{computational efficiency} of operating in a  lower dimension of the standard latent spaces of LDMs compared to the original space, 2) \textit{statistical efficiency} of the compressed latent representation capturing semantic structure while filtering pixel-level noise, improving the signal-to-noise ratio for MMD estimation. Algorithm~\ref{alg:mmd_unconditional} describes the MMD-guided diffusion process in the  LDM's latent space.
\end{remark}

For the gradient computation with respect to sample $z_t^{(i)}$, only terms in the MMD-squared containing $z_t^{(i)}$ contribute non-zero gradients. Hence, assuming a differentiable kernel $k$, this approach yields the following where $\nabla_{z_t^{(i)}}$ denotes the gradient with respect to $z_t^{(i)}$:\vspace{-3mm}
\begin{align}\label{eq:mmd_gradient_batch}
\nabla_{z_t^{(i)}} \widehat{\text{MMD}}^2(\widehat{P}_t, \widehat{Q}) = &\underbrace{\frac{2}{B^2}\sum_{j=1}^B \nabla_{z_t^{(i)}} k(z_t^{(i)}, z_t^{(j)})}_{\text{intra-batch term}}\\
&- \underbrace{\frac{2}{BN_r}\sum_{j=1}^{N_r} \nabla_{z_t^{(i)}} k(z_t^{(i)}, z_j^{(r)})}_{\text{cross term with references}}\nonumber
\vspace{-3mm}
\end{align}
 \vspace{-2mm}
\iffalse
For example, considering the Gaussian kernel $k(u,v) = \exp(\frac{-\|u-v\|_2^2}{2\sigma^2})$, we will have have $\nabla_u k(u,v) = \frac{k(u,v)}{\sigma^2}(v-u)$, yielding:
\begin{align*}
\nabla_{z_t^{(i)}} \widehat{\text{MMD}}^2(\widehat{P}_t, &\widehat{Q})
= -\frac{2}{\sigma^2 B^2}\sum_{j=1}^B k\!\bigl(z_t^{(i)}, z_t^{(j)}\bigr)\!\bigl(z_t^{(i)} - z_t^{(j)}\bigr) \nonumber \\
&+ \frac{2}{\sigma^2 BN_r}\sum_{j=1}^{N_r} k\!\bigl(z_t^{(i)}, z_j^{(r)}\bigr)\!\bigl(z_t^{(i)} - z_j^{(r)}\bigr).
\end{align*}
\fi

\begin{algorithm}[t]
\caption{MMD-Guided Diffusion Sampling}
\label{alg:mmd_unconditional}
\KwIn{Reference data $\mathcal{R} = \{x_j^{(r)}\}_{j=1}^{N_r}$, batch size $B$, guidance schedule $\{\lambda_t\}_{t=1}^T$, denoiser $\epsilon_\theta$}
\KwOut{Generated samples $\{x^{(i)}\}_{i=1}^B$ matching target distribution}
\textit{Preprocessing:} $z_j^{(r)} \gets \mathcal{E}(x_j^{(r)})$ for all $j \in [N_r]$ \\
\textit{Initialization:} $z_T^{(i)} \sim \mathcal{N}(0, I)$ for all $i \in [B]$\;
\For{$t = T$ \KwTo $1$}{
    \For{$i = 1$ \KwTo $B$ \textit{in parallel}}{
        $\widehat{z}_{t-1}^{(i)} \gets \text{sampler}(z_t^{(i)}, t, \epsilon_\theta)$ \\
        $g^{(i)} \gets \nabla_{z_t^{(i)}} \widehat{\text{MMD}}^2(\widehat{P}_t, \widehat{Q})$ \tcp{Compute via \eqref{eq:mmd_gradient_batch}}
        $z_{t-1}^{(i)} \gets \widehat{z}_{t-1}^{(i)} - \lambda_t g^{(i)}$ \\
    }
}
\Return $\{x^{(i)} = \mathcal{D}(z_0^{(i)})\}_{i=1}^B$ \\
\end{algorithm}

Under the minimization update $z \leftarrow z - \lambda_t \nabla \widehat{\mathrm{MMD}}^2$, the intra-batch term creates \emph{repulsion} among generated samples (promoting diversity), while the cross term creates \emph{attraction} toward the reference samples (encouraging distribution matching). Note that we primarily need the cross term to provide a reliable gradient toward the target distribution, which we show in the following theorem.
\begin{theorem}%[Concentration of Cross Term in MMD Gradient]
\label{thm:gradient_concentration}
Consider sample space $\mathcal{Z} \subseteq \mathbb{R}^d$. Let $k: \mathcal{Z} \times \mathcal{Z} \to \mathbb{R}$ be a normalized kernel with $k(z,z) = 1$ for all $z \in \mathcal{Z}$. Suppose $k$ is differentiable and $L$-Lipschitz w.r.t. either input, i.e., $|k(z, w) - k(z', w)| \leq L\big\|z - z'\big\|_2$ for all $z, z', w \in \mathcal{Z}$. Let $Q$ be the target distribution on $\mathcal{Z}$ and let $\{z_j^{(r)}\}_{j=1}^{N_r} \stackrel{\text{iid}}{\sim} Q$ be reference samples. For every $z_0 \in \mathcal{Z}$, we define the population cross term and the empirical cross term as follows:
\begin{align*}
g^*_{\text{cross}}(z_0) &= -2\,\mathbb{E}_{z' \sim Q}[\nabla_{z_0} k(z_0, z')], \\ 
\widehat{g}_{\text{cross}}(z_0) &= -\frac{2}{N_r}\sum_{j=1}^{N_r} \nabla_{z_0} k(z_0, z_j^{(r)}).
\end{align*}
Then for every $\delta>0$, with probability at least $1-\delta$ over the draw of reference samples we have:
\begin{equation*}
%\label{eq: concentration}
\bigl\|\widehat{g}_{\text{cross}}(z_0) - g^*_{\text{cross}}(z_0)\bigr\|_2 \leq \frac{4L}{\sqrt{N_r}}\Bigl(1 + \sqrt{2\log\bigl({1}/{\delta}\bigr)}\Bigr).
\end{equation*}
%This bound holds for any $z_0$, regardless of how it was generated (i.e., even if $z_0$ depends on other generated samples through the MMD guidance process).
\end{theorem}

% \textcolor{blue}{
% \begin{remark}\label{remark:sample_size}
% Theorem~\ref{thm:gradient_concentration}'s bound in (\ref{eq: concentration}) holds with probability at least \(1-\delta\). Over \(T\) iterations, ensuring the bound at each iteration to holds with probability  \(1-\delta/T\) and applying a union bound implies, with probability at least \(1-\delta\), the bound holds simultaneously for all iterations $1\le i\le T$:
% \begin{equation}
% \bigl\|\widehat{g}_{\text{cross}}(z_i) - g^*_{\text{cross}}(z_i)\bigr\|_2
% \leq \frac{4L}{\sqrt{N_r}}\left(1 + \sqrt{2\log\bigl(T/\delta\bigr)}\right).
% \end{equation}
% \end{remark}}

\begin{corollary}
\label{cor:gaussian_rbf_concise}
For the Gaussian RBF kernel $k(x,y) = \exp\bigl(-\big\|x-y\big\|_2^2/2\sigma^2\bigr)$, the cross term satisfies the following with probability at least $1-\delta$:
\begin{equation*}
\bigl\|\widehat{g}_{\text{cross}}(z_0) - g^*_{\text{cross}}(z_0)\bigr\|_2 \leq \frac{3}{\sigma\sqrt{N_r}}\Bigl(1 + \sqrt{2\log\bigl({1}/{\delta}\bigr)}\Bigr).
\end{equation*}
\end{corollary}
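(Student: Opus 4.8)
The plan is to obtain this corollary as an immediate consequence of Theorem~\ref{thm:gradient_concentration}: the Gaussian RBF kernel is normalized and differentiable, so the only remaining work is to pin down its Lipschitz constant $L$ and then absorb the resulting numerical factor into the stated constant $3$.

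First I would verify the hypotheses of Theorem~\ref{thm:gradient_concentration}. Normalization is immediate since $k(x,x) = \exp(0) = 1$ for all $x$, and $k$ is smooth hence differentiable. For the Lipschitz constant, I would compute the gradient in one input,
\[
\nabla_x k(x,y) = -\frac{x-y}{\sigma^2}\exp\!\Bigl(-\frac{\|x-y\|_2^2}{2\sigma^2}\Bigr),
\]
so that $\|\nabla_x k(x,y)\|_2 = f(r)$ with $r := \|x-y\|_2$ and $f(r) = \tfrac{r}{\sigma^2}\exp(-r^2/2\sigma^2)$. A one-line calculus exercise — $f'(r) = \sigma^{-2}e^{-r^2/2\sigma^2}(1 - r^2/\sigma^2)$ — shows $f$ is maximized at $r^\star = \sigma$, giving $\sup_{x,y}\|\nabla_x k(x,y)\|_2 = f(\sigma) = \tfrac{1}{\sigma\sqrt{e}}$. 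By the mean value inequality, a uniform bound on the gradient norm is exactly a Lipschitz constant: $|k(z,w) - k(z',w)| \le \tfrac{1}{\sigma\sqrt{e}}\|z-z'\|_2$ for all $z,z',w$, and restricting to $\mathcal{Z} \subseteq \mathbb{R}^d$ only makes the supremum smaller. Hence we may take $L = \tfrac{1}{\sigma\sqrt{e}}$.

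Substituting this value of $L$ into the bound of Theorem~\ref{thm:gradient_concentration} yields, with probability at least $1-\delta$,
\[
\bigl\|\widehat{g}_{\text{cross}}(z_0) - g^*_{\text{cross}}(z_0)\bigr\|_2 \le \frac{4}{\sqrt{e}\,\sigma\sqrt{N_r}}\Bigl(1 + \sqrt{2\log(1/\delta)}\Bigr),
\]
and since $4/\sqrt{e} = 4e^{-1/2} \approx 2.426 \le 3$, the claimed inequality follows. There is essentially no obstacle here; the only point requiring a little care is confirming that the supremum of $f$ is attained in the interior of $[0,\infty)$ (which it is, since $f(0) = 0$ and $f(r) \to 0$ as $r \to \infty$), so that $L = 1/(\sigma\sqrt{e})$ is a genuine Lipschitz constant and not an over-estimate that would be too loose to deliver the constant $3$.
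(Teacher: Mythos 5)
Your proposal is correct and matches the paper's own proof essentially step for step: both compute $\nabla_x k(x,y)$, maximize its norm over the distance $\|x-y\|_2$ to obtain the Lipschitz constant $L = 1/(\sigma\sqrt{e})$, and then plug into Theorem~\ref{thm:gradient_concentration} using $4/\sqrt{e} < 3$. The only cosmetic difference is that the paper maximizes via the substitution $t = \|x-y\|_2/(\sigma\sqrt{2})$ while you differentiate $f(r)$ directly, which is the same calculus fact.
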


Theorem~\ref{thm:gradient_concentration} provides pointwise concentration guarantees for a fixed latent point $z_0\in\mathcal{Z}$. We extend this result to obtain uniform concentration over the norm ball in the latent space, guaranteeing that the MMD guidance gradient concentrates simultaneously for all latent vectors visited.

\begin{theorem}
\label{thm:uniform_concentration_ball}
Consider the setting of Theorem~\ref{thm:gradient_concentration} and let $\mathcal{Z}=\{z\in\mathbb{R}^d:\|z\|_2\le R\}$. Suppose $\|\nabla_z k(z,w)-\nabla_z k(z',w)\|_2\le L'\|z-z'\|_2$ for all $z,z',w\in\mathcal{Z}$. For every $\delta>0$, the following holds with probability at least $1-\delta$
\begin{align*}
&\sup_{z\in\mathcal{Z}}\;\bigl\|\widehat g_{\mathrm{cross}}(z)-g^*_{\mathrm{cross}}(z)\bigr\|_2 \\
\le\;
&\frac{4L'}{\sqrt{N_r}}
+ \frac{4L}{\sqrt{N_r}}\Bigl(1+\sqrt{\,2d\log\bigl(6R\sqrt{N_r}\bigr)+2\log\bigl({1}/{\delta}\bigr)\,}\Bigr).
\end{align*}
\end{theorem}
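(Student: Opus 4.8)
\textbf{Proof plan for Theorem~\ref{thm:uniform_concentration_ball}.}
The plan is to combine the pointwise concentration of Theorem~\ref{thm:gradient_concentration} with a covering-number argument over the ball $\mathcal{Z}=\{z:\|z\|_2\le R\}$. Define the error field $\Delta(z):=\widehat g_{\mathrm{cross}}(z)-g^*_{\mathrm{cross}}(z)$. The first step is to establish that $\Delta$ is Lipschitz in $z$: using the $L'$-Lipschitz assumption on $w\mapsto\nabla_z k(\cdot,w)$ that is, on the map $z\mapsto\nabla_z k(z,w)$ uniformly in $w$, both $\widehat g_{\mathrm{cross}}$ (a sample average of such gradients, scaled by $-2$) and $g^*_{\mathrm{cross}}$ (an expectation of the same) inherit the Lipschitz constant $2L'$, so $\|\Delta(z)-\Delta(z')\|_2\le 4L'\|z-z'\|_2$. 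Actually, note the factor: $\widehat g_{\mathrm{cross}}(z)-\widehat g_{\mathrm{cross}}(z')=-\tfrac{2}{N_r}\sum_j(\nabla_z k(z,z_j^{(r)})-\nabla_z k(z',z_j^{(r)}))$, bounded by $2L'\|z-z'\|_2$; likewise for $g^*_{\mathrm{cross}}$; the triangle inequality gives $4L'$, but one can also just use the $2L'$ bound on each and route it through a single net point, which is what I would do to match the stated constant.

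The second step is the covering argument. Fix $\varepsilon>0$ to be chosen, and let $\mathcal{N}_\varepsilon$ be an $\varepsilon$-net of $\mathcal{Z}$ in $\|\cdot\|_2$ with $|\mathcal{N}_\varepsilon|\le (1+2R/\varepsilon)^d\le(3R/\varepsilon)^d$ (the standard volumetric bound for a Euclidean ball, valid for $\varepsilon\le R$). Apply Theorem~\ref{thm:gradient_concentration} at each net point with failure probability $\delta/|\mathcal{N}_\varepsilon|$ and union bound: with probability at least $1-\delta$, simultaneously for all $z_0\in\mathcal{N}_\varepsilon$,
\[
\|\Delta(z_0)\|_2\le\frac{4L}{\sqrt{N_r}}\Bigl(1+\sqrt{2\log(|\mathcal{N}_\varepsilon|/\delta)}\Bigr)
\le\frac{4L}{\sqrt{N_r}}\Bigl(1+\sqrt{2d\log(3R/\varepsilon)+2\log(1/\delta)}\Bigr).
\]
For an arbitrary $z\in\mathcal{Z}$, pick the nearest net point $z_0$ with $\|z-z_0\|_2\le\varepsilon$ and bound $\|\Delta(z)\|_2\le\|\Delta(z_0)\|_2+4L'\varepsilon$ via the Lipschitz estimate (or $2L'\varepsilon$ twice). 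Taking the supremum over $z$ and then choosing $\varepsilon=1/\sqrt{N_r}$ makes the discretization term $4L'\varepsilon=4L'/\sqrt{N_r}$ and turns $\log(3R/\varepsilon)$ into $\log(3R\sqrt{N_r})$; absorbing the residual constant into the logarithm (so that $\log(3R\sqrt{N_r})\le\log(6R\sqrt{N_r})$, which also safely handles the $\varepsilon\le R$ requirement up to trivial regimes) yields exactly the claimed bound
\[
\sup_{z\in\mathcal{Z}}\|\Delta(z)\|_2\le\frac{4L'}{\sqrt{N_r}}+\frac{4L}{\sqrt{N_r}}\Bigl(1+\sqrt{2d\log(6R\sqrt{N_r})+2\log(1/\delta)}\Bigr).
\]

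The main obstacle, such as it is, is bookkeeping rather than conceptual: getting the net cardinality bound and the choice of $\varepsilon$ to collapse cleanly into the stated constants ($4L'$ out front, the factor $6R$ inside the log, and keeping the $2d\log(\cdot)+2\log(1/\delta)$ form under a single square root). One subtlety worth stating carefully is that Theorem~\ref{thm:gradient_concentration} must hold \emph{jointly} over the net from a single draw of the reference sample — this is exactly what the union bound delivers, since the net is fixed before the sample is drawn — so the Lipschitz-extension step then applies deterministically on that good event. A secondary point is ensuring the extra slack from replacing $(1+2R/\varepsilon)$ by $(3R/\varepsilon)$ and from the $\varepsilon\le R$ edge case is harmless; both are absorbed by the $6R$ (versus $3R$) inside the logarithm. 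No concentration machinery beyond Theorem~\ref{thm:gradient_concentration} itself is needed.
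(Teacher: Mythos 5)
Your proposal is correct and matches the paper's argument essentially step for step: an $\varepsilon$-net of the ball, a union bound applying Theorem~\ref{thm:gradient_concentration} with failure probability $\delta/|\mathcal{N}_\varepsilon|$ at each net point, a Lipschitz extension costing $2L'\varepsilon$ each for $\widehat g_{\mathrm{cross}}$ and $g^*_{\mathrm{cross}}$ (hence $4L'\varepsilon$), and the choice $\varepsilon=1/\sqrt{N_r}$. The only cosmetic difference is that you use the tighter covering bound $(3R/\varepsilon)^d$ and absorb the slack into $6R$, whereas the paper invokes $(6R/\varepsilon)^d$ directly; both yield the stated inequality.
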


\vspace{-2.5mm}
\section{Prompt-Aware MMD Guidance in Conditional Diffusion Models}\vspace{-2.5mm}

%\subsection{Joint Distribution Matching in Text-Image Space}

Text-conditioned diffusion models can also be adapted to the distribution of a set of reference (prompt,data) pairs. Here, we extend MMD guidance to this setting by defining a joint divergence over the product space $\mathcal{P} \times \mathcal{Z}$ of prompts and latents. Our approach is to consider a product kernel that decomposes similarity into semantic and visual components:
\begin{equation}
k_{\otimes}\bigl([p,z], [p',z']\bigr) = k_p\bigl(p, p'\bigr) \cdot k_z\bigl(z, z'\bigr),
\label{eq:product_kernel}
\end{equation}
where $k_p: \mathcal{P} \times \mathcal{P} \to \mathbb{R}$ measures semantic similarity between prompt embeddings and $k_z: \mathcal{Z} \times \mathcal{Z} \to \mathbb{R}$ measures visual similarity between latents. As shown in \citep{bamberger2022johnson,wu2025fusing}, this product kernel corresponds to the tensor product feature map $\varphi_{\otimes}(p,z) = \varphi_p(p) \otimes \varphi_z(z)$ in the product RKHS $\mathcal{H}_p \otimes \mathcal{H}_z$. The induced MMD in this space captures distributional differences in both modalities simultaneously.

Given generated pairs $\{(p_i, z_t^{(i)})\}_{i=1}^B$ with distribution $\widehat{P}_t$ and reference pairs $\{(p_j^{(r)}, z_j^{(r)})\}_{j=1}^{N_r}$with distribution $\widehat{Q}$
, \iffalse, the empirical squared MMD in the product space is:
\begin{align}
\widehat{\text{MMD}}^2_{\otimes}(\widehat{P}_t,\widehat{Q}) &= \frac{1}{B^2}\sum_{i=1}^B\sum_{i'=1}^B k_p(p_i, p_{i'}) k_z(z_t^{(i)}, z_t^{(i')}) + \frac{1}{N_r^2}\sum_{j=1}^{N_r}\sum_{j'=1}^{N_r} k_p(p_j^{(r)}, p_{j'}^{(r)}) k_z(z_j^{(r)}, z_{j'}^{(r)}) \nonumber\\
&\quad - \frac{2}{BN_r}\sum_{i=1}^B\sum_{j=1}^{N_r} k_p(p_i, p_j^{(r)}) k_z(z_t^{(i)}, z_j^{(r)}).
\end{align}
\fi
the gradient with respect to $z_t^{(i)}$ factors through the product structure:
\begin{align}
\nabla_{z_t^{(i)}} &\widehat{\text{MMD}}^2_{\otimes}(\widehat{P}_t,\widehat{Q})  = {\frac{2}{B^2}\sum_{j=1}^B k_p(p_i, p_j) \nabla_{z_t^{(i)}} k_z(z_t^{(i)}, z_t^{(j)})} \nonumber\\ &- {\frac{2}{BN_r}\sum_{j=1}^{N_r} k_p(p_i, p_j^{(r)}) \nabla_{z_t^{(i)}} k_z(z_t^{(i)}, z_j^{(r)})}.\vspace{-1mm}
\label{eq:product_gradient}
\end{align}
The prompt kernel $k_p(p_i, p_j^{(r)})$ acts as an attention weight: reference samples with semantically similar prompts contribute more strongly to the guidance signal. We present the steps of the resulting prompt-aware MMD guidance for prompt-conditioned diffusion models in Algorithm~\ref{alg:mmd_conditional}.

\iffalse
The product kernel formulation naturally handles diverse scenarios:
\begin{itemize}
\item \textit{Style transfer}: Use identical prompts for all generated samples with diverse reference styles.
\item \textit{Prompt variation}: Generate diverse prompts while maintaining a consistent reference style.
\item \textit{Joint adaptation}: Match both the prompt distribution and style distribution of references.
\end{itemize}

The computational overhead per timestep is $O\!\left(B(B+N_r)\,d_z\right)$ for the gradient computation, plus $O(BN_r d_p)$ for the one-time prompt kernel matrix computation.

\fi

\vspace{-2mm}
\section{Numerical Results}\vspace{-2mm}

% We evaluate MMD Guidance as a training-free, inference-time controller in three modes: \textbf{(A)} prompt-aware adaptation for text-to-image generation, \textbf{(B)} federated personalization with only a handful of client examples, and \textbf{(C)} general alignment to a user specific target distribution. Unless stated otherwise, the results are reported as mean~$\pm$~standard deviation over $5$ random seeds. We measure distributional discrepancy (FID), PRDC (precision / recall / density / coverage) and overfitting proxies (FLD and authenticity). Baselines include the unguided sampler, classifier guidance, classifier-free guidance, local (client side) training, and fine-tuning.

We evaluated MMD Guidance as a training-free adaptation method in two scenarios: (1) prompt-free (unconditional) distribution alignment, (2) Prompt-aware adaptation for text-conditioned latent diffusion models. 
In our evaluation, we consider the comparison against these baselines: (1) unguided diffusion sampling (No-Guidance), (2) classifier-free guidance (CFG) (3) classifier guidance (CG), where we utilize a binary classifier trained to distinguish the user's reference dataset from the samples of the original diffusion model. We report training-based baselines in Appendix~\ref{sec:numerical-appendix}.

\textbf{Models and Settings.} We conducted the experiments on the prompt-free latent diffusion models (LDM)~\citep{rombach2022high}, and Stable Diffusion v1.4 \citep{rombach2022high}. For prompt-aware experiments, we used Stable Diffusion XL \citep{podell2023sdxl}, and PixArt \citep{chen2023pixart}. In all the cases, the guidance is applied in the latent space as we discussed in Remark~\ref{remark LDM}.

\textbf{Evaluation.}
We evaluated the generated samples based on fidelity and distributional coverage.
For fidelity, we measure Fréchet distance (FD) \citep{heusel2017gans} and kernel distance (KD) \citep{binkowski2018kid}.
For distributional coverage, we report Coverage/Density \citep{naeem2020reliable}, and RRKE \citep{jalali2023information}.
All metrics are reported as mean over 5 random seeds, and we used DINOv2 \citep{oquab2023dinov2} as the image embedding, following the study by \cite{stein2023exposing}.

\subsection{MMD Guidance for Unconditional Diffusion Models}
% preamble
% \usetikzlibrary{calc}

\def\headerFont{\normalsize}     % ← first line size

% \begin{figure}[!t]
% \centering
% % push the float down a bit so labels aren't clipped
% \resizebox{0.75\textwidth}{!}{%
%     \includegraphics[width=\linewidth]{Images/tikz/gmm_all.pdf}
% }
% \caption{MMD guidance vs. baselines on GMM data, in guiding to match a user's distribution with only 4 Gaussian components out of the 8 Gaussian components covered by the unguided model.}
% \label{fig:gmm_baseline}
% \end{figure}

\textbf{Synthetic datasets.}
We tested the MMD guidance and other baselines on synthetic data drawn from a 8-modal Gaussian Mixture Model (GMM) with eight components. Figure~\ref{fig:gmm_baseline} report metrics on a simulated user with 200 data uniformly drawn from the four orange-colored components. %We first train CG and CFG under the above user setup. For CG, we train a binary classifier where the positive class (label 1) consists of the user’s data, and the negative class (label 0) is an equally sized set sampled from the No-Guidance DM to ensure a fair, balanced classification task. For CFG, we follow the same data construction principle, but instead of training a classifier, we use the constructed labeled dataset to train the conditional diffusion model, providing the user labels as conditioning inputs.
As reported in Figure~\ref{fig:gmm_baseline}, the MMD Guidance method led to the best FD and KD scores. Additional experimental results on several other synthetic GMMs are presented in Appendix~\ref{sec:numerical-appendix}.
% User-Trained-DM, and CFG baselines struggle when working with a limited number of data. CG can guide samples towards the components; however, when there are overlaps between the data from label0 and label1, the classifier cannot reliably separate user-specific from non-user modes, and the resulting guidance signal becomes noisy.

% 
% Moreover, these conditional approaches do not explicitly enforce mixture proportions. The generated samples may land near the right components but still miss the target frequencies. These effects worsen as the number of components increases. In contrast, MMD Guidance minimizes kernel distance with the reference set, which implicitly captures data location and frequencies without overfitting. 

\setlength{\textfloatsep}{5pt}
\setlength{\floatsep}{5pt}
\setlength{\intextsep}{5pt}
% \captionsetup{skip=3pt}

\begin{figure*}[!t]
\centering
% push the float down a bit so labels aren't clipped
\resizebox{0.8\textwidth}{!}{%
\begin{tikzpicture}
\def\imgwidth{12.4cm}
\def\imgwidthbig{12.0cm}

% === images ===
\node[inner sep=0, outer sep=0] (rowTop) at (9.24, -1.0)
  {\includegraphics[width=\imgwidth]{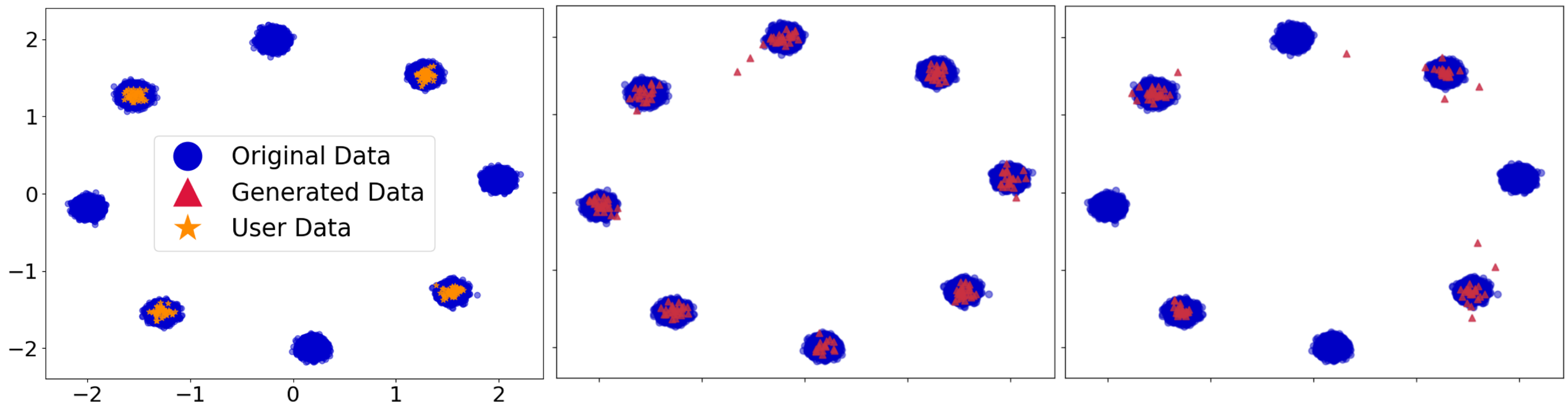}};
\node[inner sep=0, outer sep=0] (rowBot) at (9.45, -5.3)
  {\includegraphics[width=\imgwidthbig]{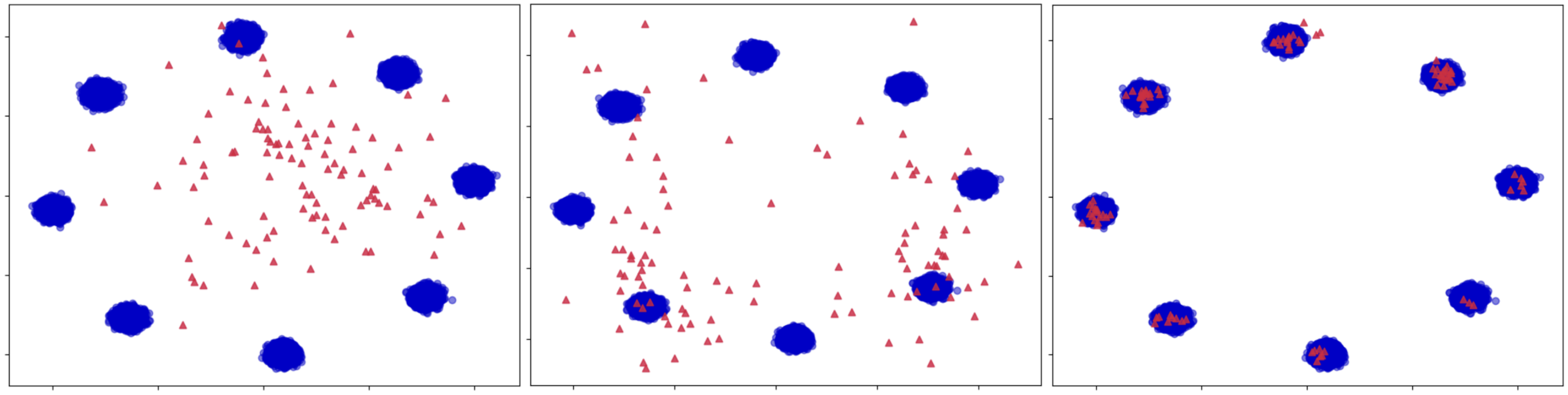}};

% column anchor fractions (left/center/right)
\def\colA{0.17}\def\colB{0.50}\def\colC{0.83}

% ===== TOP row: Density / Coverage / FID (3 lines) =====
\foreach \x/\dens/\cov/\fid in {
  \colA/{\headerFont\textbf{User Data}}/{0.08\!\pm\!0.04}/{0.06\!\pm\!0.03},
  \colB/{\headerFont\textbf{No-Guidance-DM}}/{0.22\!\pm\!0.04}/{0.44\!\pm\!0.03},
  \colC/{\headerFont\textbf{MMD}}/{0.14\!\pm\!0.04}/{0.07\!\pm\!0.02}
}{
  \node[
    font=\scriptsize, align=center,
    anchor=south,
    fill=white, rounded corners=0.6pt, inner sep=0.9pt
  ] at ($ (rowTop.north west)!\x!(rowTop.north east) + (0,0.08) $)
  { \dens\\[2pt]
   \textbf{FD:} $\cov$\\
   \textbf{\boldmath KD(\tiny{$\times 10$)}:} $\fid$};
}

% ===== BOTTOM row: Density / Coverage / FID (3 lines) =====
\foreach \x/\dens/\cov/\fid in {
  \colA/{\normalsize\textbf{User-Trained-DM}}/{2.25\!\pm\!0.15}/{1.21\!\pm\!0.03},
  \colB/{\normalsize\textbf{CFG}}/{2.21\!\pm\!0.11}/{1.15\!\pm\!0.01},
  \colC/{\normalsize\textbf{CG}}/{0.17\!\pm\!0.04}/{0.44\!\pm\!0.02}
}{
  \node[
    font=\scriptsize, align=center,
    anchor=south,
    fill=white, rounded corners=0.6pt, inner sep=0.9pt
  ] at ($ (rowBot.north west)!\x!(rowBot.north east) + (0,0.08) $)
  { \dens\\[2pt]
   \textbf{FD:} $\cov$\\[-2pt]
   \textbf{\boldmath KD(\tiny{$\times 10)$}:} $\fid$};
}
  
\end{tikzpicture}
}
\caption{Comparison of MMD guidance with baselines on 100D Gaussian mixtures, when guiding to a user with 4 Gaussian modes.}
\label{fig:gmm_baseline}
\end{figure*}

\textbf{Changing Mode Proportions in a Mixture of Gaussians.}
To examine whether MMD guidance can correct a mismatch in mixture weights between the training distribution and a target reference distribution, we use an 8-component GMM with uniformly weighted training samples. When sampling from the model without any guidance, they remain close to uniform and therefore do not match an imbalanced target distribution. To define the reference distribution, we randomly select two mixture components and sample their mixture weights from a Dirichlet distribution with parameter \((1, 10)\), yielding a highly imbalanced mixture where one component dominates the other, and then apply MMD guidance using samples from this reference distribution. As shown in Figure~\ref{fig:gmm_weight}, MMD guidance not only steers samples toward the intended high-probability components but also closely reproduces the target mixture proportions of the reference distribution. Numerical evaluations are included in Appendix~\ref{sec:numerical-appendix}

\begin{figure*}[t]
\centering
% push the float down a bit so labels aren't clipped
\resizebox{0.83\textwidth}{!}{%
\begin{tikzpicture}
\def\imgwidth{12.4cm}
\def\imgwidthbig{12.0cm}

% === images ===
\node[inner sep=0, outer sep=0] (rowTop) at (9.24, -1.0)
  {\includegraphics[width=\imgwidth]{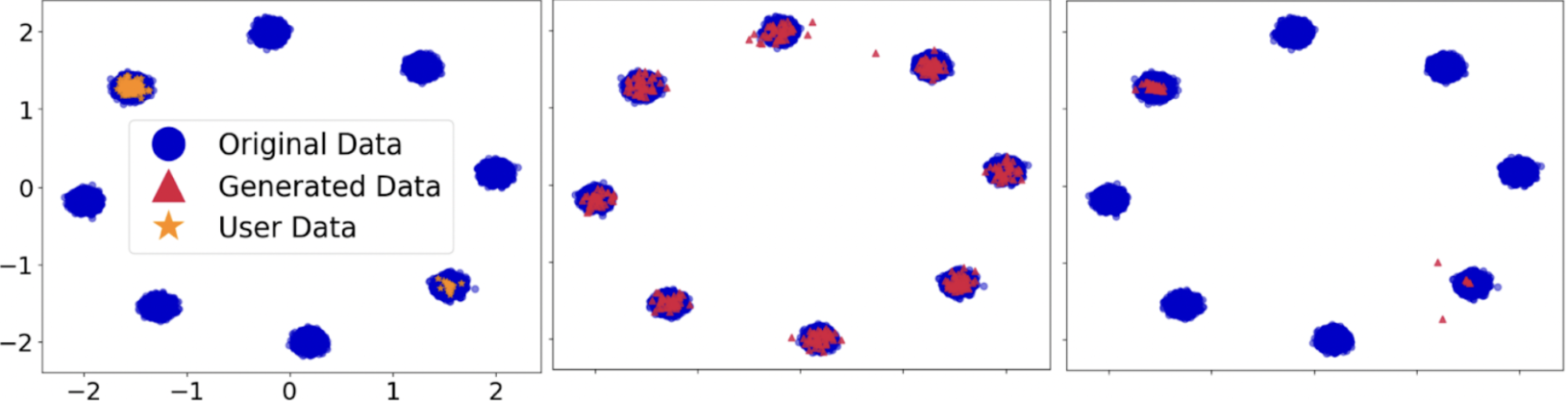}};

% column anchor fractions (left/center/right)
\def\colA{0.17}\def\colB{0.50}\def\colC{0.83}

% ===== TOP row: Density / Coverage / FID (3 lines) =====
\foreach \x/\dens/\cov/\fid in {
  \colA/{\headerFont\textbf{User Data}}/{0.02\!\pm\!0.00}/{0.03\!\pm\!0.00},
  \colB/{\headerFont\textbf{No-Guidance-DM}}/{4.22\!\pm\!0.01}/{2.96\!\pm\!0.01},
  \colC/{\headerFont\textbf{MMD}}/{0.13\!\pm\!0.00}/{0.24\!\pm\!0.01}
}{
  \node[
    font=\scriptsize, align=center,
    anchor=south,
    fill=white, rounded corners=0.6pt, inner sep=0.9pt
  ] at ($ (rowTop.north west)!\x!(rowTop.north east) + (0,0.08) $)
  { \dens\\[2pt]
   \textbf{FD:} $\cov$\\
   \textbf{\boldmath KD(\tiny{$\times 10$)}:} $\fid$};
}
\end{tikzpicture}
}
\caption{Effect of mode proportions in MMD guidance.}
\label{fig:gmm_weight}
\end{figure*}

\textbf{Real-image datasets.}
We also tested the MMD Guidance on real image dataset benchmarks, including FFHQ ~\citep{karras2019style}, and CelebA-HQ~\citep{karras2017progressive}, using pre-trained LDMs of \citep{rombach2022high}, and a mixture-type image dataset generated with Stable Diffusion v1.4 using the prompts \emph{car} and \emph{bike} under four different style variations (black-and-white, winter scenes, sketch, and cartoon).
We conducted experiments with two simulated users, each with 500 reference samples of the FFHQ dataset of the following specific styles: (User 1) people wearing sunglasses and (User 2) people wearing reading glasses. Table~\ref{ffhq-metric-dinov2} reports the measured scores, where MMD Guidance attained the highest fidelity and distributional coverage scores. The randomly generated data for the qualitative evaluation are shown in  Figure~\ref{ffhq_all}, which support the quantitative comparison. The additional results on FFHQ and other datasets as well as ablation results are in Appendix~\ref{sec:numerical-appendix}.

\begin{figure*}
    \centering
    \includegraphics[width=0.83\linewidth]{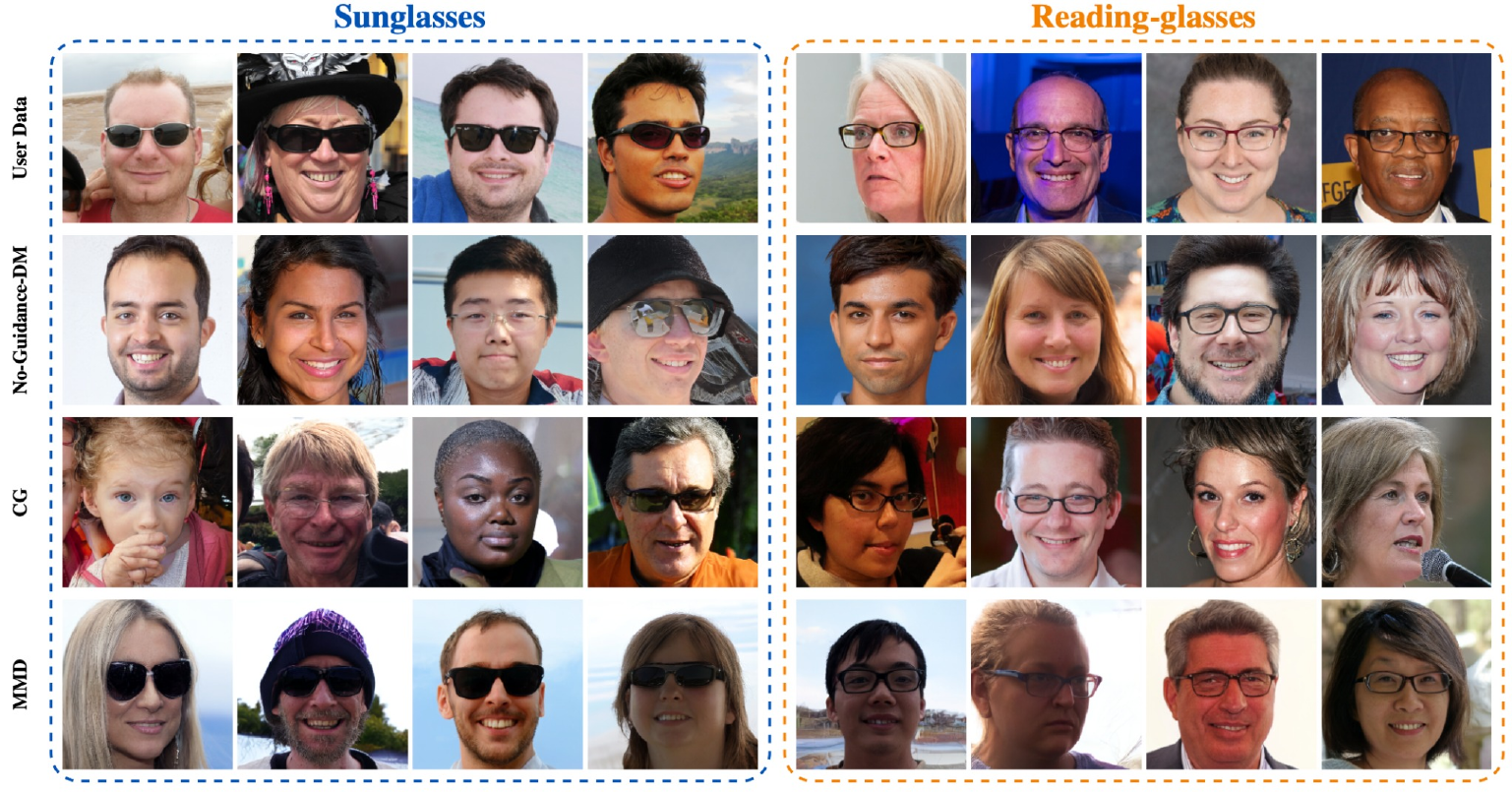}
    \caption{User's samples and generated data by unguided/guided LDMs on the FFHQ dataset.}
    \label{ffhq_all}
\end{figure*}

\begin{figure*}
    \centering
    \includegraphics[width=0.9\linewidth]{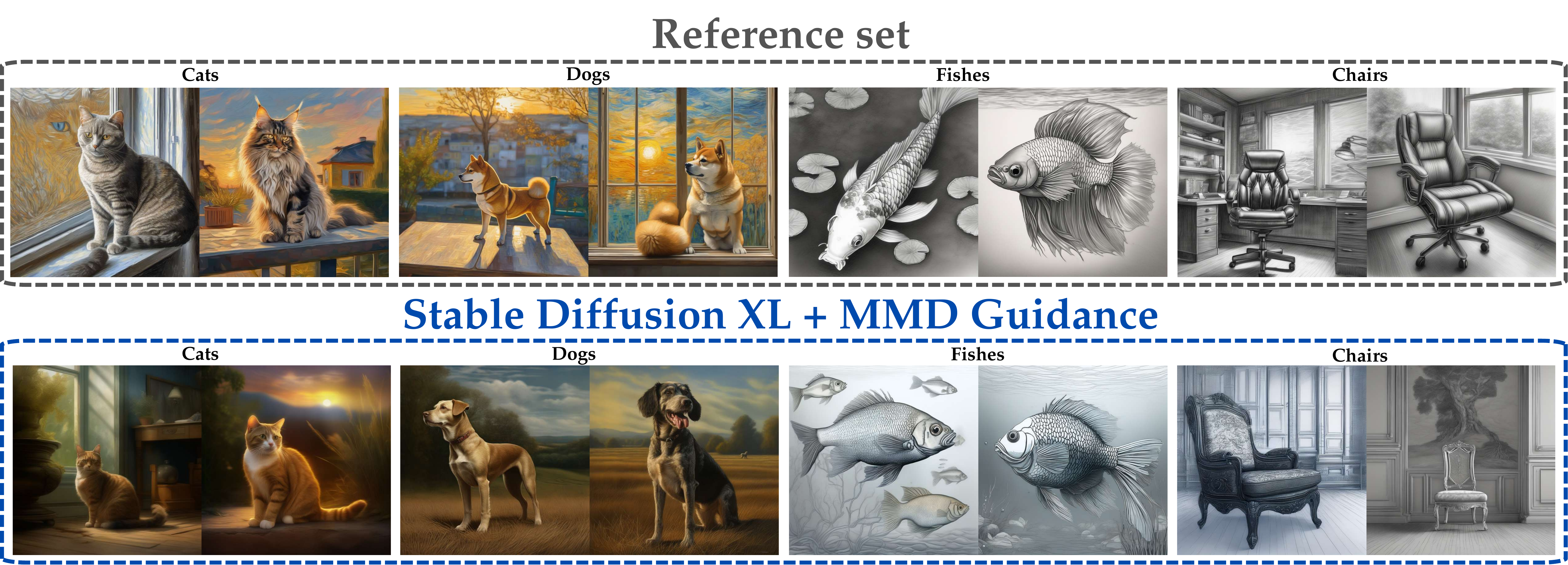}
    \caption{Qualitative comparison of reference set and MMD-guided image generation with SDXL.}\vspace{-2mm}
    \label{fig:sdxl_10k_user}
\end{figure*}

\begin{table}[!t]
    \caption{Evaluation scores for samples generated on FFHQ using MMD guidance vs. baselines.}
    \label{ffhq-metric-dinov2}
    \centering
    \resizebox{\linewidth}{!}{
    \begin{tabular}{ccccccc}
    \toprule
    User & Guidance & FD $\downarrow$ & KD $\downarrow$   & { Density } $\uparrow$ & { Coverage } $\uparrow$ \\ 
    \midrule
    \multirow{6}{*}{Sunglasses} & No-Guidance-DM & 1220.5  & 8.21   & 0.742  & 0.309  \\
    & User-Trained-DM & 1004.7  & 8.50   & 0.434  & 0.579  \\
    & Fine-tuning & 747.7  & 4.15   & 0.716  & 0.712  \\
    & CG & 1195.9  & 8.07   & 0.734  & 0.273  \\
    & Domain Guidance & 710.6  & 4.06   & 0.874  & 0.754  \\
    & MMD (Ours) & 693.0  & 3.25  & 1.131  & 0.791  \\
    \midrule
    \multirow{6}{*}{Reading-glasses} & No-Guidance-DM & 702.9  & 2.22   & 0.688  & 0.809  \\
    & User-Trained-DM & 1105.1  & 7.54   & 0.592  & 0.575  \\
    & Fine-tuning & 732.9  & 2.99   & 0.823  & 0.734  \\
    & CG & 678.5  &  2.16   & 0.701  & 0.779  \\
    & Domain Guidance & 667.6  & 2.08   & 0.757  & 0.817  \\
    & MMD (Ours) & 574.3  & 1.39   & 0.871 & 0.846  \\
    \bottomrule
    \end{tabular}
    }
\end{table}

% \begin{figure}[!t]
% \centering
% \resizebox{0.7\textwidth}{!}{%
% \begin{tikzpicture}

% \def\blocksep{3}
% \def\blocksepb{8.3}
% \def\blocksepc{13.6}
% \def\imgwidth{14cm}
% \def\startp{0.05 * \textwidth}

% \node[anchor=west, rotate=90, text=black] at (\startp + 10, -2) {\textbf{Samples}};
% \node at (8.5, -1.2) {\includegraphics[width=\imgwidth]{Images/FFHQ/user2_train.pdf}};
% \node[anchor=west, rotate=90, text=black] at (\startp + 10, -4.7) {\textbf{No Guidance}};
% \node at (8.5, -3.6) {\includegraphics[width=\imgwidth]{Images/FFHQ/user2_noguide.pdf}};
% % \node[anchor=west, rotate=90, text=black] at (\startp + 10, -6.6) {\textbf{Local}};
% % \node at (8.5, -6) {\includegraphics[width=\imgwidth]{Images/FFHQ/user2_local.pdf}};
% % \node[anchor=west, rotate=90, text=black] at (\startp + 10, -9.4) {\textbf{Fine-tuning}};
% % \node at (8.5, -8.4) {\includegraphics[width=\imgwidth]{Images/FFHQ/user2_ft.pdf}};
% \node[anchor=west, rotate=90, text=black] at (\startp + 10, -6.4) {\textbf{CG}};
% \node at (8.5, -6) {\includegraphics[width=\imgwidth]{Images/FFHQ/user2_cg.pdf}};
% \node[anchor=west, rotate=90, text=black] at (\startp + 10, -9) {\textbf{CFG}};
% \node at (8.5, -8.4) {\includegraphics[width=\imgwidth]{Images/FFHQ/user2_cfg.pdf}};
% \node[anchor=west, rotate=90, text=black] at (\startp + 10, -11.4) {\textbf{MMD}};
% \node at (8.5, -10.8) {\includegraphics[width=\imgwidth]{Images/FFHQ/user2_mmd.pdf}};

% \end{tikzpicture}
% }
% \caption{Comparison of MMD Guidance with baselines on FFHQ dataset.}
% \label{ffhq_all}
% \end{figure}

%\vspace{-1mm}
\subsection{Prompt-Aware MMD Guidance}%\vspace{-1mm}

\begin{table}[!t]
\caption{Evaluation scores for the dataset in Figure~\ref{fig:figure_one}. Comparison of SDXL and PixArt with No-Guidance-DM vs.\ MMD Guidance.}
\label{tab:sdxl_intro_table}
\centering
\resizebox{\linewidth}{!}{
\begin{tabular}{lccccc}
\toprule
Model & Guidance & FD $\downarrow$ & KD $\downarrow$  & Density ($\times 10 ^ 2$)  $\uparrow$ & Coverage  ($\times 10 ^ 2$) $\uparrow$ \\
\midrule
\multirow{2}{*}{SDXL}
& No-Guidance-DM & 1953.8  & 3.57  & 5.63 & 11.34\\
& MMD (Ours) & 1674.4      & 2.49  & 18.01 & 34.20 \\
\midrule
\multirow{2}{*}{PixArt}
& No-Guidance-DM & 1358.4 & 0.80  & 55.32 & 64.29\\
& MMD (Ours) &  1060.6 & 0.66  & 67.80 & 75.34\\
\bottomrule
\end{tabular}
}
\end{table}

\begin{table}[!t]
\caption{Runtime analysis of MMD guidance vs. no guidance diffusion sample generation.}
\label{tab:runtime_table}
\centering
\resizebox{\linewidth}{!}{
\begin{tabular}{lccccccc}
\toprule
 &  & \multicolumn{6}{c}{\#Samples generated} \\
\cmidrule(lr){3-8}
Model & Guidance & 50 & 100  & 200  & 300  & 400  & 500 \\
\midrule
% \multirow{2}{*}{LDM}
% & No-Guidance-DM & 512 & 1027 & 1540 & 2053 & 2552 & 3052 & 3551 & 4050 & 4559 & 5070 \\
% & MMD (Ours) & 1007 & 2021 & 3040 & 4108 & 5183 & 6249 & 7269 & 8287 & 9303 & 10332 \\
% \midrule
\multirow{2}{*}{SDXL}
& No-Guidance-DM & 328 & 634  & 1258  & 1914  & 2516  & 3142 \\
& MMD (Ours) & 356 & 720  & 1378  & 2069  & 2749  & 3447 \\
\midrule
\multirow{2}{*}{PixArt}
& No-Guidance-DM & 317 & 632  & 1282  & 1852  & 2435  & 3063 \\
& MMD (Ours)  & 342 & 683  & 1348  & 2024  & 2694  & 3356\\
\bottomrule
\end{tabular}
}
\end{table}

\textbf{Prompt-aware MMD Guidance in LDMs.}
To further assess the effectiveness of MMD Guidance in conditional diffusion models, we constructed a dataset where categories were paired with distinct visual styles: cats with anime, dogs with Van Gogh paintings, cars with Pixar animation, and horses with cowboy movie styles. Prompts were generated using GPT-5, and the corresponding reference dataset was produced with SD-XL. We then evaluated MMD Guidance by sampling from the diffusion model without explicitly specifying the styles. As shown in Figure~\ref{fig:figure_one} and Figure~\ref{fig:pixart_mmd} in Appendix~\ref{sec:numerical-appendix}, the guided model successfully reproduced the visual characteristics of the target distribution, while the unguided LDM produced samples with mismatched styles. Numerical scores in Table~\ref{tab:sdxl_intro_table} confirm this observation: RRKE, FD, and KD decreased, suggesting improved distributional alignment, while Coverage increased, indicating improved diversity. To further evaluate MMD guidance on prompt-aware LDMs, we extended our experiments by generating eight categories of animals and objects, each rendered in four distinct styles (10,000 images in total). For each run, four categories were randomly selected, with one style assigned to each. Qualitative results are in Figure~\ref{fig:sdxl_10k_user}, where MMD-guided SDXL samples closely match the reference styles. More results are in Appendix~\ref{sec:numerical-appendix}.

\textbf{Scalability and time complexity of prompt-aware MMD guidance.}
To evaluate the time complexity of our proposed MMD guidance for LDMs, in Table \ref{tab:runtime_table}, we report the cumulative time for generating a group of $n$ samples with and without MMD guidance for large-scale prompt-aware LDMs using a 50-step diffusion process on an NVIDIA RTX 4090. The time values in the table support the efficiency of MMD guidance in the latent space of LDMs.

\textbf{Effect of the number of reference samples on MMD.}
we have measured the norm difference of the MMD gradient estimated with $N_r$ samples (empirical) and with 10000 samples (population estimate), and report the average $L_2$-norm errors over generating 1000 samples. As shown, the estimate becomes highly accurate, even with 100 data points, which follows the choice of MMD distance with the Gaussian and polynomial degree-3 kernels in our experiments.

\begin{table*}[!t]
\caption{Evaluation metrics as a function of the number of reference samples.}
\label{tab:ref_samples}
\centering
\begin{tabular}{lcccc}
\toprule
\# of reference samples & Guidance & FD $\downarrow$ & KD $\downarrow$ & RRKE $\downarrow$ \\
\midrule
$n = 0$   & No Guidance & 1953.75 & 3.57   & 1.93 \\
$n = 10$  & MMD         & 1805.92 & 2.7657 & 1.89 \\
$n = 50$  & MMD         & 1731.27 & 2.6882 & 1.83 \\
$n = 100$ & MMD         & 1691.42 & 2.5682 & 1.81 \\
$n = 150$ & MMD         & 1686.61 & 2.5301 & 1.80 \\
$n = 200$ & MMD         & 1674.45 & 2.4945 & 1.79 \\
\bottomrule
\end{tabular}
\end{table*}

\begin{table*}[!t]
\caption{
Results under different domain gaps and numbers of reference samples. KD scores are reported
multiplied by 100.
}
\centering
\small
\begin{tabular}{l c c c c c c c c c}
\toprule
\#Ref Sample &
r=2 FD &
r=2 KD&
r=2 RRKE&
r=20 FD&
r=20 KD&
r=20 RRKE&
Shift FD&
Shift KD&
Shift RRKE\\
\midrule
n=0   & 4.64 & 2.92 & 2.50 & 437.03 & 16.88 & 2.61 & 5.52 & 63.01 & 10.12 \\
n=10  & 1.37 & 1.15 & 0.66 & 123.22 & 4.42  & 1.50 & 0.60 & 17.99 & 1.96  \\
n=100 & 1.29 & 0.80 & 0.56 & 108.61 & 3.98  & 1.16 & 0.03 & 1.11  & 0.44  \\
\bottomrule
\end{tabular}

\label{tab:domain_gap}
\end{table*}

\textbf{Comparing MMD with Domain Guidance.} We compare the effectiveness of our method against the Domain Guidance baseline \citep{zhong2025domain} on the FFHQ dataset; here, guidance is applied toward two user-specific subsets, each containing 500 images: one subset consists of images of people wearing sunglasses, and the other wearing reading glasses.
Tables~\ref{ffhq-metric-dinov2} indicate, the MMD guidance outperforms the Domain Guidance baseline across the reported metrics.

\textbf{Effect of Domain Gap on MMD.} To evaluate the robustness of MMD guidance under different domain gaps, we conducted experiments on 8-centered GMMs. We consider both small ($r=2$) and large ($r=20$) separations between cluster centers, as well as a more challenging shifted-center setting where the target distribution lies outside the support of the training distribution. Results in Table~\ref{tab:domain_gap} show that larger domain gaps require stronger guidance and more reference samples to accurately align generated samples with the target distribution. Figure~\ref{fig:domain_gap} in Appendix~\ref{sec:numerical-appendix} visualizes this effect of domain gap.

\textbf{Effect of the intra-batch Diversity Term.} We study the impact of the diversity term through ablation experiments on synthetic GMMs and the FFHQ dataset. The results in Tables~\ref{tab:diversity_gmm},\ref{tab:diversity_ffhq} and Figure~\ref{fig:diversity-gmm} in Appendix~\ref{sec:numerical-appendix} show that removing the diversity term can lead to mode collapse, where the generated samples are concentrated in a compact region instead of covering all modes of the target distribution.

\vspace{-2.5mm}
\section{Conclusion and Limitations}\vspace{-1mm}
We presented MMD Guidance, a training-free adaptation method that enables diffusion models to generate samples matching arbitrary target distributions specified through small reference sets. By augmenting the reverse diffusion process with gradients of the Maximum Mean Discrepancy, our approach achieves distribution alignment without modifying model parameters, a useful capability when computational and data resources for retraining are limited. While MMD Guidance performs well with a limited number of reference samples, extremely sparse scenarios with highly limited data remain challenging due to variance in gradient estimation. The choice of kernel function may impact the performance, and adaptive kernel selection could further improve robustness. Future work could explore combining MMD with other divergences to leverage their complementary strengths, and extend the framework to video generation, where sequential consistency adds complexity.

%The broader impact of training-free adaptation extends beyond computational efficiency. By enabling rapid customization without retraining, MMD Guidance democratizes access to powerful generative models, allowing practitioners to adapt state-of-the-art systems to specialized domains without extensive computational resources. As diffusion models continue to scale, inference-time adaptation methods will become increasingly vital for practical deployment. Our work establishes that principled distribution matching can be achieved entirely through guided sampling, opening new avenues for controllable generation that preserve the capabilities of pre-trained models while adapting to user-specific requirements.

\clearpage
\clearpage

\section*{Acknowledgments}
This work is partially supported by a grant from the Research Grants Council of the Hong Kong Special Administrative Region, China, Project 14210725, and is partially supported by CUHK Direct Research Grant with CUHK Project No. 4055164. The work is also supported by a grant under 1+1+1 CUHK-CUHK(SZ)-GDSTC Joint Collaboration Fund. Finally, the authors thank the anonymous reviewers and metareviewer for their constructive feedback and suggestions.

\section*{Impact Statement}
This work proposes a training-free, inference-time guidance mechanism for diffusion models that enables distribution matching to a target distribution specified by a limited reference set, reducing the computational and memory costs of adapting pre-trained generative models to new domains or user-specific distributions without retraining or fine-tuning. The method may be useful in settings where only a small number of reference samples are available and rapid deployment is needed, such as customization for specialized visual styles or domain-specific data generation. Since the approach operates entirely at inference time, does not modify model parameters, and does not introduce additional training data, its outputs depend on the user-provided reference samples, and any biases or constraints in those samples may be reflected in the generated results. As with existing diffusion guidance methods, responsible use requires appropriate selection and handling of reference data in accordance with applicable legal and ethical standards.

\bibliography{iclr2026_conference}
\bibliographystyle{icml2026}

%%%%%%%%%%%%%%%%%%%%%%%%%%%%%%%%%%%%%%%%%%%%%%%%%%%%%%%%%%%%%%%%%%%%%%%%%%%%%%%
%%%%%%%%%%%%%%%%%%%%%%%%%%%%%%%%%%%%%%%%%%%%%%%%%%%%%%%%%%%%%%%%%%%%%%%%%%%%%%%
% APPENDIX
%%%%%%%%%%%%%%%%%%%%%%%%%%%%%%%%%%%%%%%%%%%%%%%%%%%%%%%%%%%%%%%%%%%%%%%%%%%%%%%
%%%%%%%%%%%%%%%%%%%%%%%%%%%%%%%%%%%%%%%%%%%%%%%%%%%%%%%%%%%%%%%%%%%%%%%%%%%%%%%
\clearpage
\appendix
\onecolumn
\section{Proofs}

\subsection{Proof of Theorem~\ref{thm:gradient_concentration}}

Define the centered random vectors:
\begin{equation}
Y_j = \nabla_{z_0} k(z_0, z_j^{(r)}) - \mathbb{E}_{z' \sim Q}[\nabla_{z_0} k(z_0, z')], \quad j = 1, \ldots, N_r.
\end{equation}

Since the reference samples $z_j^{(r)}$ are drawn i.i.d. from $Q$ independently of how $z_0$ was generated, the $Y_j$ are independent random vectors in $\mathbb{R}^d$ with $\mathbb{E}[Y_j] = 0$ (following its definition) and $\big\|Y_j\big\|_2 \leq 2L$ almost surely, since $\big\|\nabla_{z_0} k(z_0, \cdot)\big\|_2 \leq L$ by the Lipschitz property.

Applying the Hoeffding inequality for random Hilbert-Schmidt operators \citep{sutherland2018efficient}, we have the following to hold with probability at least $1-\delta$:
\begin{equation}
\left\|\frac{1}{N_r}\sum_{j=1}^{N_r} Y_j\right\|_2 \leq \frac{2L}{\sqrt{N_r}}\left(1 + \sqrt{2\log\bigl(\frac{1}{\delta}}\bigr)\right).
\end{equation}

Since $\widehat{g}_{\text{cross}}(z_0) - g^*_{\text{cross}}(z_0) = -\frac{2}{N_r}\sum_{j=1}^{N_r} Y_j$, the following will hold with probability at least $1-\delta$:
\begin{equation}
\big\|\widehat{g}_{\text{cross}}(z_0) - g^*_{\text{cross}}(z_0)\big\|_2 \leq \frac{4L}{\sqrt{N_r}}\left(1 + \sqrt{2\log\bigl({1}/{\delta}\bigr)}\right).
\end{equation}

\subsection{Proof of Corollary~\ref{cor:gaussian_rbf_concise}}

For the Gaussian RBF kernel, the gradient is $\nabla_x k(x,y) = -\frac{1}{\sigma^2}k(x,y)(x-y)$, yielding 
$$\big\|\nabla_x k(x,y)\big\|_2 = \frac{\big\|x-y\big\|_2}{\sigma^2} \exp\bigl(-\frac{\big\|x-y\big\|_2^2}{2\sigma^2}\bigr)$$
Setting $t = \big\|x-y\big\|_2/(\sigma\sqrt{2})$, this becomes $(\sqrt{2}/\sigma) t e^{-t^2}$. Since $\max_{t \geq 0} t e^{-t^2} = e^{-1/2}/\sqrt{2}$ (achieved at $t = 1/\sqrt{2}$), we have the Lipschitz constant $L = 1/(\sigma\sqrt{e})$. Knowing that $\frac{4}{\sqrt{e}}<3$, we can plug this Lipschitz constant into the assumption of Theorem~\ref{thm:gradient_concentration}, which yields the result.

\subsection{Proof of Theorem~\ref{thm:uniform_concentration_ball}}

The theorem follows from applying the following covering-number bound to the result of Theorem~\ref{thm:gradient_concentration}. Consider a positive $\varepsilon>0$, and let $\mathcal{C}_\varepsilon$ be an $\varepsilon$-covering net of $\mathcal{Z}$ in the $\ell_2$ metric. Every $z\in\mathcal{Z}$ has a representative $z'\in\mathcal{C}_\varepsilon$ with $\|z-z'\|_2\le\varepsilon$. For a Euclidean ball of radius $R$, a well-known covering number bound of the $\ell_2$-ball is $|\mathcal{C}_\varepsilon|\le(6R/\varepsilon)^d$.

For a fixed $z\in\mathcal{C}_\varepsilon$, Theorem \ref{thm:gradient_concentration} provides the following pointwise bound to hold with probability at least $1-\eta$:
\[
\|\widehat g_{\mathrm{cross}}(z)-g^*_{\mathrm{cross}}(z)\|_2
\le \frac{4L}{\sqrt{N_r}}\Bigl(1+\sqrt{2\log\tfrac{1}{\eta}}\Bigr)
\]
 Choosing $\eta=\delta/|\mathcal{C}_\varepsilon|$ and applying the union bound across all $z\in\mathcal{C}_\varepsilon$, we conclude that with probability at least $1-\delta$,
\[
\max_{z\in\mathcal{C}_\varepsilon}\|\widehat g_{\mathrm{cross}}(z)-g^*_{\mathrm{cross}}(z)\|_2
\le \frac{4L}{\sqrt{N_r}}\Bigl(1+\sqrt{2\log\tfrac{|\mathcal{C}_\varepsilon|}{\delta}}\Bigr).
\]

To extend this guarantee from the net to the entire ball, consider an arbitrary $z_0\in\mathcal{Z}$ and its nearest covering net point $z'\in\mathcal{C}_\varepsilon$ which by definition satisfies $\|z_0-z'\|_2\le\varepsilon$. The deviation can be decomposed as follows using the triangle inequality:
\begin{align*}
&\bigl\|\widehat g_{\mathrm{cross}}(z_0)-g^*_{\mathrm{cross}}(z_0)\bigr\|_2 \\
\le\; &\bigl\|\widehat g_{\mathrm{cross}}(z_0)-\widehat g_{\mathrm{cross}}(z')\bigr\|_2
+ \bigl\|\widehat g_{\mathrm{cross}}(z')-g^*_{\mathrm{cross}}(z')\bigr\|_2
+ \bigl\|g^*_{\mathrm{cross}}(z')-g^*_{\mathrm{cross}}(z_0)\bigr\|_2.
\end{align*}
The middle term is controlled by the union bound over the net. The first and last terms follows from the Lipschitz constant of $\nabla k$. Thus,
\begin{align*}
\|\widehat g_{\mathrm{cross}}(z_0)-\widehat g_{\mathrm{cross}}(z')\|_2 
=\: &\left\|\frac{2}{N_r}\sum_{j=1}^{N_r}\big(\nabla_{z_0} k(z_0,z_j^{(r)})-\nabla_{z'} k(z',z_j^{(r)})\big)\right\|_2 \\
\le\: &\frac{2}{N_r}\sum_{j=1}^{N_r}L'\|z_0-z'\|_2\\
\le\: & 2L'\varepsilon,
\end{align*}
and similarly $\|g^*_{\mathrm{cross}}(z')-g^*_{\mathrm{cross}}(z_0)\|_2\le 2L'\varepsilon$. Combining these inequalities, we obtain
\[
\bigl\|\widehat g_{\mathrm{cross}}(z_0)-g^*_{\mathrm{cross}}(z_0)\bigr\|_2
\le \frac{4L}{\sqrt{N_r}}\Bigl(1+\sqrt{2\log\tfrac{|\mathcal{C}_\varepsilon|}{\delta}}\Bigr) + 4L'\varepsilon.
\]

Substituting  $|\mathcal{C}_\varepsilon|\le (6R/\varepsilon)^d$ and choosing $\varepsilon=1/\sqrt{N_r}$ completes the bound. The proof is therefore complete.

\subsection{Proof of Corollary~\ref{cor:uniform_gaussian_rbf_ball}}\begin{corollary}\label{cor:uniform_gaussian_rbf_ball}
For the Gaussian RBF kernel $k(x,y)=\exp(-\|x-y\|_2^2/(2\sigma^2))$ on $\mathcal{Z}=\{z:\|z\|_2\le R\}$, with probability at least $1-\delta$,
\[
\sup_{z\in\mathcal{Z}}\|\widehat g_{\mathrm{cross}}(z)-g^*_{\mathrm{cross}}(z)\|_2
\le \frac{24}{\sigma\sqrt{N_r}}
\Bigl(1 + \sqrt{d\log\tfrac{R\sqrt{N_r}}{\sigma}+\log\tfrac{1}{\delta}}\Bigr)
\]
\end{corollary}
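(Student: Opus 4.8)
The plan is to specialize the uniform bound of Theorem~\ref{thm:uniform_concentration_ball} to the Gaussian RBF kernel, which only requires producing the two constants that theorem consumes: a Lipschitz constant $L$ for $k$ and a Lipschitz constant $L'$ for $z \mapsto \nabla_z k(z,w)$. For $L$, I would reuse the computation in the proof of Corollary~\ref{cor:gaussian_rbf_concise}: since $\nabla_z k(z,w) = -\sigma^{-2}k(z,w)(z-w)$, substituting $t = \|z-w\|_2/(\sigma\sqrt2)$ gives $\|\nabla_z k(z,w)\|_2 = (\sqrt2/\sigma)\,t e^{-t^2} \le 1/(\sigma\sqrt e)$, so $L = 1/(\sigma\sqrt e)$ works (this is also the Lipschitz constant of $k$ itself). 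For $L'$, I would differentiate once more: $\nabla_z^2 k(z,w) = \sigma^{-2}k(z,w)\bigl(\sigma^{-2}(z-w)(z-w)^\top - I\bigr)$, which has eigenvalue $\sigma^{-2}k(z,w)(s-1)$ along the direction $z-w$ and eigenvalue $-\sigma^{-2}k(z,w)$ on the orthogonal complement, where $s := \|z-w\|_2^2/\sigma^2$. Hence $\|\nabla_z^2 k(z,w)\|_{\mathrm{op}} = \sigma^{-2}e^{-s/2}\max(|s-1|,1)$; a short one-variable analysis (the right-hand side equals $\sigma^{-2}e^{-s/2}$ on $s\in[0,2]$ and $\sigma^{-2}e^{-s/2}(s-1) \le 2\sigma^{-2}e^{-3/2}$ on $s>2$) shows it is maximized at $s=0$, so $L' = 1/\sigma^2$.

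Substituting $L = 1/(\sigma\sqrt e)$ and $L' = 1/\sigma^2$ into Theorem~\ref{thm:uniform_concentration_ball} immediately yields
\[
\sup_{z\in\mathcal{Z}}\bigl\|\widehat g_{\mathrm{cross}}(z)-g^*_{\mathrm{cross}}(z)\bigr\|_2 \;\le\; \frac{4}{\sigma^2\sqrt{N_r}} + \frac{4}{\sigma\sqrt{e\,N_r}}\Bigl(1 + \sqrt{2d\log\bigl(6R\sqrt{N_r}\bigr)+2\log(1/\delta)}\Bigr),
\]
and the rest is bookkeeping: factor out $1/(\sigma\sqrt{N_r})$, use $\sqrt{a+b}\le\sqrt a+\sqrt b$ to separate the $d\log$ and $\log(1/\delta)$ contributions, pull the numerical constant out of the logarithm, bound $1/\sqrt e < 1$, and collect everything into a single absolute constant, which one checks can be taken to be $24$.

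The one genuinely delicate point — and what I expect to be the main obstacle — is the first summand $4/(\sigma^2\sqrt{N_r})$ together with the constant $6$ inside the logarithm: the former scales as $\sigma^{-2}$ rather than the $\sigma^{-1}$ of the target bound, and turning $\log(6R\sqrt{N_r})$ into $\log(R\sqrt{N_r}/\sigma)$ is not an unconditional inequality. Both are absorbed into the stated clean form only in the relevant regime (roughly, $\sigma$ not too large and $R\sqrt{N_r}$ not much smaller than $\sigma$), so I would either make that mild condition on $(\sigma,R,N_r)$ explicit or, alternatively, retain a slightly larger absolute constant; everything upstream is a direct invocation of Theorem~\ref{thm:uniform_concentration_ball} plus the short Hessian estimate above.
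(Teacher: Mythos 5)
Your route is the same as the paper's: compute the gradient bound $L=1/(\sigma\sqrt{e})$ and a Hessian-based constant $L'$, then substitute into Theorem~\ref{thm:uniform_concentration_ball}; in fact your Hessian analysis is slightly more careful than the paper's, which claims the operator norm is maximized at $\|x-y\|_2^2=\sigma^2$ and uses $L'=2/(\sigma^2\sqrt{e})$, whereas the true maximum is at $x=y$ with value $1/\sigma^2$ (still below the paper's constant, so both are valid). The ``delicate point'' you flag is real but is not resolved by the paper either: its proof simply asserts that substituting $L,L'$ into the theorem's final inequality ``gives the bound in the corollary,'' silently passing over the $\sigma^{-2}$ first term and the replacement of $\log(6R\sqrt{N_r})$ by $\log(R\sqrt{N_r}/\sigma)$, which, as you note, requires either a mild condition on $(\sigma,R,N_r)$ or redoing the covering argument with radius $\varepsilon=\sigma/\sqrt{N_r}$; so your write-up is, if anything, more honest than the paper's at that step.
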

\begin{proof}

For Gaussian RBF, $\|\nabla_x k(x,y)\|_2 = \tfrac{\|x-y\|_2}{\sigma^2}k(x,y)\le 1/(\sigma\sqrt{e})$, giving $L=1/(\sigma\sqrt{e})$. The Hessian is
\[
\nabla_x^2 k(x,y) = \Bigl(\tfrac{(x-y)(x-y)^\top}{\sigma^4}-\tfrac{I}{\sigma^2}\Bigr)k(x,y),
\]
whose operator norm is maximized when $\|x-y\|_2^2= \sigma^2$, leading to $\|\nabla_x^2 k(x,y)\|_{\mathrm{op}}\le 2/(\sigma^2\sqrt{e})$. Thus $L'=2/(\sigma^2\sqrt{e})$. Substituting into Theorem \ref{thm:uniform_concentration_ball}'s final inequality gives the bound in the corollary.
\end{proof}
\subsection{Proof of Theorem~\ref{thm:product_concentration}}
\begin{theorem}[Concentration for Weighted Cross Term in Product Kernel]
\label{thm:product_concentration}
Let $\mathcal{P} \subseteq \mathbb{R}^{d_p}$ and $\mathcal{Z} \subseteq \mathbb{R}^{d_z}$ be compact. Let $k_p: \mathcal{P} \times \mathcal{P} \to [-1,1]$ and $k_z: \mathcal{Z} \times \mathcal{Z} \to [-1,1]$ be normalized kernels with $k_p(p,p) = k_z(z,z) = 1$. Assume $k_z$ is $L_z$-Lipschitz continuous in its first argument. Let $Q \times \Pi'$ be the target joint distribution and let $\{(p_j^{(r)}, z_j^{(r)})\}_{j=1}^{N_r} \stackrel{\text{iid}}{\sim} Q \times \Pi'$ be reference samples. For a pair $(p_0, z_0) \in \mathcal{P} \times \mathcal{Z}$, we define
\begin{align*}
g^*_{\text{cross},\otimes}(p_0, z_0) &= -2\mathbb{E}_{(p',z') \sim Q \times \Pi'}[k_p(p_0, p') \nabla_{z_0} k_z(z_0, z')], \\
\widehat{g}_{\text{cross},\otimes}(p_0, z_0) &= -\frac{2}{N_r}\sum_{j=1}^{N_r} k_p(p_0, p_j^{(r)}) \nabla_{z_0} k_z(z_0, z_j^{(r)}).
\end{align*}
Then for any $\delta \in (0,1)$, with probability at least $1-\delta$ over the draw of reference samples:
\begin{equation*}
\big\|\widehat{g}_{\text{cross},\otimes}(p_0, z_0) - g^*_{\text{cross},\otimes}(p_0, z_0)\big\|_2 \leq \frac{4L_z}{\sqrt{N_r}}\left(1 + \sqrt{2\log\bigl({1}/{\delta}\bigr)}\right).
\end{equation*}
\end{theorem}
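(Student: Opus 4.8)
The plan is to reduce the statement to the same vector-valued Hoeffding argument used in the proof of Theorem~\ref{thm:gradient_concentration}; the only genuinely new ingredient is a uniform almost-sure bound on the summands, which must combine the bounded prompt kernel with the Lipschitz gradient of the latent kernel.

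First I would fix the pair $(p_0,z_0)\in\mathcal{P}\times\mathcal{Z}$ and introduce the centered random vectors
\begin{equation*}
W_j \;=\; k_p(p_0,p_j^{(r)})\,\nabla_{z_0}k_z(z_0,z_j^{(r)}) \;-\; \mathbb{E}_{(p',z')\sim Q\times\Pi'}\bigl[k_p(p_0,p')\,\nabla_{z_0}k_z(z_0,z')\bigr], \qquad j=1,\dots,N_r .
\end{equation*}
Because the reference pairs $(p_j^{(r)},z_j^{(r)})$ are drawn i.i.d.\ from $Q\times\Pi'$ independently of $(p_0,z_0)$, the $W_j$ are i.i.d.\ mean-zero random vectors in $\mathbb{R}^{d_z}$, and by construction $\widehat{g}_{\text{cross},\otimes}(p_0,z_0)-g^*_{\text{cross},\otimes}(p_0,z_0) = -\frac{2}{N_r}\sum_{j=1}^{N_r}W_j$ (the finiteness of the population expectation being immediate from boundedness of the integrand).

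Next I would establish the key bound $\|W_j\|_2\le 2L_z$ almost surely. Here the product structure is exactly what makes this clean: since $k_p$ takes values in $[-1,1]$ we have $|k_p(p_0,p_j^{(r)})|\le 1$, and since $k_z$ is $L_z$-Lipschitz in its first argument we have $\|\nabla_{z_0}k_z(z_0,z_j^{(r)})\|_2\le L_z$; multiplying the bounded scalar by the bounded-norm vector gives a summand of norm at most $L_z$, and centering at most doubles it. Then I would invoke the Hoeffding-type inequality for Hilbert-space--valued random variables as in \citep{sutherland2018efficient}, which yields, with probability at least $1-\delta$,
\begin{equation*}
\Bigl\|\tfrac{1}{N_r}\textstyle\sum_{j=1}^{N_r}W_j\Bigr\|_2 \;\le\; \frac{2L_z}{\sqrt{N_r}}\Bigl(1+\sqrt{2\log(1/\delta)}\Bigr),
\end{equation*}
and multiplying through by $2$ gives the claimed inequality.

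The main (and essentially only) obstacle is the boundedness step: one must notice that the prompt kernel acts as a contraction on the gradient vector, so that neither the dimension $d_p$, nor any Lipschitz or smoothness constant of $k_p$, nor the marginal $\Pi'$ enters the final bound — only the normalization $k_p\in[-1,1]$ is used. Once this is in hand, the remainder is a verbatim repetition of the Theorem~\ref{thm:gradient_concentration} argument, so I do not anticipate any further difficulty; in particular the bound is pointwise in $(p_0,z_0)$ and could be upgraded to a uniform statement over a ball exactly as in Theorem~\ref{thm:uniform_concentration_ball} if desired.
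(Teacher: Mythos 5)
Your proposal is correct and matches the paper's proof essentially verbatim: the same centered vectors $W_j$, the same almost-sure bound $\|W_j\|_2\le 2L_z$ obtained from $|k_p|\le 1$ and the $L_z$-Lipschitz property of $k_z$, and the same Hoeffding inequality for Hilbert-space-valued random variables from \citep{sutherland2018efficient}, followed by multiplying by $2$. No gaps to report.
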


\begin{proof}

We define the centered random vectors
\begin{equation}
W_j = k_p(p_0, p_j^{(r)}) \nabla_{z_0} k_z(z_0, z_j^{(r)}) - \mathbb{E}_{(p',z') \sim Q \times \Pi'}[k_p(p_0, p') \nabla_{z_0} k_z(z_0, z')], \quad j = 1, \ldots, N_r.
\end{equation}

Since the reference pairs $(p_j^{(r)}, z_j^{(r)})$ are drawn i.i.d. from $Q \times \Pi'$ independently of how $(p_0, z_0)$ was generated, the $W_j$ are independent random vectors in $\mathbb{R}^{d_z}$ with $\mathbb{E}[W_j] = 0$ (by construction)
and $\big\|W_j\big\|_2 \leq 2L_z$ almost surely, since $k_p \in [-1,1]$ and $\big\|\nabla_{z_0} k_z(z_0, \cdot)\big\|_2 \leq L_z$.

Applying the Hoeffding inequality for Hilbert-Schmidth operators \citep{sutherland2018efficient}, we have the following with probability at least $1-\delta$:
\begin{equation}
\left\|\frac{1}{N_r}\sum_{j=1}^{N_r} W_j\right\|_2 \leq \frac{2L_z}{\sqrt{N_r}}\left(1 + \sqrt{2\log\bigl({1}/{\delta}\bigr)}\right).
\end{equation}

Since $\widehat{g}_{\text{cross},\otimes}(p_0, z_0) - g^*_{\text{cross},\otimes}(p_0, z_0) = -\frac{2}{N_r}\sum_{j=1}^{N_r} W_j$, the above inequality will result in the following to hodl with probability at least $1-\delta$:
\begin{equation}
\big\|\widehat{g}_{\text{cross},\otimes}(p_0, z_0) - g^*_{\text{cross},\otimes}(p_0, z_0)\big\|_2 \leq \frac{4L_z}{\sqrt{N_r}}\left(1 + \sqrt{2\log\bigl({1}/{\delta}\bigr)}\right).
\end{equation}
\end{proof}

\begin{corollary}[Product Kernel with Gaussian RBF]
\label{cor:product_gaussian_rbf}
For the product kernel $k_{\otimes}((p,z), (p',z')) = k_p(p, p') \cdot k_z(z, z')$ where $k_z$ is the Gaussian RBF kernel with bandwidth $\sigma$ and $k_p \in [-1,1]$, the weighted cross term satisfies with probability at least $1-\delta$:
\begin{equation}
\big\|\widehat{g}_{\text{cross},\otimes}(p_0, z_0) - g^*_{\text{cross},\otimes}(p_0, z_0)\big\|_2 \leq \frac{3}{\sigma\sqrt{N_r}}\left(1 + \sqrt{2\log\bigl({1}/{\delta}\bigr)}\right).
\end{equation}
\end{corollary}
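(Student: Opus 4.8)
The plan is to derive this corollary directly from Theorem~\ref{thm:product_concentration} by identifying the correct value of the Lipschitz constant $L_z$ for the Gaussian RBF kernel $k_z$, exactly mirroring the argument used for Corollary~\ref{cor:gaussian_rbf_concise}. The hypotheses of the theorem are already satisfied: $k_p$ is assumed to take values in $[-1,1]$ and $k_z(z,z)=1$, and $k_z$ is smooth, so the only quantity to pin down is the Lipschitz constant of $k_z$ in its first argument, which by differentiability equals $\sup_{x,y\in\mathcal{Z}}\|\nabla_x k_z(x,y)\|_2$.

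First I would compute the gradient of the Gaussian RBF kernel: $\nabla_x k_z(x,y) = -\sigma^{-2} k_z(x,y)(x-y)$, so that $\|\nabla_x k_z(x,y)\|_2 = \sigma^{-2}\|x-y\|_2\,\exp\!\bigl(-\|x-y\|_2^2/(2\sigma^2)\bigr)$. Substituting $t=\|x-y\|_2/(\sigma\sqrt{2})$ rewrites this as $(\sqrt{2}/\sigma)\,t e^{-t^2}$, and since $\max_{t\ge 0} t e^{-t^2} = e^{-1/2}/\sqrt{2}$ (attained at $t=1/\sqrt{2}$), the supremum of the gradient norm is $1/(\sigma\sqrt{e})$. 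Hence $L_z = 1/(\sigma\sqrt{e})$ is a valid choice in Theorem~\ref{thm:product_concentration}.

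Then I would plug this into the bound of Theorem~\ref{thm:product_concentration}, which gives $\|\widehat g_{\text{cross},\otimes}(p_0,z_0)-g^*_{\text{cross},\otimes}(p_0,z_0)\|_2 \le \frac{4}{\sigma\sqrt{e}\,\sqrt{N_r}}\bigl(1+\sqrt{2\log(1/\delta)}\bigr)$, and finish by observing the numerical inequality $4/\sqrt{e} < 3$ to obtain the stated constant. There is essentially no obstacle here — the argument is a routine specialization — the only point requiring a moment of care is the one-variable maximization $\max_{t\ge0} te^{-t^2}$ and the remark that the weight $k_p\in[-1,1]$ contributes nothing beyond the factor already absorbed into the almost-sure bound $\|W_j\|_2\le 2L_z$ inside the proof of the theorem.

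\begin{proof}
For the Gaussian RBF kernel $k_z(x,y)=\exp(-\|x-y\|_2^2/(2\sigma^2))$ we have $\nabla_x k_z(x,y) = -\sigma^{-2}k_z(x,y)(x-y)$, hence
\[
\|\nabla_x k_z(x,y)\|_2 = \frac{\|x-y\|_2}{\sigma^2}\exp\!\Bigl(-\frac{\|x-y\|_2^2}{2\sigma^2}\Bigr).
\]
Writing $t = \|x-y\|_2/(\sigma\sqrt{2})$, this equals $(\sqrt{2}/\sigma)\,t e^{-t^2}$, and since $\max_{t\ge 0} t e^{-t^2} = e^{-1/2}/\sqrt{2}$, the Lipschitz constant of $k_z$ in its first argument is $L_z = 1/(\sigma\sqrt{e})$. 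Substituting this into the bound of Theorem~\ref{thm:product_concentration} yields, with probability at least $1-\delta$,
\[
\big\|\widehat{g}_{\text{cross},\otimes}(p_0, z_0) - g^*_{\text{cross},\otimes}(p_0, z_0)\big\|_2 \leq \frac{4}{\sigma\sqrt{e}\,\sqrt{N_r}}\left(1 + \sqrt{2\log\bigl({1}/{\delta}\bigr)}\right).
\]
Since $4/\sqrt{e} < 3$, the claimed inequality follows.
\end{proof}
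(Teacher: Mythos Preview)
Your proof is correct and follows essentially the same approach as the paper: identify the Lipschitz constant $L_z = 1/(\sigma\sqrt{e})$ for the Gaussian RBF kernel and plug it into Theorem~\ref{thm:product_concentration}, then use $4/\sqrt{e}<3$. The only cosmetic difference is that the paper cites Corollary~\ref{cor:gaussian_rbf_concise} for the value of $L_z$ rather than redoing the one-variable maximization inline.
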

\begin{proof}

From Corollary~\ref{cor:gaussian_rbf_concise}, the Gaussian RBF kernel has Lipschitz constant $L_z = 1/(\sigma\sqrt{e})$. Applying Theorem~\ref{thm:product_concentration} directly gives the stated bound.
\end{proof}

\iffalse
\section{Theoretical Analysis of MMD Guidance}
\input{sections/5-theory}
\fi

\section{Algorithmic Descriptions}

Here, we present Algorithm~\ref{alg:mmd_conditional} for prompt-conditioned MMD guidance of diffusion models. We recall that given generated pairs $\{(p_i, z_t^{(i)})\}_{i=1}^B$ with distribution $\widehat{P}_t$ and reference pairs $\{(p_j^{(r)}, z_j^{(r)})\}_{j=1}^{N_r}$with distribution $\widehat{Q}$, the empirical squared MMD in the product space for gradient calculation is:
\begin{align}
\widehat{\text{MMD}}^2_{\otimes}(\widehat{P}_t,\widehat{Q}) &= \frac{1}{B^2}\sum_{i=1}^B\sum_{i'=1}^B k_p(p_i, p_{i'}) k_z(z_t^{(i)}, z_t^{(i')}) + \frac{1}{N_r^2}\sum_{j=1}^{N_r}\sum_{j'=1}^{N_r} k_p(p_j^{(r)}, p_{j'}^{(r)}) k_z(z_j^{(r)}, z_{j'}^{(r)}) \nonumber\\
&\quad - \frac{2}{BN_r}\sum_{i=1}^B\sum_{j=1}^{N_r} k_p(p_i, p_j^{(r)}) k_z(z_t^{(i)}, z_j^{(r)}).
\end{align}

\begin{algorithm}[t]
\caption{Prompt-Aware MMD-Guided Sampling}
\label{alg:mmd_conditional}
\KwIn{Reference pairs $\{(p_j^{(r)}, x_j^{(r)})\}_{j=1}^{N_r}$, prompts $\{\text{prompt}_i\}_{i=1}^B$, guidance schedule $\{\lambda_t\}_{t=1}^T$}
\KwOut{Generated samples $\{x^{(i)}\}_{i=1}^B$ matching prompts and reference style}
\textit{Preprocessing:} \\
\Indp
$z_j^{(r)} \gets \mathcal{E}(x_j^{(r)})$ for all $j \in [N_r]$ \\
$p_i \gets \text{CLIP}_{\text{text}}(\text{prompt}_i)$ for all $i \in [B]$ \\
Compute $K_{ij} = k_p(p_i, p_j^{(r)})$ for all $i \in [B], j \in [N_r]$ \\
\Indm
\textit{Initialization:} $z_T^{(i)} \sim \mathcal{N}(0, I)$ for all $i \in [B]$\;
\For{$t = T$ \KwTo $1$}{
    \For{$i = 1$ \KwTo $B$ \textit{in parallel}}{
        $\widehat{z}_{t-1}^{(i)} \gets \text{sampler}(z_t^{(i)}, t, \epsilon_\theta, p_i)$ \\
        $g^{(i)} \gets \nabla_{z_t^{(i)}} \widehat{\text{MMD}}^2_{\otimes}$ using precomputed $K_{ij}$ \tcp{Using \eqref{eq:product_gradient}}
        $z_{t-1}^{(i)} \gets \widehat{z}_{t-1}^{(i)} - \lambda_t g^{(i)}$\;
    }
}
\Return $\{x^{(i)} = \mathcal{D}(z_0^{(i)})\}_{i=1}^B$
\end{algorithm}

\section{Additional Numerical Results} \label{sec:numerical-appendix}

In this section, we provide additional numerical results for the MMD Guidance method. For all experiments the MMD Guidance scale is chosen via grid search on a held-out validation split using FD.
For all non-text GMM guidance experiments on real images, we used MMD with cubic polynomial kernel.

\begin{figure}
\resizebox{\textwidth}{!}{%
\begin{tikzpicture}
\def\imgwidth{14.3cm}
\def\imgwidthbig{14.0cm}
% === images ===
\node[inner sep=0, outer sep=0] (rowTop) at (9.34, -1.5)
  {\includegraphics[width=\imgwidth]{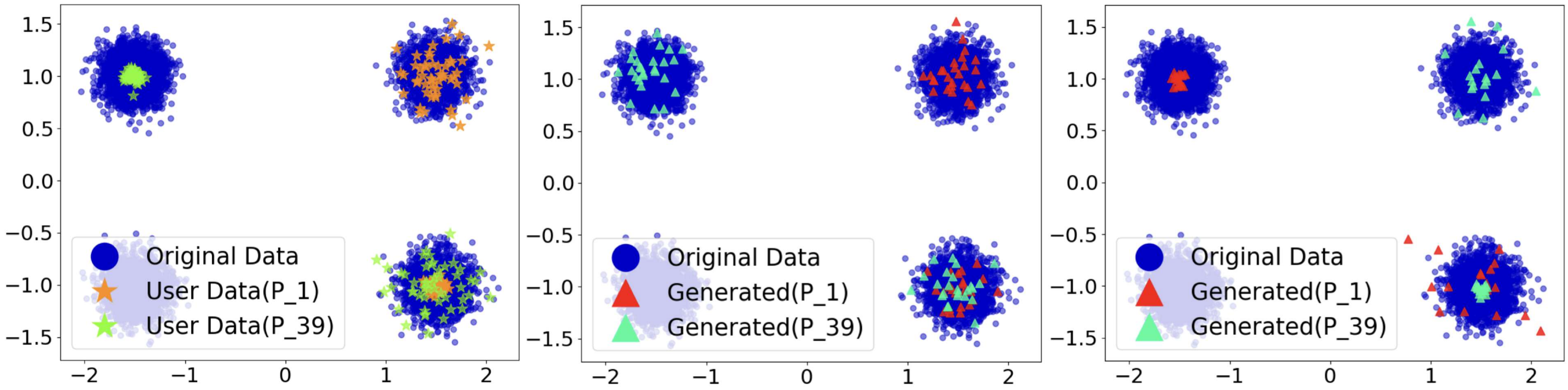}};

% column anchor fractions (left/center/right)
\def\colA{0.17}\def\colB{0.50}\def\colC{0.83}

% ===== TOP row: Density / Coverage / FID (3 lines) =====
\foreach \x/\dens in {
  \colA/{\headerFont\textbf{User Data}},
  \colB/{\headerFont\textbf{No-Guidance-DM}},
  \colC/{\headerFont\textbf{MMD}}
}{
  \node[
    font=\normalsize, align=center,
    anchor=south,
    fill=white, rounded corners=0.6pt, inner sep=0.9pt
  ] at ($ (rowTop.north west)!\x!(rowTop.north east) + (0,0.18) $)
  { \dens};
}

\end{tikzpicture}
}
\caption{Comparison of MMD Guidance with baselines on 100-D Gaussian distributions, when guiding toward a user with 3 Gaussian components.}
\label{fig:prompt-aware-gmm}
\end{figure}

\textbf{Experiment settings on GMM.} We evaluated the methods listed in Table~\ref{gmm_concat_8_25} in the following setting. We reported results on three users that each has three random components (similar to Figure~\ref{fig:gmmb_3users}). Each component in a user has 50 data points. We applied MMD with an RBF kernel, using bandwidth 1 for the 8-component GMM and bandwidth 4 for the 25-component GMM (since it is more aligned with data from the Gaussian mixtures). The guidance scale is in order of $10^{-1}$.  For the CG baseline, we trained a linear classifier (single FC layer).

\begin{figure}[!t]
\resizebox{\textwidth}{!}{%
\begin{tikzpicture}
\def\imgwidth{13.3cm}
\def\imgwidthbig{13.0cm}
% === images ===
\node[inner sep=0, outer sep=0] (rowTop) at (9.34, -1.5)
  {\includegraphics[width=\imgwidth]{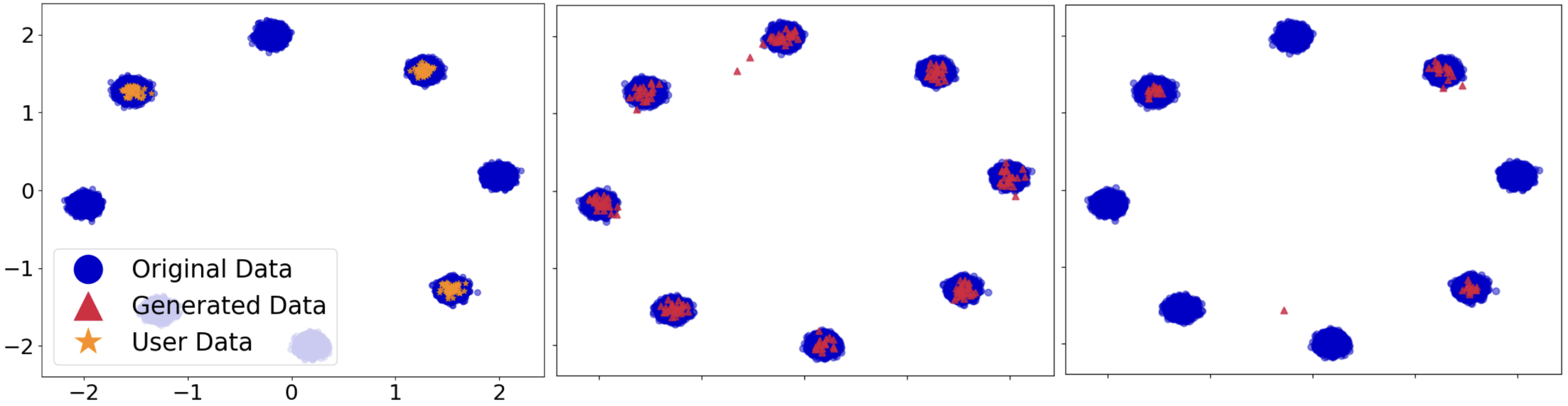}};
\node[inner sep=0, outer sep=0] (rowBot) at (9.45, -5.5)
  {\includegraphics[width=\imgwidthbig]{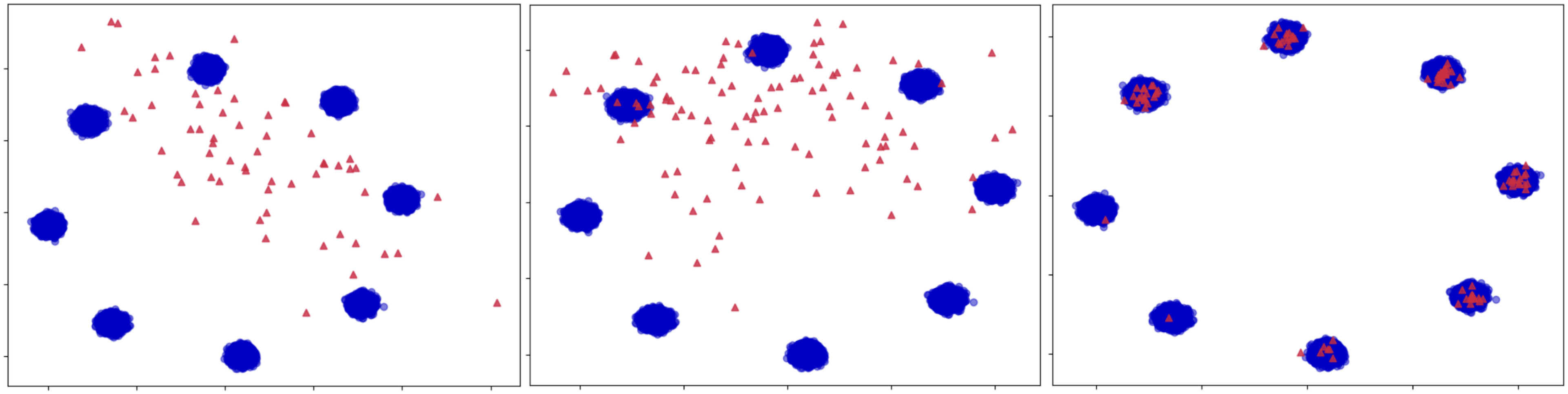}};

% column anchor fractions (left/center/right)
\def\colA{0.17}\def\colB{0.50}\def\colC{0.83}

% ===== TOP row: Density / Coverage / FID (3 lines) =====
\foreach \x/\dens in {
  \colA/{\headerFont\textbf{User Data}},
  \colB/{\headerFont\textbf{No-Guidance-DM}},
  \colC/{\headerFont\textbf{MMD}}
}{
  \node[
    font=\normalsize, align=center,
    anchor=south,
    fill=white, rounded corners=0pt, inner sep=0pt
  ] at ($ (rowTop.north west)!\x!(rowTop.north east) + (0,0.18) $)
  { \dens};
}

% ===== BOTTOM row: Density / Coverage / FID (3 lines) =====
\foreach \x/\dens in {
  \colA/{\headerFont\textbf{User-Trained-DM}},
  \colB/{\headerFont\textbf{CFG}},
  \colC/{\headerFont\textbf{CG}}
}{
  \node[
    font=\normalsize, align=center,
    anchor=south,
    fill=white, rounded corners=0pt, inner sep=0pt
  ] at ($ (rowBot.north west)!\x!(rowBot.north east) + (0,0.18) $)
  { \dens};
}
\end{tikzpicture}
}
\caption{Comparison of MMD Guidance with baselines on 100D Gaussian distributions, when guiding toward a user with 3 Gaussian components.}
\label{fig:gmmb_3users}
\end{figure}

\begin{table}[!t]
\caption{Evaluation metrics for samples generated from three users, using 8, and 25 Gaussian component synthetic data.}
\label{gmm_concat_8_25}
\centering
\setlength{\tabcolsep}{6pt}
\resizebox{\linewidth}{!}{
\begin{tabular}{cccccccc}
\toprule
& & \multicolumn{3}{c}{8 Gaussian Component} & \multicolumn{3}{c}{25 Gaussian Component} \\
\cmidrule(lr){3-5}\cmidrule(lr){6-8}
User & Guidance & FD $\downarrow$ & KD($\times10^{3})\ \downarrow$ & RRKE $\downarrow$
& FD $\downarrow$ & KD($\times10^{3})\ \downarrow$ & RRKE $\downarrow$ \\
\midrule
\multirow{6}{*}{User 1}
  & No-Guidance-DM & $7.04 \pm 0.002$ & $142.42 \pm 0.07$ & $2.314 \pm 0.001$ & $77.70 \pm 0.015$ & $330.91 \pm 0.28$ & $2.073 \pm 0.001$ \\
  & User-Trained-DM & $27.90 \pm 0.009$ & $219.64 \pm 0.14$ & $64.135 \pm 0.103$ & $47.36 \pm 0.026$ & $260.39 \pm 0.35$ & $1.803 \pm 0.000$ \\
  & Fine-tuning & $0.48 \pm 0.001$  & $27.86 \pm 0.10$  & $1.051 \pm 0.001$  & $4.72 \pm 0.004$  & $16.65 \pm 0.11$  & $1.290 \pm 0.000$ \\
  & CG          & $0.94 \pm 0.002$  & $24.13 \pm 0.10$  & $1.219 \pm 0.001$  & $32.68 \pm 0.006$ & $81.31 \pm 0.17$  & $0.398 \pm 0.000$ \\
  & CFG         & $0.97 \pm 0.001$  & $136.06 \pm 0.11$ & $4.833 \pm 0.003$  & $7.79 \pm 0.007$  & $134.12 \pm 0.31$ & $0.270 \pm 0.000$ \\
  & MMD (Ours)  & $0.37 \pm 0.001$  & $17.02 \pm 0.08$  & $1.294 \pm 0.002$  & $3.56 \pm 0.003$  & $9.52 \pm 0.07$   & $0.231 \pm 0.000$ \\
\midrule
\multirow{6}{*}{User 2}
  & No-Guidance-DM & $7.46 \pm 0.004$ & $153.98 \pm 0.16$ & $2.089 \pm 0.003$ & $95.58 \pm 0.034$ & $417.04 \pm 0.65$ & $2.356 \pm 0.000$ \\
  & User-Trained-DM & $32.04 \pm 0.008$ & $217.24 \pm 0.24$ & $76.478 \pm 0.037$ & $88.33 \pm 0.020$ & $260.85 \pm 0.49$ & $2.411 \pm 0.001$ \\
  & Fine-tuning & $1.06 \pm 0.002$  & $40.19 \pm 0.10$  & $1.231 \pm 0.001$  & $33.97 \pm 0.022$ & $135.06 \pm 0.36$ & $1.557 \pm 0.000$ \\
  & CG          & $1.12 \pm 0.001$  & $29.23 \pm 0.09$  & $1.192 \pm 0.002$  & $41.79 \pm 0.015$ & $122.15 \pm 0.38$ & $0.482 \pm 0.000$ \\
  & CFG         & $1.76 \pm 0.002$  & $194.16 \pm 0.24$ & $11.096 \pm 0.009$ & $12.56 \pm 0.015$ & $152.69 \pm 0.43$ & $0.341 \pm 0.000$ \\
  & MMD (Ours)  & $0.33 \pm 0.001$  & $16.54 \pm 0.06$  & $1.245 \pm 0.002$  & $2.84 \pm 0.003$  & $9.00 \pm 0.08$   & $0.246 \pm 0.000$ \\
\midrule
\multirow{6}{*}{User 3}
  & No-Guidance-DM & $8.30 \pm 0.004$ & $167.47 \pm 0.15$ & $2.376 \pm 0.003$ & $87.63 \pm 0.023$ & $378.38 \pm 0.31$ & $2.107 \pm 0.001$ \\
  & User-Trained-DM & $39.00 \pm 0.008$ & $224.98 \pm 0.20$ & $80.293 \pm 0.034$ & $78.56 \pm 0.022$ & $284.06 \pm 0.28$ & $2.362 \pm 0.001$ \\
  & Fine-tuning   & $1.11 \pm 0.002$  & $41.06 \pm 0.07$  & $1.295 \pm 0.002$  & $6.97 \pm 0.008$  & $26.66 \pm 0.14$  & $1.263 \pm 0.000$ \\
  & CG          & $1.24 \pm 0.002$  & $32.59 \pm 0.11$  & $1.110 \pm 0.001$  & $71.95 \pm 0.015$ & $407.94 \pm 0.41$ & $0.531 \pm 0.000$ \\
  & CFG         & $2.20 \pm 0.002$  & $197.11 \pm 0.19$ & $12.831 \pm 0.011$ & $9.49 \pm 0.010$  & $148.67 \pm 0.37$ & $0.257 \pm 0.000$ \\
  & MMD (Ours)  & $0.64 \pm 0.001$  & $14.46 \pm 0.09$  & $1.258 \pm 0.001$  & $1.85 \pm 0.004$  & $7.65 \pm 0.09$   & $0.236 \pm 0.000$ \\
\bottomrule
\end{tabular}
}
\end{table}

\begin{figure}[!t]
\resizebox{\textwidth}{!}{%
\begin{tikzpicture}
\def\imgwidth{13.3cm}
\def\imgwidthbig{13.0cm}
% === images ===
\node[inner sep=0, outer sep=0] (rowTop) at (9.34, -1.5)
  {\includegraphics[width=\imgwidth]{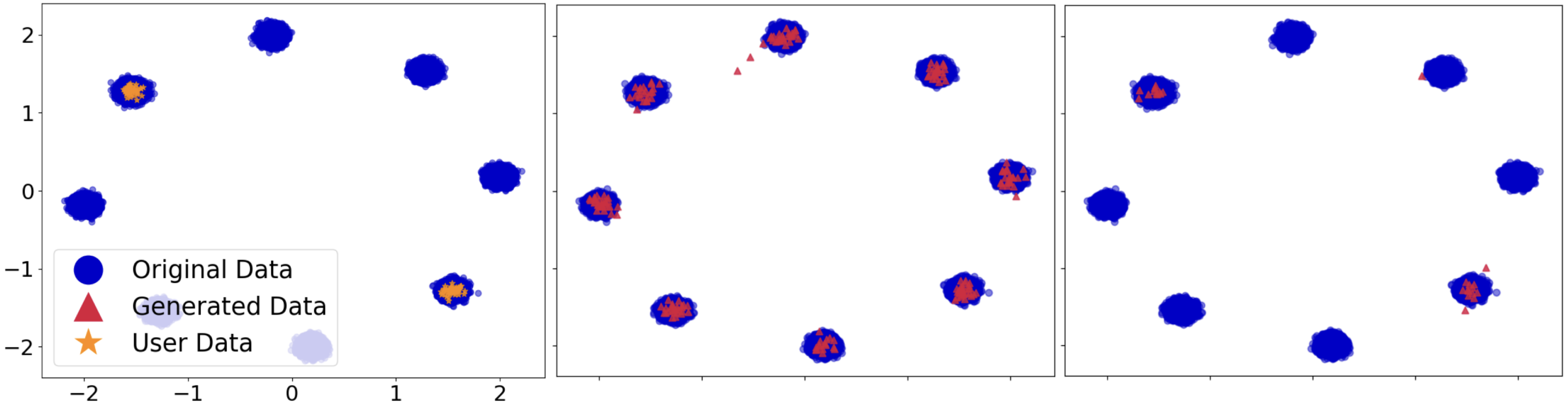}};
\node[inner sep=0, outer sep=0] (rowBot) at (9.45, -5.5)
  {\includegraphics[width=\imgwidthbig]{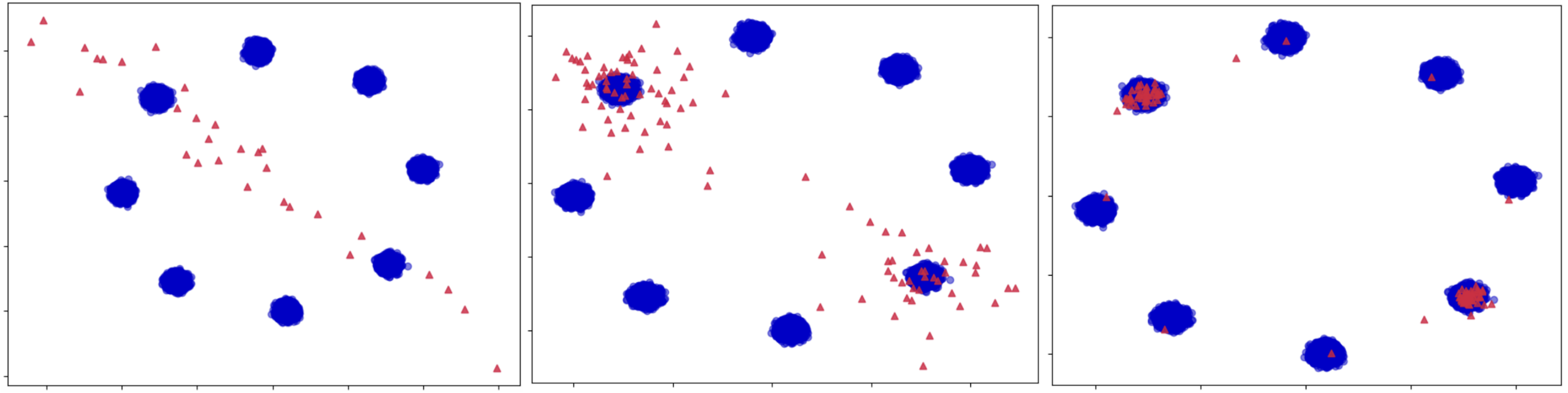}};

\def\colA{0.17}\def\colB{0.50}\def\colC{0.83}

\foreach \x/\dens in {
  \colA/{\headerFont\textbf{User Data}},
  \colB/{\headerFont\textbf{No-Guidance-DM}},
  \colC/{\headerFont\textbf{MMD}}
}{
  \node[
    font=\normalsize, align=center,
    anchor=south,
    fill=white, rounded corners=0.6pt, inner sep=0.9pt
  ] at ($ (rowTop.north west)!\x!(rowTop.north east) + (0,0.18) $)
  { \dens};
}

\foreach \x/\dens in {
  \colA/{\headerFont\textbf{User-Trained-DM}},
  \colB/{\headerFont\textbf{CFG}},
  \colC/{\headerFont\textbf{CG}}
}{
  \node[
    font=\normalsize, align=center,
    anchor=south,
    fill=white, rounded corners=0.6pt, inner sep=0.9pt
  ] at ($ (rowBot.north west)!\x!(rowBot.north east) + (0,0.18) $)
  { \dens};
}
\end{tikzpicture}
}
\caption{Comparison of MMD Guidance with baselines on 100D Gaussian distributions, when guiding toward a user with 2 Gaussian components.}
\label{fig:gmmb_2users}
\end{figure}

\textbf{Additional results on synthetic data.} Similar to Figure~\ref{fig:gmm_baseline} we compare our method to the baselines with different number of components for users, in Figures~\ref{fig:gmmb_3users}, and~\ref{fig:gmmb_2users}. Additionally, we evaluated metrics on three users under 8-component and 25-component GMM settings, computing all metrics on 1,000 generated samples per method. As shown in Table~\ref{gmm_concat_8_25}, across both mixture complexities, the MMD Guidance consistently achieves the lowest FD, KD, and RRKE in most cases. Figure~\ref{gmm_clusters} illustrates the effectiveness of our method in GMM settings of 8 and 25 components.

\textbf{Prompt-aware GMM.} \textbf{Prompt-aware synthetic GMM.}
To simulate the text–style experiment in a controlled setting with known ground truth, we considered a prompt-conditioned GMM, in which the prompt corresponds to the number of the Gaussian component in the GMM.
In this setup, a user provide few-shot references for selected prompts; for each prompt, the user samples share the same component mean but have different variances, simulating style shifts.
As shown in Figure~\ref{fig:prompt-aware-gmm}, the guided sampler allocates mass to the correct components for the queried prompts and respects their local geometry, component variance, and also preserves the target mixture ratios across components.
In comparison, the unguided model only generates samples with respect to the means, ignores variance differences, and does not follow the intended proportions.

Also, we generated the results in Figure~\ref{fig:prompt-aware-gmm} under the following setup. We construct a synthetic user with data–prompt pairs drawn from a four-component GMM. The user distribution includes prompts "one" and "thirty nine", but with a user-specific target variance within the corresponding mixture component. As shown in Figure~\ref{fig:prompt-aware-gmm}, MMD Guidance reproduces both the correct components and the target variance (style), where the No-Guidance-DM model selects the right component but does not match the variance.

\begin{figure}[!t]
\centering
\begin{tikzpicture}[every node/.style={inner sep=0, outer sep=0}]
% ---- size knobs ----
\def\imgwidth{1\linewidth}   % width of each row image
\def\rowgap{2mm}                 % vertical gap between the two rows
\def\labelraise{2mm}             % how far above the top row the labels sit
% ---------------------

% --- first row (8 components) ---
\node (rowTop) {\includegraphics[width=\imgwidth]{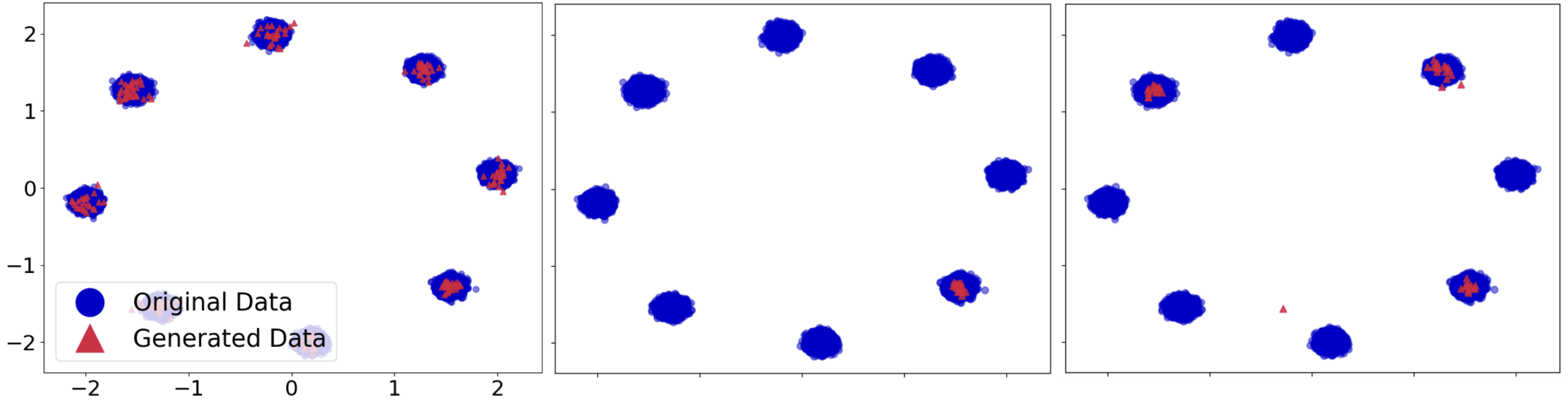}};

% --- second row (25 components), placed below the first row ---
\node[below=\rowgap of rowTop] (rowBot)
  {\includegraphics[width=\imgwidth]{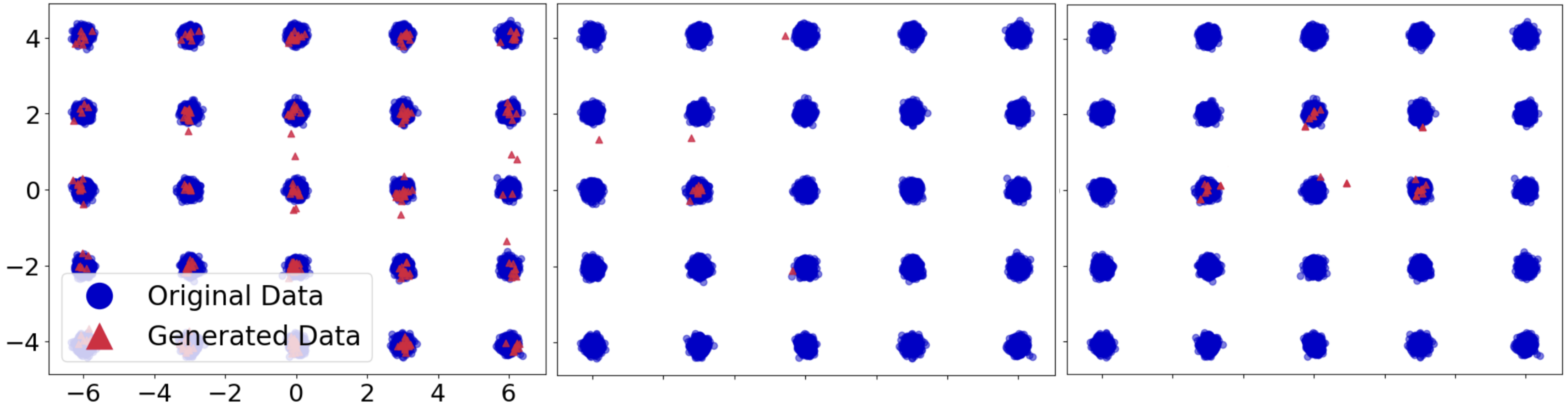}};

% --- column labels ABOVE the FIRST row (rowTop) ---
\node[anchor=south, yshift=\labelraise]
  at ($(rowTop.north west)!0.1667!(rowTop.north east)$) {\textbf{No-Guidance-DM}};
\node[anchor=south, yshift=\labelraise]
  at ($(rowTop.north west)!0.5!(rowTop.north east)$)    {\textbf{MMD on single node}};
\node[anchor=south, yshift=\labelraise]
  at ($(rowTop.north west)!0.8333!(rowTop.north east)$) {\textbf{MMD on three nodes}};
\end{tikzpicture}

\caption{Effect of MMD on Gaussian mixture models with 8 and 25 components.
The first row shows results for 8 components; the second row shows 25 components.
Column labels denote unguided diffusion and two MMD guidance targets.}
\label{gmm_clusters}
\end{figure}

% \begin{figure}[!t]
% \centering
% \begin{tikzpicture}

% \def\blocksep{3}
% \def\blocksepb{8.3}
% \def\blocksepc{13.6}
% \def\imgwidth{14cm}
% \def\bigimgwidth{14.3cm}
% \def\startp{0.02 * \textwidth}

% \node[anchor=west, rotate=90, text=black] at (\startp + 10, -1.9) {\textbf{Samples}};
% \node at (8.34, -1.2) {\includegraphics[width=\bigimgwidth]{Images/GMM_all/user.pdf}};

% \node[anchor=west, rotate=90, text=black] at (\startp + 10, -5.4) {\textbf{Local}};
% \node at (8.5, -4.8) {\includegraphics[width=\imgwidth]{Images/GMM_all/local.pdf}};
% \node[anchor=west, rotate=90, text=black] at (\startp + 10, -9.3) {\textbf{Fine-tuning}};
% \node at (8.5, -8.3) {\includegraphics[width=\imgwidth]{Images/GMM_all/FT.pdf}};
% \node[anchor=west, rotate=90, text=black] at (\startp + 10, -12.2) {\textbf{CG}};
% \node at (8.5, -11.8) {\includegraphics[width=\imgwidth]{Images/GMM_all/CG.pdf}};
% \node[anchor=west, rotate=90, text=black] at (\startp + 10, -15.8) {\textbf{CFG}};
% \node at (8.5, -15.3) {\includegraphics[width=\imgwidth]{Images/GMM_all/CFG.pdf}};
% \node[anchor=west, rotate=90, text=black] at (\startp + 10, -19.3) {\textbf{MMD}};
% \node at (8.5, -18.8) {\includegraphics[width=\imgwidth]{Images/GMM_all/MMD.pdf}};

% \end{tikzpicture}
% \caption{GMM comparison across guidance methods.}
% \label{fig:gmm_all_image}
% \end{figure}

%==================FFHQ==================

\textbf{Experiment settings on FFHQ.}
We evaluated the methods listed in Table~\ref{ffhq-metric-dinov2} using the following setup: Each user contains 500 images from the FFHQ dataset.
(i) a user who has images of people wearing sunglasses;
(ii) a user whose images predominantly feature people wearing reading-glasses. The user images were passed through the LDM AutoEncoder to construct the features used for the MMD Guidance in latent space.  In FFHQ experiments, we generated samples using pretrained LDM weights and a DDPM sampler. The MMD Guidance scale is in order of $10^{-4}$, and all metrics are computed on 500 generated samples per method. For the CG baseline, we trained a classifier with an Inception v3\cite{szegedy2016rethinking} backbone and a linear classification head.

\textbf{Additional results on FFHQ.} We also compare our method to baselines that require additional user-specific training (User-Trained-DM and fine-tuning). Table~\ref{ffhq-localft} shows that, across both users, the MMD Guidance outperforms these baselines. Beyond superior results, MMD Guidance is substantially more efficient in both time and compute, as it does not require any per-user training. In practice, the additional cost is negligible compared to unguided sampling (more details in Runtime Analysis). Furthermore, since these baselines are trained on a very small per-user subset (500 images), the diffusion model is prone to overfitting; in particular, the User-Trained-DM baseline shows clear memorization of the training data, as illustrated in Figure~\ref{ffhq_appendix}.

\begin{table}[!t]
    \caption{Evaluation metrics for samples generated on FFHQ for local training and fine-tuning.}
    \label{ffhq-localft}
    \centering
    \resizebox{\linewidth}{!}{
    \begin{tabular}{ccccccc}
    \toprule
    User & Guidance & FD $\downarrow$ & KD $\downarrow$ & RRKE $\downarrow$ & Density ($\times 10 ^ 2$) $\uparrow$ & Coverage ($\times 10 ^ 2$) $\uparrow$ \\ 
    \midrule
    \multirow{3}{*}{Sunglasses} & User-Trained-DM & $1004.71 \pm 51.23$ & $8.50 \pm 0.45$ & $1.52 \pm 0.03$ & $43.43 \pm 5.70$ & $57.96 \pm 1.60$ \\
    & Fine-tuning & $747.69 \pm 59.18$ & $4.15 \pm 0.52$ & $1.40 \pm 0.02$ & $71.64 \pm 4.66$ & $71.12 \pm 2.85$ \\ 
    & MMD (Ours) & $692.87 \pm 30.43$ & $3.25 \pm 0.18$ & $1.39 \pm 0.04$ & $113.13 \pm 9.36$ & $79.08 \pm 1.54$ \\
    \midrule
    \multirow{3}{*}{Reading-glasses} & User-Trained-DM & $1105.12 \pm 53.03$ & $7.54 \pm 0.29$ & $1.47 \pm 0.01$ & $59.14 \pm 4.18$ & $57.52 \pm 2.91$ \\
    & Fine-tuning & $732.91 \pm 43.02$ & $2.99 \pm 0.20$ & $1.35 \pm 0.01$ & $82.32 \pm 4.47$ & $73.40 \pm 1.89$ \\
    & MMD (Ours) & $574.29 \pm 17.57$ & $1.39 \pm 0.05$ & $1.30 \pm 0.01$ & $87.10 \pm 9.69$ & $84.60 \pm 2.54$ \\
    \bottomrule
    \end{tabular}
    }
\end{table}

\begin{figure*}
    \centering
    \includegraphics[width=\linewidth]{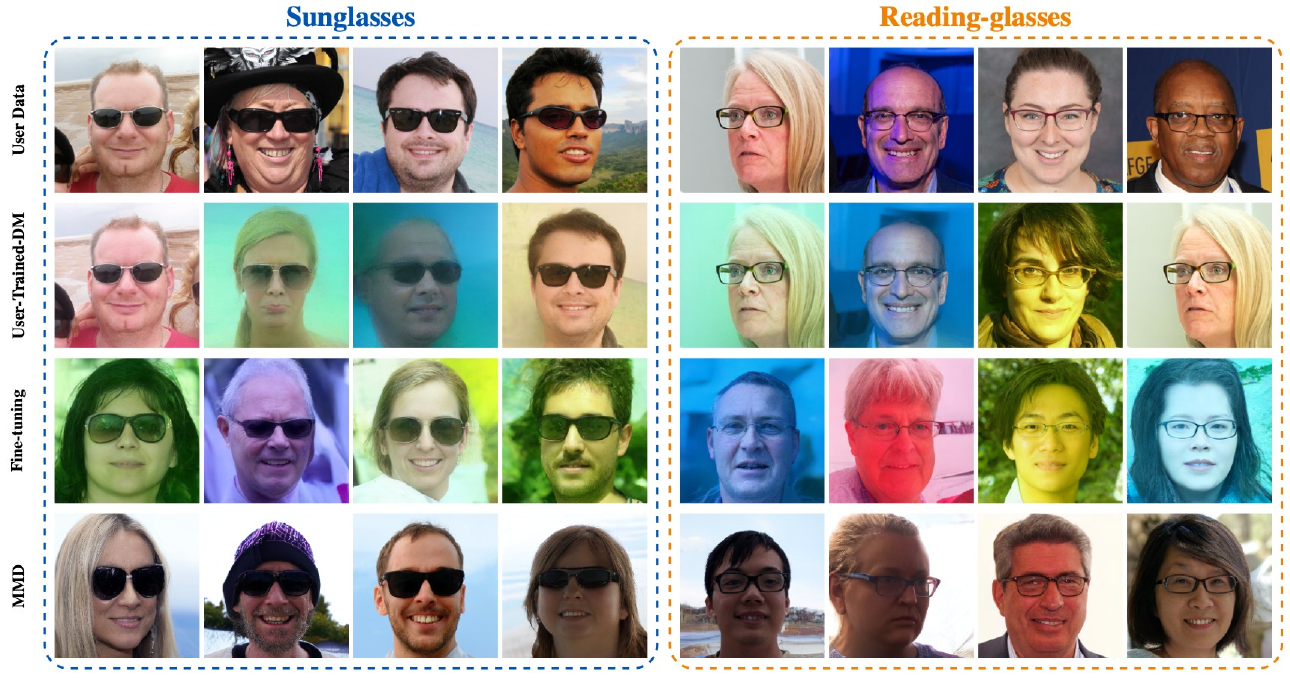}
    \caption{Comparison of MMD Guidance with training baselines on FFHQ dataset.}
    \label{ffhq_appendix}
\end{figure*}
%==================celeba-hq============

\textbf{Experiment settings on CelebA-HQ.} We conducted experiments on two users derived from a CelebA-HQ subset to capture specific styles:
one with 500 images of people with blond hair and one with 500 images of people with black hair. The user images were passed through the LDM AutoEncoder to construct the features used for the MMD Guidance on latent space. For CelebA-HQ experiments, we generated samples using pretrained LDM weights and a
DDPM sampler. The MMD Guidance scale is in order of $10^{-5}$, and all metrics are computed on 200 generated samples per method. For the CG baseline, we trained a classifier with an Inception v3~\cite{szegedy2016rethinking} backbone and a linear classification head.

\textbf{CelebA-HQ results.} We computed the evaluation metrics on CelebA-HQ and compared them with No-Guidance-DM, CG, fine-tuning, and User-Trained-DM in Table~\ref{tab:celeba-metrics}.
Across both user groups, the MMD Guidance achieves the strongest overall performance. 
Similarly to FFHQ, the limited per-user data (500 images) leads the User-Trained-DM baseline to overfit (memorize user data). Qualitative comparisons are also provided in Figure~\ref{celeba_all}.

\begin{table}[!t]
    \caption{Comparison metrics for samples generated on CelebA-HQ using MMD versus the baselines.}
    \label{tab:celeba-metrics}
    \centering
    \resizebox{\linewidth}{!}{
    \begin{tabular}{c c c c c c c}
    \toprule
    User & Guidance & FD $\downarrow$ & KD $\downarrow$ & RRKE $\downarrow$ & Density ($\times 10^{2}$) $\uparrow$ & Coverage ($\times 10^{2}$) $\uparrow$ \\
    \midrule
    \multirow{5}{*}{Black-haired}
      & No-Guidance-DM & $665.99 \pm 22.26$ & $2.61 \pm 0.38$ & $1.32 \pm 0.01$ & $94.68 \pm 4.55$ & $83.62 \pm 6.96$ \\
      & User-Trained-DM  & $921.55 \pm 18.76$ & $5.70 \pm 0.39$ & $1.41 \pm 0.01$ & $86.16 \pm 7.95$ & $70.09 \pm 3.31$ \\
      & Fine-tuning   & $817.17 \pm 14.97$ & $4.86 \pm 0.40$ & $1.37 \pm 0.01$ & $89.38 \pm 1.26$ & $75.26 \pm 4.40$ \\
      & CG            & $628.59 \pm 14.52$ & $2.18 \pm 0.30$ & $1.30 \pm 0.01$ & $108.02 \pm 5.76$ & $91.14 \pm 2.42$ \\
      & MMD (Ours)    & $627.57 \pm 18.20$ & $2.06 \pm 0.28$ & $1.30 \pm 0.01$ & $104.64 \pm 4.54$ & $90.75 \pm 2.95$ \\
    \midrule
    \multirow{5}{*}{Blond-haired}
      & No-Guidance-DM & $668.17 \pm 24.25$ & $2.99 \pm 0.23$ & $1.44 \pm 0.02$ & $66.82 \pm 9.52$ & $82.69 \pm 4.42$ \\
      & User-Trained-DM         & $779.93 \pm 14.70$ & $4.75 \pm 0.21$ & $1.49 \pm 0.01$ & $60.30 \pm 8.27$ & $66.39 \pm 3.40$ \\
      & Fine-tuning   & $640.63 \pm 13.07$ & $4.04 \pm 0.07$ & $1.40 \pm 0.01$ & $89.50 \pm 11.32$ & $69.56 \pm 4.77$ \\
      & CG            & $663.36 \pm 24.96$ & $2.88 \pm 0.23$ & $1.44 \pm 0.02$ & $67.58 \pm 10.14$ & $85.05 \pm 5.16$ \\
      & MMD (Ours)    & $637.31 \pm 27.12$ & $2.28 \pm 0.20$ & $1.41 \pm 0.02$ & $80.98 \pm 11.56$ & $89.72 \pm 3.50$ \\
    \bottomrule
    \end{tabular}}
\end{table}

\begin{figure*}[t]
    \centering
    \includegraphics[width=\linewidth]{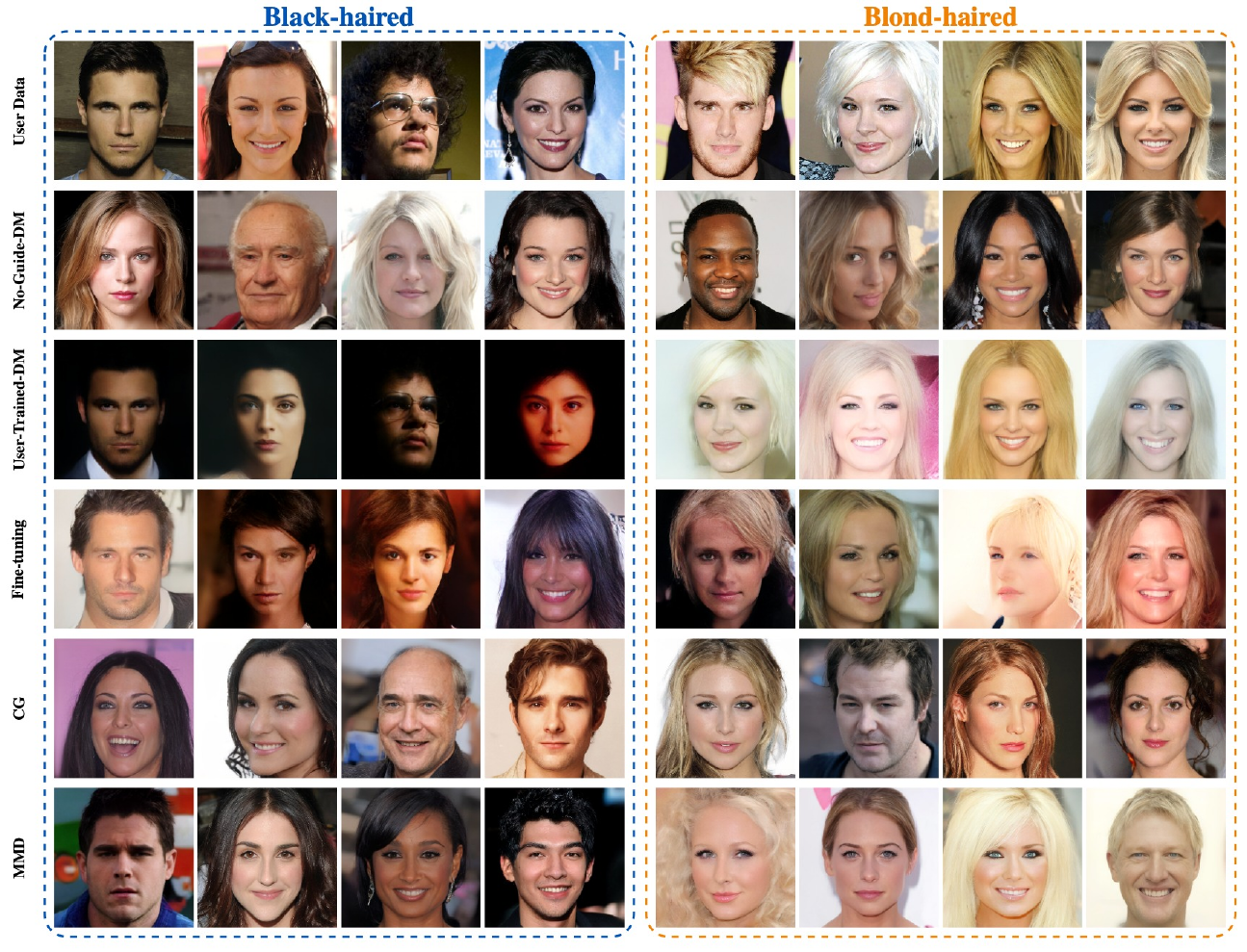}
    \caption{Comparison of MMD Guidance and baselines on CelebA-HQ dataset.}
    \label{celeba_all}
\end{figure*}

%==================SD===================
\textbf{Experiment settings on Stable Diffusion v1.4.}
We evaluate the methods in Table~\ref{bike_metrics} under the following setting. 
We use Stable Diffusion v1.4 at $512{\times}512$ resolution with 50 inference steps. 
For each image style (winter, black-and-white, sketch, cartoon), we first generate 200 images using the prompts 
"bicycle in winter time", "black and white photo of a bicycle", "sketch of a bicycle", and "cartoonish bicycle", with a fixed classifier-free guidance (CFG) scale of 5. 
During MMD Guidance, we use the prompt "bicycle". 
For SD v1.4 experiments, the MMD Guidance scale is on the order of $10^{-2}$, and all metrics are computed on 500 generated samples per method. 
The same setting is used for the results in Table~\ref{car_metrics}, except that all prompts (for both the user construction and guided generation) use "car" instead of "bicycle".
\begin{table}[!t]
    \caption{Evaluated metrics for \textbf{Bike}. Each block shows a user (Cartoon, Winter, B/W, and Sketch) with No-Guidance-DM vs.\ MMD Guidance.}
    \label{bike_metrics}
    \centering
    \resizebox{\linewidth}{!}{
    \begin{tabular}{cccccccc}
    \toprule
    User & Guidance
         & FD $\downarrow$
         & KD $\downarrow$
         & RRKE $\downarrow$
         & Density ($\times10^{2})$ $\uparrow$
         & Coverage ($\times10^{2})$ $\uparrow$ \\
    \midrule
    \multirow{2}{*}{B/W}
      & No-Guidance-DM &  $354.30 \pm 8.76$ &  $3.83 \pm 0.19$ & $1.43 \pm 0.02$ & $43.56 \pm 3.57$ & $69.18 \pm 2.42$ \\
      & MMD (Ours)     &  $241.40 \pm 9.30$ &  $1.96 \pm 0.13$ & $1.32 \pm 0.01$ & $63.62 \pm 4.04$ & $82.22 \pm 2.98$ \\
    \midrule
    \multirow{2}{*}{Winter}
      & No-Guidance-DM & $1147.48 \pm 33.88$ & $12.60 \pm 0.38$ & $1.84 \pm 0.03$ & $9.28 \pm 4.50$ &  $6.45 \pm 0.63$ \\
      & MMD (Ours)     & $1140.07 \pm 33.84$ & $12.99 \pm 0.37$ & $1.82 \pm 0.02$ & $12.32 \pm 6.31$ &  $6.73 \pm 0.88$ \\
    \midrule
    \multirow{2}{*}{Cartoon}
      & No-Guidance-DM & $1583.38 \pm 9.88$  & $22.03 \pm 0.27$ & $2.56 \pm 0.04$ & $53.63 \pm 9.26$ & $6.06 \pm 0.67$ \\
      & MMD (Ours)     & $1516.45 \pm 11.97$ & $21.43 \pm 0.27$ & $2.54 \pm 0.04$ & $46.99 \pm 10.93$ & $8.08 \pm 0.65$ \\
    \midrule
    \multirow{2}{*}{Sketch}
      & No-Guidance-DM &  $949.10 \pm 7.68$  &  $17.40 \pm 0.30$ & $2.53 \pm 0.06$ & $0.02 \pm 0.04$ & $0.12 \pm 0.18$ \\
      & MMD (Ours)     &  $703.12 \pm 7.35$  &  $12.42 \pm 0.24$ & $1.99 \pm 0.03$ & $2.62 \pm 0.71$ & $9.57 \pm 2.32$ \\
    \bottomrule
    \end{tabular}
    }
\end{table}

% ==========================
% CAR — Metric
% ==========================

\begin{table}[!t]
    \caption{Evaluated metrics for \textbf{Car}. Each block shows a user (Cartoon, Winter, B/W, and Sketch) with No-Guidance-DM vs.\ MMD Guidance.}
    \label{car_metrics}
    \centering
    \resizebox{\linewidth}{!}{
    \begin{tabular}{ccccccc}
    \toprule
    User & Guidance
         & FD $\downarrow$
         & KD $\downarrow$
         & RRKE $\downarrow$
         & Density ($\times10^{2})\uparrow$
         & Coverage ($\times10^{2})\uparrow$ \\
    \midrule
    \multirow{2}{*}{B/W}
      & No-Guidance-DM & $1487.97 \pm 47.90$ & $9.07 \pm 0.60$ & $1.74 \pm 0.05$ & $59.41 \pm 4.56$ & $50.02 \pm 3.42$ \\
      & MMD (Ours)     & $1242.00 \pm 46.46$ & $6.94 \pm 0.56$ & $1.62 \pm 0.04$ & $55.76 \pm 3.98$ & $55.62 \pm 2.27$ \\
    \midrule
    \multirow{2}{*}{Winter}
      & No-Guidance-DM & $1394.50 \pm 33.74$ & $8.41 \pm 0.39$ & $1.78 \pm 0.03$ & $11.04 \pm 2.74$ & $11.82 \pm 2.01$ \\
      & MMD (Ours)     & $1127.11 \pm 28.07$ & $6.82 \pm 0.37$ & $1.62 \pm 0.03$ & $15.92 \pm 1.52$ & $20.88 \pm 1.30$ \\
    \midrule
    \multirow{2}{*}{Cartoon}
      & No-Guidance-DM & $2222.78 \pm 13.54$ & $16.80 \pm 0.15$ & $3.35 \pm 0.05$ & $1.72 \pm 1.09$ & $1.36 \pm 0.56$ \\
      & MMD (Ours)     & $1826.11 \pm 14.44$ & $14.00 \pm 0.18$ & $2.74 \pm 0.04$ & $2.75 \pm 1.15$ & $4.06 \pm 0.86$ \\
    \midrule
    \multirow{2}{*}{Sketch}
      & No-Guidance-DM & $1268.91 \pm 23.54$ & $8.77 \pm 0.25$ & $1.75 \pm 0.03$ & $4.88 \pm 1.30$ & $5.80 \pm 0.861$ \\
      & MMD (Ours)     & $1207.64 \pm 26.41$ & $8.29 \pm 0.24$ & $1.71 \pm 0.03$ & $7.72 \pm 2.12$ & $9.10 \pm 1.42$ \\
    \bottomrule
    \end{tabular}
    }
\end{table}

\textbf{Ablation study on MMD Guidance scale and MMD kernel.}
Figures~\ref{fig:poly_ablation}, \ref{fig:rbf_ablation}, and \ref{fig:rbf_vs_poly_ablation} analyze the effect of the kernel choice and the MMD Guidance scale $\alpha$ on FD and KD. The comparisons use black-and-white images from the bike-user experiments with Stable Diffusion v1.4. It is evident from these results that polynomial kernels ($d \in \{2,3,4\}$), and RBF kernels ($\sigma \in \{1.25,1.5,2\}$) follow very similar FD/KD curves, and the optimal $\alpha$ appears in the same range. At the best $\alpha$, polynomial $d{=}3$ is marginally better on metrics than the RBF choices, but the gap is very small. 
Moreover, across all settings, performance improves as $\alpha$ increases from $0$ up to a small range and then starts to drop. However, the overall results shows that the MMD Guidance is not highly dependent on the choice of guidance parameters or kernel.

\begin{figure}[t]
    \centering    \includegraphics[width=0.94\linewidth]{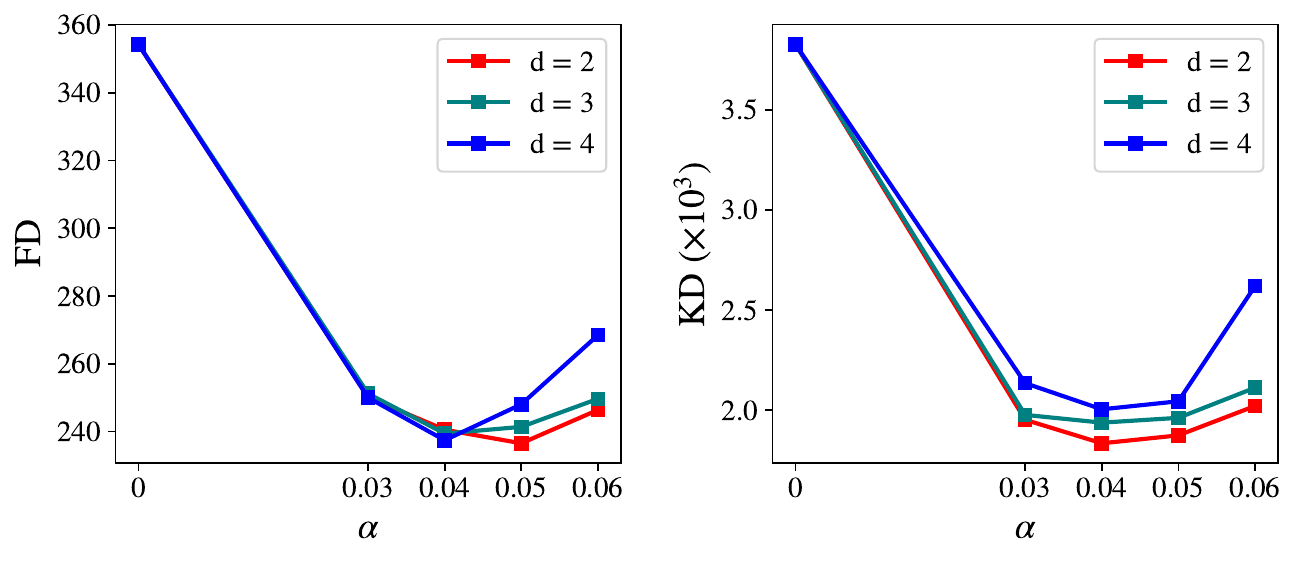}
    \caption{Effect of MMD Guidance scale $\alpha$ with polynomial kernels of degree $d \in \{2,3,4\}$ on FD and KD metrics.}
    \label{fig:poly_ablation}
\end{figure}

\begin{figure}[t]
    \centering    \includegraphics[width=0.94\linewidth]{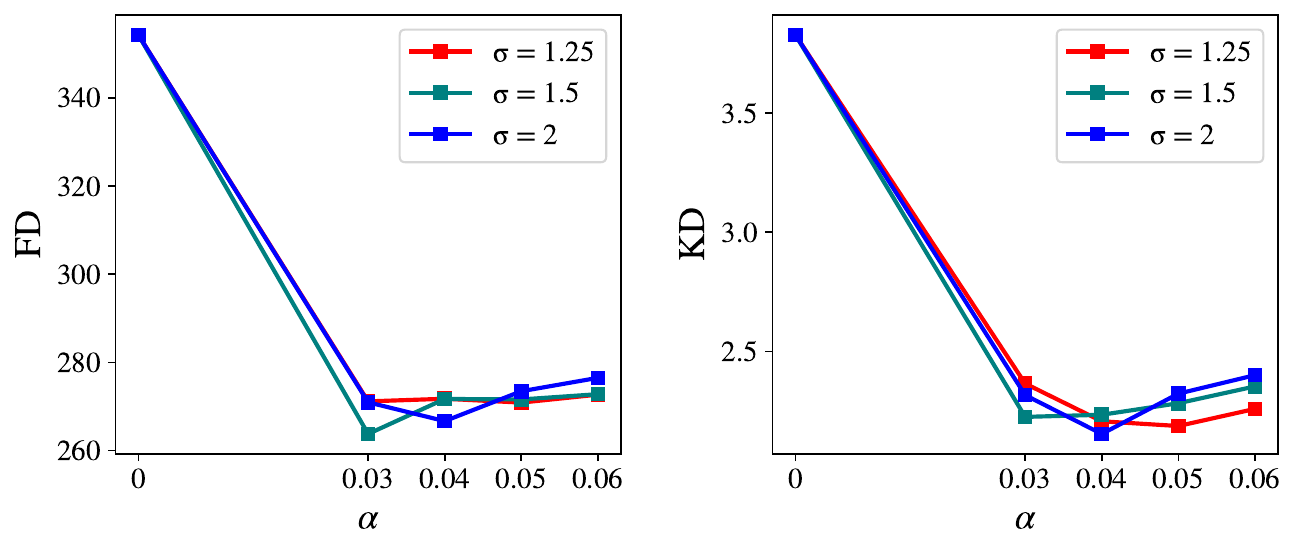}
    \caption{Effect of MMD Guidance scale $\alpha$ with RBF kernels of $\sigma \in \{1.25, 1.5, 2\}$ on FD and KD metrics.}
    \label{fig:rbf_ablation}
\end{figure}

\begin{figure}[t]
    \centering    \includegraphics[width=0.94\linewidth]{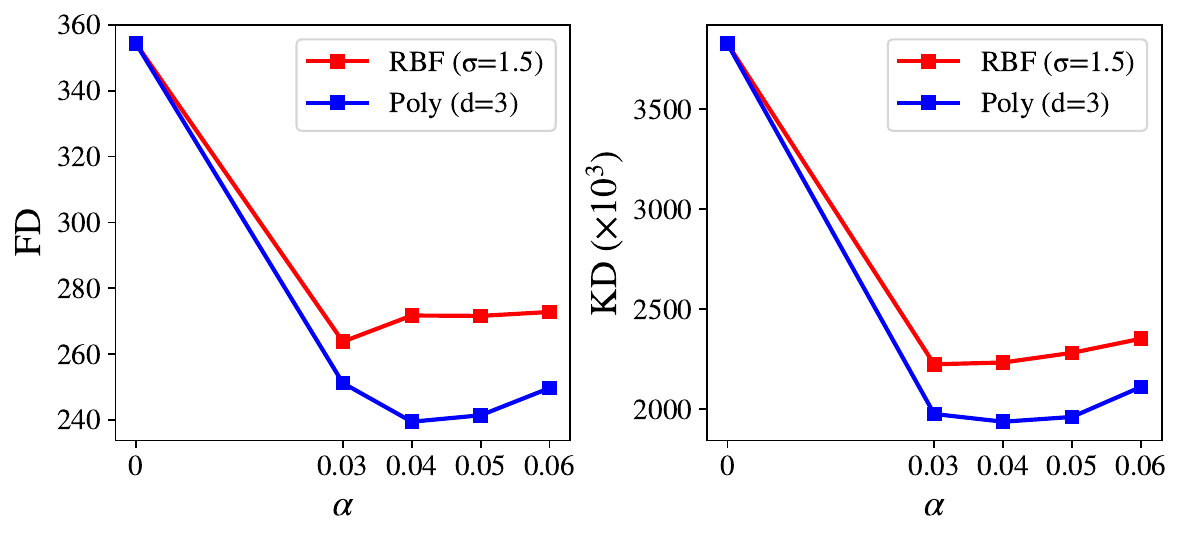}
    \caption{Comparison of RBF ($\sigma=1.5$) vs.\ Polynomial (degree $d=3$) under varying MMD Guidance scale $\alpha$ on FD and KD metrics.}
    \label{fig:rbf_vs_poly_ablation}
\end{figure}

%=============== AFHQ ==================

\textbf{Ambient space MMD Guidance.} The previous experiments apply MMD Guidance in the latent space, we additionally applied MMD Guidance on ambient space using features, extracted from the Inception v3~\citep{szegedy2016rethinking} and computing MMD in that feature space. For these experiments we used AFHQ datasets.

\textbf{Experiment settings on AFHQ.} We evaluated the metrics listed in Table~\ref{afhq-prdc}. We conducted experiments on three users derived from AFHQ classes, each with 500 images of dogs, cats, and wild.
The user images were passed through the Inception v3 to construct the features used for the MMD Guidance on ambient space. Figures~\ref{afhq-qual} depict extra qualitative results for the MMD Guidance for these users. For AFHQ experiments, we generated samples using weights from~\citet{daras2023consistent}. The MMD Guidance scale is in order of $10^{-1}$, and all metrics are computed on 500 generated samples per method. Table~\ref{afhq-prdc} compares the evaluation metrics with the No-Guidance-DM baseline to assess the effectiveness of our method.

\textbf{Effect of the number of reference samples on the MMD Guidance.}
To assess how the number of available reference sets influences the effectiveness of MMD, we evaluated our method under varying sample sizes for FFHQ and GMM datasets.

FFHQ setting: We subsample the sunglasses reference set to ${10, 50, 150, 250, 400, 500}$
examples and compute all evaluation metrics. As expected, table~\ref{tab:sunglasses_ref_samples} depicts that the performance improves significantly in compare to no guidance after only 50 reference samples and it converges shortly after.

\begin{table}[!t]
\caption{Evaluation metrics as a function of the number of reference samples for the Sunglasses user of FFHQ dataset.}
\label{tab:sunglasses_ref_samples}
\centering
\begin{tabular}{lcccc}
\toprule
\# of ref. samples & Guidance & FD $\downarrow$ & KD $\downarrow$ & RRKE $\downarrow$ \\
\midrule
$n=0$   & No-Guidance-DM & 1004.71 & 8.21 & 1.73 \\
\midrule
$n=10$  & MMD & 886.41 & 5.67 & 1.55 \\
$n=50$  & MMD & 736.69 & 3.78 & 1.44 \\
$n=150$ & MMD & 719.37 & 3.59 & 1.41 \\
$n=250$ & MMD & 710.48 & 3.33 & 1.41 \\
$n=400$ & MMD & 696.07 & 3.29 & 1.38 \\
$n=500$ & MMD & 692.87 & 3.25 & 1.39 \\
\bottomrule
\end{tabular}
\end{table}

GMM setting: We study the effect of the number of reference samples using a synthetic GMM dataset with both 8, and 25 components. The reference set is drawn from 3 randomly selected components and further subsampled to obtain reference sets of sizes 
${5,25,50,100,150}$. We compute all evaluation metrics for each setting. As shown in Table~\ref{tab:gmm_ref_samples}, the performance converges rapidly, with little to no improvement beyond 50 reference samples.

\begin{table}[!t]
\caption{Evaluation metrics as a function of the number of reference samples for 8- and 25-component Gaussian mixtures.}
\label{tab:gmm_ref_samples}
\centering
\setlength{\tabcolsep}{8pt}
\resizebox{\linewidth}{!}{
\begin{tabular}{lccccccc}
\toprule
& & \multicolumn{3}{c}{8 Gaussian Component} & \multicolumn{3}{c}{25 Gaussian Component} \\
\cmidrule(lr){3-5}\cmidrule(lr){6-8}
\# Ref. Samples & Guidance 
& FD $\downarrow$ & KD($\times10^{3}$)$\downarrow$ & RRKE $\downarrow$
& FD $\downarrow$ & KD($\times10^{3}$)$\downarrow$ & RRKE $\downarrow$ \\
\midrule
$n=0$   & No-Guidance-DM 
    & 7.04  & 142.42 & 2.31 
    & 77.70 & 330.91 & 2.073 \\
\midrule
$n = 5$   & MMD 
    & 4.62 & 65.17 & 1.45
    & 29.63 & 79.72 & 0.81 \\
$n=25$  & MMD 
    & 0.49 & 29.72 & 1.33
    & 4.93 & 22.66 & 0.33 \\
$n=50$  & MMD 
    & 0.38 & 24.51 & 1.30
    & 3.52 & 9.49  & 0.23 \\
$n=100$ & MMD 
    & 0.34 & 23.33 & 1.22
    & 3.69 & 11.25 & 0.25 \\
$n = 150$ & MMD 
    & 0.37 & 17.02 & 1.29
    & 3.56 & 9.52  & 0.23 \\
\bottomrule
\end{tabular}}
\end{table}

\begin{table}[!t]
\caption{Evaluation metrics as a function of the number of reference samples.}
\label{tab:ref_samples_m}
\centering
\begin{tabular}{lccc}
\toprule
\# of ref. samples & FD $\downarrow$ & KD $\downarrow$ & RRKE $\downarrow$ \\
\midrule
$n = 10$  & 1715.86 & 1.5944 & 1.98 \\
$n = 50$  & 1709.59 & 1.5498 & 1.92 \\
$n = 100$ & 1693.35 & 1.5082 & 1.81 \\
$n = 150$ & 1681.33 & 1.4887 & 1.79 \\
$n = 200$ & 1679.69 & 1.4534 & 1.76 \\
\bottomrule
\end{tabular}
\end{table}

\textbf{Applying MMD guidance and denoising process separately.}
We conducted an experiment in which MMD guidance was applied separately from the denoising process. We performed five MMD-guidance steps every ten diffusion timesteps. As shown in Table~\ref{tab:sdxl_mmd_variants}, this modification yielded improved MMD-guidance performance compared with the original simultaneous-guidance approach.

\begin{table}[!t]
\caption{Comparison of SDXL variants with and without MMD guidance.}
\label{tab:sdxl_mmd_variants}
\centering
\begin{tabular}{lccc}
\toprule
Model & FD $\downarrow$ & KD $\downarrow$ & RRKE $\downarrow$ \\
\midrule
SDXL (No-Guidance-DM)                & 1734.25 & 1.5536 & 1.84 \\
SDXL + MMD Guidance (clean reference)   & 1678.24 & 1.4429 & 1.76 \\
SDXL + MMD Guidance (separate steps)   & 1648.95 & 1.4405 & 1.62 \\
\bottomrule
\end{tabular}
\end{table}

\textbf{Computing MMD guidance with noisy reference sample.}
Prompt-aware results: We compared computing MMD guidance with clean reference samples and with the latent vector of the reference sample at time step $t$. Suggested by the results in Table~\ref{tab:sdxl_mmd_noisy_clean}, we observe that using the latent vector of the noisy reference sample at time-step $t$, achieves comparable results to using clean references.

\begin{table}[!t]
\caption{Comparison of SDXL variants with and without MMD guidance.}
\label{tab:sdxl_mmd_noisy_clean}
\centering
\begin{tabular}{lccc}
\toprule
Model & FD $\downarrow$ & KD $\downarrow$ & RRKE $\downarrow$ \\
\midrule
SDXL (No-Guidance-DM)                              & 1734.25 & 1.5536 & 1.84 \\
SDXL + MMD Guidance (noisy reference at timestep $t$) & 1725.56 & 1.5487 & 1.79 \\
SDXL + MMD Guidance (clean reference)           & 1678.24 & 1.4429 & 1.76 \\
\bottomrule
\end{tabular}
\end{table}

\textbf{MMD Guidance for Adapting Highly Different Domains.}
To test MMD guidance in highly different domains, /following \cite{somepalli2024measuring}, we use two distinct painting styles, Klimt and Franz Marc, as the reference set and apply MMD guidance to adapt the distribution of samples generated by SDXL. To quantitatively measure the domain gap between SDXL outputs and the reference set, we compute Recall and Coverage, obtaining values of 0.34 and 0.12, respectively. Table~\ref{tab:highly_different_domains} reports the quantitative results with and without MMD guidance.

\begin{table}[t]
\centering
\caption{Quantitative comparison of SDXL with and without MMD guidance for adapting highly different artistic domains.}
\label{tab:highly_different_domains}
\begin{tabular}{lccccccc}
\hline
Model & FD $\downarrow$ & KD $\downarrow$ & RRKE $\downarrow$ & Density ($\times 10^2$) $\uparrow$ & Coverage ($\times 10^2$) $\uparrow$ & Precision $\uparrow$ & Recall $\uparrow$ \\
\hline
SDXL (No Guidance) & 1988.4 & 4.38 & 1.98 & 9.83 & 12.76 & 25.93 & 18.39 \\
SDXL + MMD (Ours) & 1564.3 & 2.28 & 1.74 & 24.34 & 52.74 & 74.21 & 34.84 \\
\hline
\end{tabular}
\end{table}

\textbf{Effect of Reference Mixture Weights on Mode Proportions.}
To address whether guidance can adjust mixture proportions, an additional experiment was conducted on the synthetic mixture-of-Gaussians setup. A diffusion model was first trained on data where all Gaussian components had equal mixture weights, yielding a uniform distribution over modes. As a baseline, samples generated without any guidance (second panel) reproduce this uniform mixture and do not reflect any target reweighting.

\begin{table}[t]
\caption{Effect of mode proportions in MMD guidance on the number of samples in each GMM component.}
\label{tab:mmd-weights}
\centering
\begin{tabular}{lcccccccc}
\toprule
Guidance & 1 & 2 & 3 & 4 & 5 & 6 & 7 & 8 \\
\midrule
Reference sample & 90\% & 10\% & 0\% & 0\% & 0\% & 0\% & 0\% & 0\% \\
No-Guidance-DM         & 12\% & 12\% & 11\% & 18\% & 12\% & 10\% & 13\% & 12\% \\
MMD              & 88\% & 12\% & 0\% & 0\% & 0\% & 0\% & 0\% & 0\% \\
\bottomrule
\end{tabular}
\end{table}

To simulate a non-uniform target distribution, reference samples were drawn from a mixture whose component weights were sampled from a Dirichlet distribution with parameter (1,10) over the depicted components, thereby concentrating mass on a subset of modes. As depicted in Figure~\ref{fig:gmm_weight} after applying MMD-Guidance at sampling time, the guided diffusion model successfully shifted its outputs to match these reweighted proportions: the generated samples predominantly occupy the emphasized component, and their relative frequencies closely track those in the reference set. This demonstrates that the proposed guidance mechanism can meaningfully steer mode proportions using only a small, reweighted reference sample set. Table~\ref{tab:mmd-weights} provides numerical proofs for this observation. After generation, each sample is assigned to the closest GMM component using a \(k\)-nearest-neighbor classifier.

% \textcolor{red}{TODO: We should add the figure here.}

% }

\textbf{Additional qualitative results for Prompt-based diffusion models.}
We conducted experiments on Pixar animated cars and Cowboy animated horses using Pixart-$\Sigma$ diffusion models, similar to the settings shown in Figure~\ref{fig:figure_one} as illustrated in Figure~\ref{fig:pixart_mmd}.
This comparison evaluates image generation via a text-conditioned latent diffusion model (LDM) with and without guidance. The LDM (Pixart-$\Sigma$) employing our proposed MMD guidance, based on 100 reference samples of "car" and "horse" images, effectively captures the visual format of the target distribution. In contrast, the unguided LDM outputs exhibit stylistic differences from the target model.

\textbf{Visualization of the generation process and the effect of late-step MMD guidance.}
We further study the generation trajectory under MMD guidance by visualizing intermediate denoising results. In particular, we examine whether the desired samples can still be obtained when MMD guidance is applied only during a restricted time interval of the reverse diffusion process. The results show that applying MMD guidance only in the final 30 denoising steps is already sufficient to steer the samples toward the target distribution. Compared with applying guidance throughout the entire reverse process, this restricted setting produces visually similar generation trajectories while incurring lower computational cost. These findings indicate that the later denoising stages are especially important for distribution alignment.

To further investigate the effect of restricting guidance to the late denoising phase, we apply MMD guidance to SDXL only during the last 30 iterations. As shown in Table~\ref{tab:sdxl_last30}, this setting yields results that remain close to full-step MMD guidance and significantly outperform the unguided baseline across all metrics. These results suggest that the final denoising stages play a particularly important role in aligning the generated samples with the target distribution.

\textbf{Effect of the intra-batch Diversity Term.}
To assess the effect of the intra-batch diversity term, we conducted an ablation study where we first evaluated on two settings of 8-center and 25-center GMMs. For each setting, we sample 3 random user distributions with 150 points each and run the method with and without the diversity term. Table~\ref{tab:diversity_gmm} reports the quantitative results, and Figure~\ref{fig:diversity-gmm} shows a representative failure case without the diversity term.

\begin{figure}
\resizebox{\textwidth}{!}{%
\begin{tikzpicture}
\def\imgwidth{14.3cm}
\def\imgwidthbig{14.0cm}
\node[inner sep=0, outer sep=0] (rowTop) at (9.34, -1.5)
  {\includegraphics[width=\imgwidth]{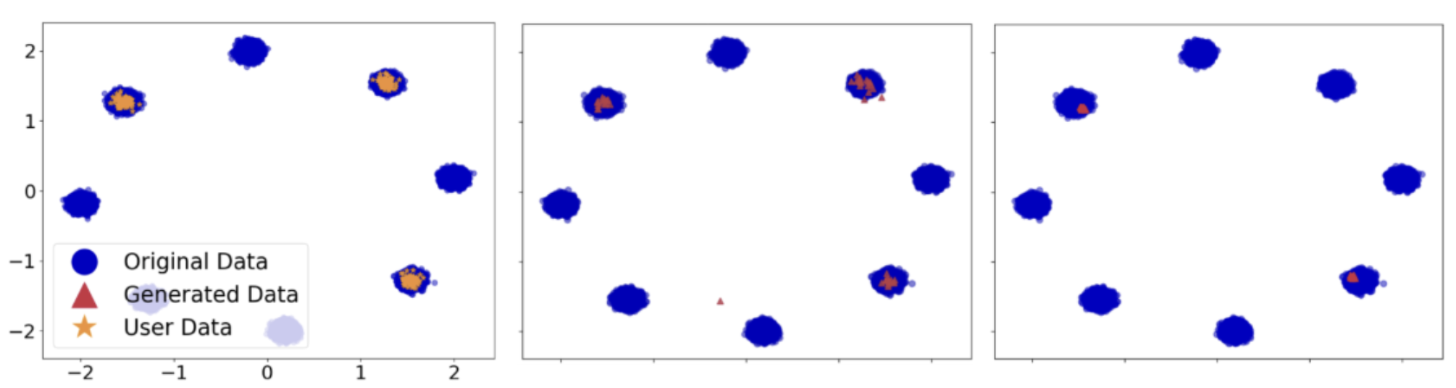}};

\def\colA{0.17}\def\colB{0.50}\def\colC{0.83}

\foreach \x/\dens in {
  \colA/{\headerFont\textbf{User Data}},
  \colB/{\headerFont\textbf{Diversity Term}},
  \colC/{\headerFont\textbf{No Diversity Term}}
}{
  \node[
    font=\normalsize, align=center,
    anchor=south,
    fill=white, rounded corners=0.6pt, inner sep=0.9pt
  ] at ($ (rowTop.north west)!\x!(rowTop.north east) + (0,0.18) $)
  { \dens};
}

\end{tikzpicture}
}
\caption{Effect of the intra-batch Diversity Term in 8-centered GMM.}
\label{fig:diversity-gmm}
\end{figure}

\begin{table*}[t]
\caption{Effect of the intra-batch diversity term on synthetic GMM experiments.}
\label{tab:diversity_gmm}
\centering
\setlength{\tabcolsep}{6pt}
\resizebox{\linewidth}{!}{
\begin{tabular}{cccccccc}
\toprule
& & \multicolumn{3}{c}{8 Gaussian Component} & \multicolumn{3}{c}{25 Gaussian Component} \\
\cmidrule(lr){3-5}\cmidrule(lr){6-8}
User & Guidance &
FD $\downarrow$ & KD($\times10^{2}$)$\downarrow$ & RRKE $\downarrow$ &
FD $\downarrow$ & KD($\times10^{2}$)$\downarrow$ & RRKE $\downarrow$ \\
\midrule

\multirow{2}{*}{User1}
& Diversity Term    & 0.37 & 1.70 & 1.294 & 3.56 & 0.95 & 0.231 \\
& No Diversity Term & 4.14 & 12.90 & 2.00 & 11.87 & 13.49 & 0.131 \\
\midrule

\multirow{2}{*}{User2}
& Diversity Term    & 0.33 & 1.65 & 1.245 & 2.84 & 0.90 & 0.246 \\
& No Diversity Term & 0.55 & 4.34 & 1.36 & 31.20 & 21.92 & 0.019 \\
\midrule

\multirow{2}{*}{User3}
& Diversity Term    & 0.64 & 1.45 & 1.258 & 1.85 & 0.77 & 0.236 \\
& No Diversity Term & 3.70 & 12.75 & 1.42 & 6.65 & 6.23 & 0.29 \\
\bottomrule

\end{tabular}
}
\end{table*}

Using the FFHQ dataset, we observed that in Table~\ref{tab:diversity_ffhq}, similar to GMMs, incorporating the diversity term and combining it with the cross-term leads to noticeable improvements across the metrics. 

\begin{table*}[t]
\caption{
Effect of the intra-batch diversity term on FFHQ dataset.
}
\label{tab:diversity_ffhq}
\centering
\setlength{\tabcolsep}{6pt}
\resizebox{\linewidth}{!}{
\begin{tabular}{l l c c c c c}
\toprule
User &
Guidance &
FD$\downarrow$ &
KD$\downarrow$ &
RRKE$\downarrow$ &
Density (x100)$\uparrow$ &
Coverage (x100)$\uparrow$ \\
\midrule
Sunglasses      & Cross Term Only   & 941.49 & 5.63 & 1.51 & 136.00 & 73.24 \\
Sunglasses      & Diversity + Cross & 692.87 & 3.25 & 1.39 & 113.13 & 79.08 \\
\midrule
Reading-glasses & Cross Term Only   & 954.83 & 6.64 & 1.42 & 76.30 & 60.32 \\
Reading-glasses & Diversity + Cross & 574.29 & 1.39 & 1.30 & 87.10 & 84.60 \\
\bottomrule
\end{tabular}
}
\end{table*}

\begin{table*}[t!]
\caption{Comparison between fine-tuning and MMD guidance on synthetic GMM under large number of data reference sample.}
\label{tab:gmm_ft_vs_mmd}
\centering
\setlength{\tabcolsep}{6pt}
\resizebox{\linewidth}{!}{
\begin{tabular}{cccccccc}
\toprule
& & \multicolumn{3}{c}{8 Gaussian Component} & \multicolumn{3}{c}{25 Gaussian Component} \\
\cmidrule(lr){3-5}\cmidrule(lr){6-8}
User & Guidance &
FD $\downarrow$ & KD(100)$\downarrow$ & RRKE $\downarrow$ &
FD $\downarrow$ & KD(100)$\downarrow$ & RRKE $\downarrow$ \\
\midrule

\multirow{3}{*}{User1}
& No Guidance & 7.042 & 14.242 & 2.314 & 77.702 & 33.091 & 2.073 \\
& Fine Tuning & 0.136 & 0.172 & 0.718 & 0.266 & 0.212 & 0.270 \\
& MMD         & 0.144 & 0.201 & 0.695 & 0.271 & 0.203 & 0.207 \\
\midrule

\multirow{3}{*}{User2}
& No Guidance & 7.464 & 15.398 & 2.089 & 95.583 & 41.704 & 2.256 \\
& Fine Tuning & 0.065 & 0.159 & 1.446 & 0.620 & 0.291 & 0.251 \\
& MMD         & 0.063 & 0.168 & 0.933 & 0.558 & 0.196 & 0.220 \\
\midrule

\multirow{3}{*}{User3}
& No Guidance & 8.303 & 16.747 & 2.376 & 87.633 & 37.838 & 2.107 \\
& Fine Tuning & 0.256 & 0.201 & 0.290 & 0.732 & 0.246 & 0.267 \\
& MMD         & 0.182 & 0.148 & 0.220 & 0.699 & 0.220 & 0.260 \\
\bottomrule

\end{tabular}
}
\end{table*}

\textbf{Comparison with Fine-Tuning under Larger Reference Sets.}
We further compare MMD Guidance with fine-tuning in a setting with larger numbers of reference samples per user. We evaluate on Gaussian mixture models in two settings of 8-center and 25-center GMMs. For each setting, we sample three random user distributions containing 3000 data points. As shown in Table~\ref{tab:gmm_ft_vs_mmd}, the performance of fine-tuning improves significantly as more reference samples become available. Nevertheless, MMD Guidance continues to achieve performance comparable to fine-tuning while avoiding additional training, demonstrating its effectiveness even in training-sufficient regimes.

\begin{figure*}[t]
\centering
\begin{tikzpicture}

\node[anchor=south west, inner sep=0] (fig)
  at (0,0) {\includegraphics[width=\textwidth]{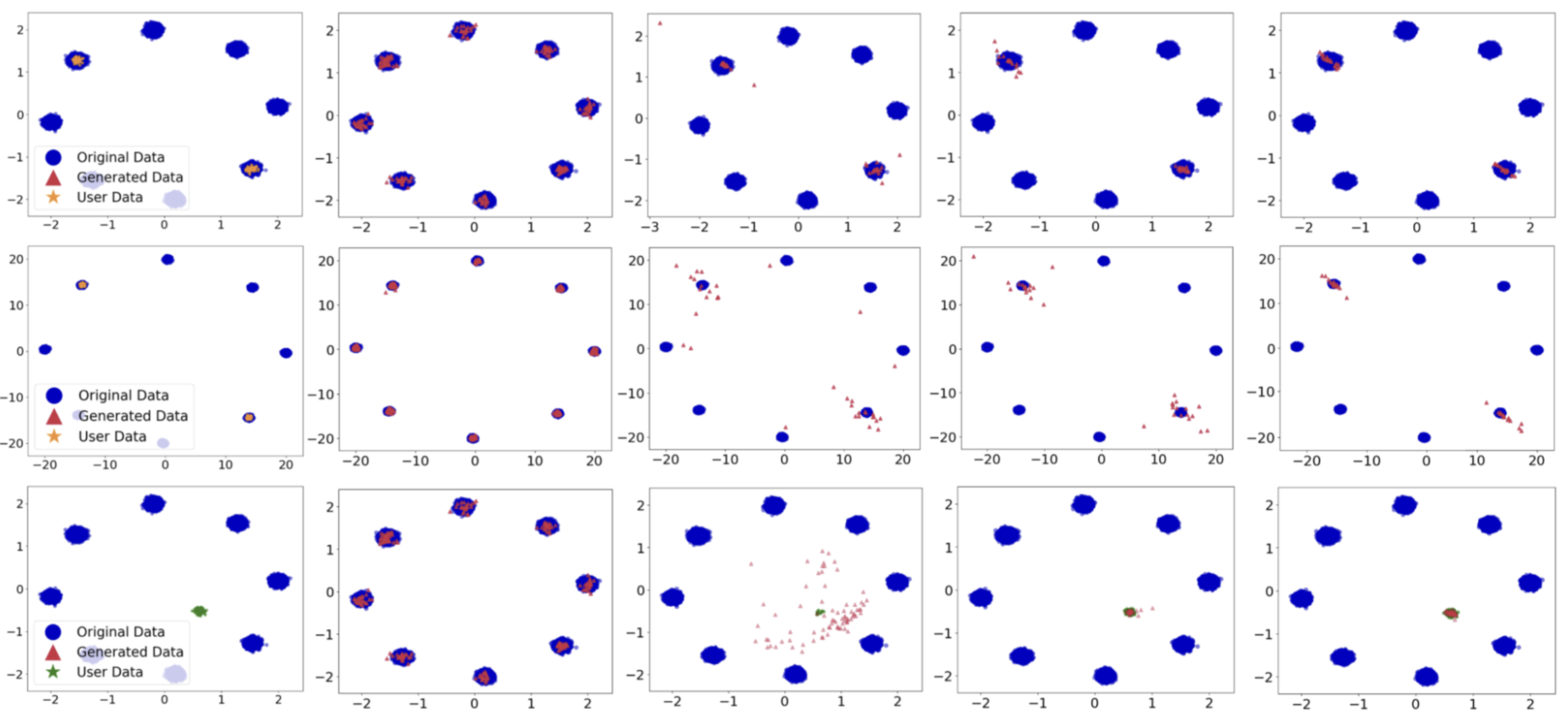}};

\node[anchor=south, yshift=2pt]
at ($(fig.north west)!0.10!(fig.north east)$)
{\textbf{User}};

\node[anchor=south, yshift=2pt]
at ($(fig.north west)!0.30!(fig.north east)$)
{\textbf{\#Samples = 0}};

\node[anchor=south, yshift=2pt]
at ($(fig.north west)!0.50!(fig.north east)$)
{\textbf{\#Samples = 10}};

\node[anchor=south, yshift=2pt]
at ($(fig.north west)!0.70!(fig.north east)$)
{\textbf{\#Samples = 100}};

\node[anchor=south, yshift=2pt]
at ($(fig.north west)!0.90!(fig.north east)$)
{\textbf{\#Samples = 200}};

\end{tikzpicture}

\caption{Effect of the number of reference samples on domain gap for 8-centered GMM using MMD guidance. Effect of domain gap on guided sampling in Gaussian mixture models. Larger gaps require stronger guidance and more reference samples. The same trend holds when the target lies outside the training}
\label{fig:domain_gap}
\end{figure*}

\begin{figure}[t]
    \centering
    \vspace{-3mm}
    \includegraphics[width=0.94\linewidth]{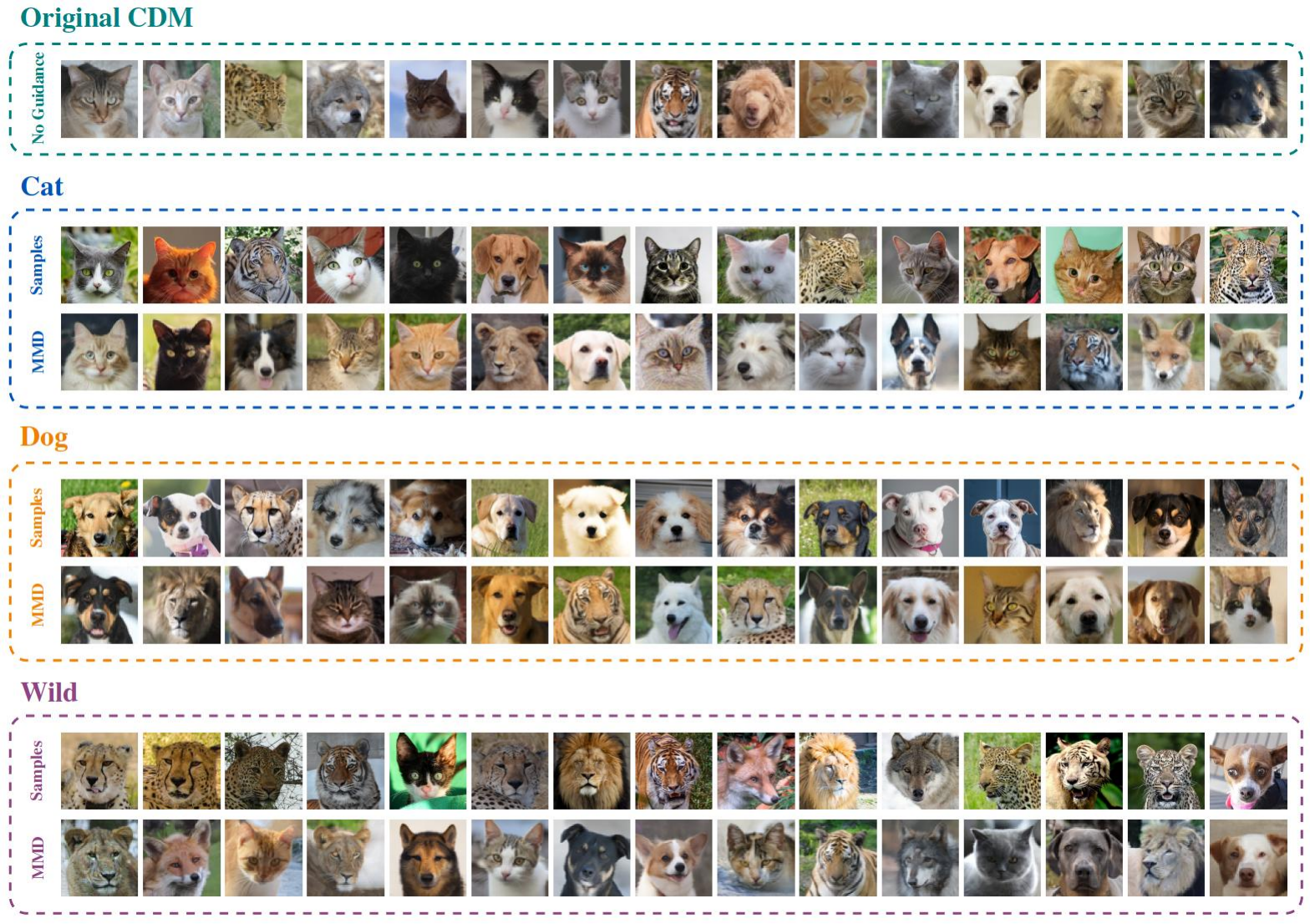}
    \caption{MMD Guidance on AFHQ dataset.}
    \label{afhq-qual}
  
\end{figure}

\begin{figure}
    \centering
    \includegraphics[width=0.72\linewidth]{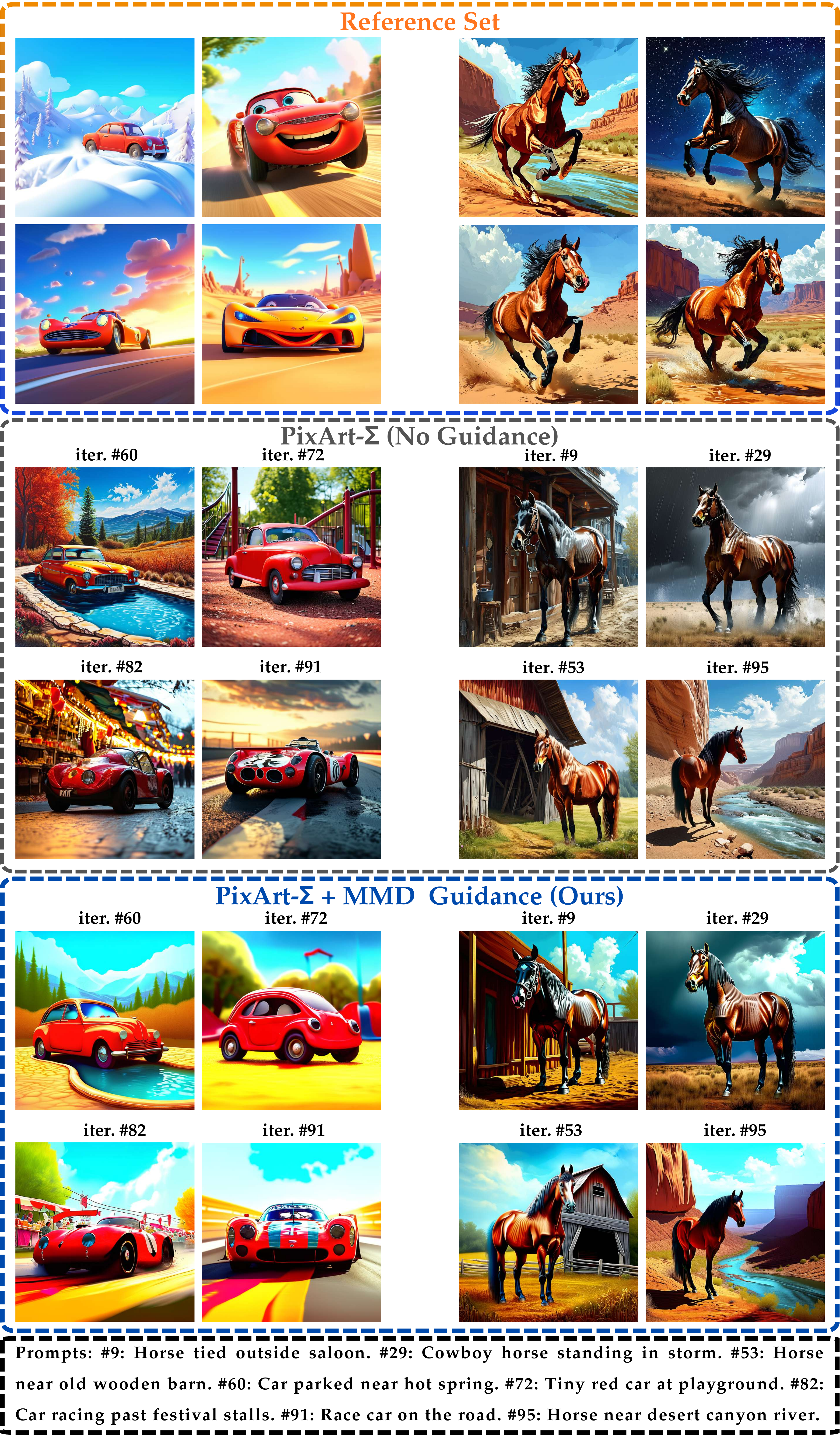}
    \caption{Comparison of Pixart-$\Sigma$ image generation with and without MMD guidance, showing style differences in unguided LDM outputs from the target distribution of "car" and "horse" images.}
    \label{fig:pixart_mmd}
\end{figure}

\begin{figure}
    \centering
    \includegraphics[width=\linewidth]{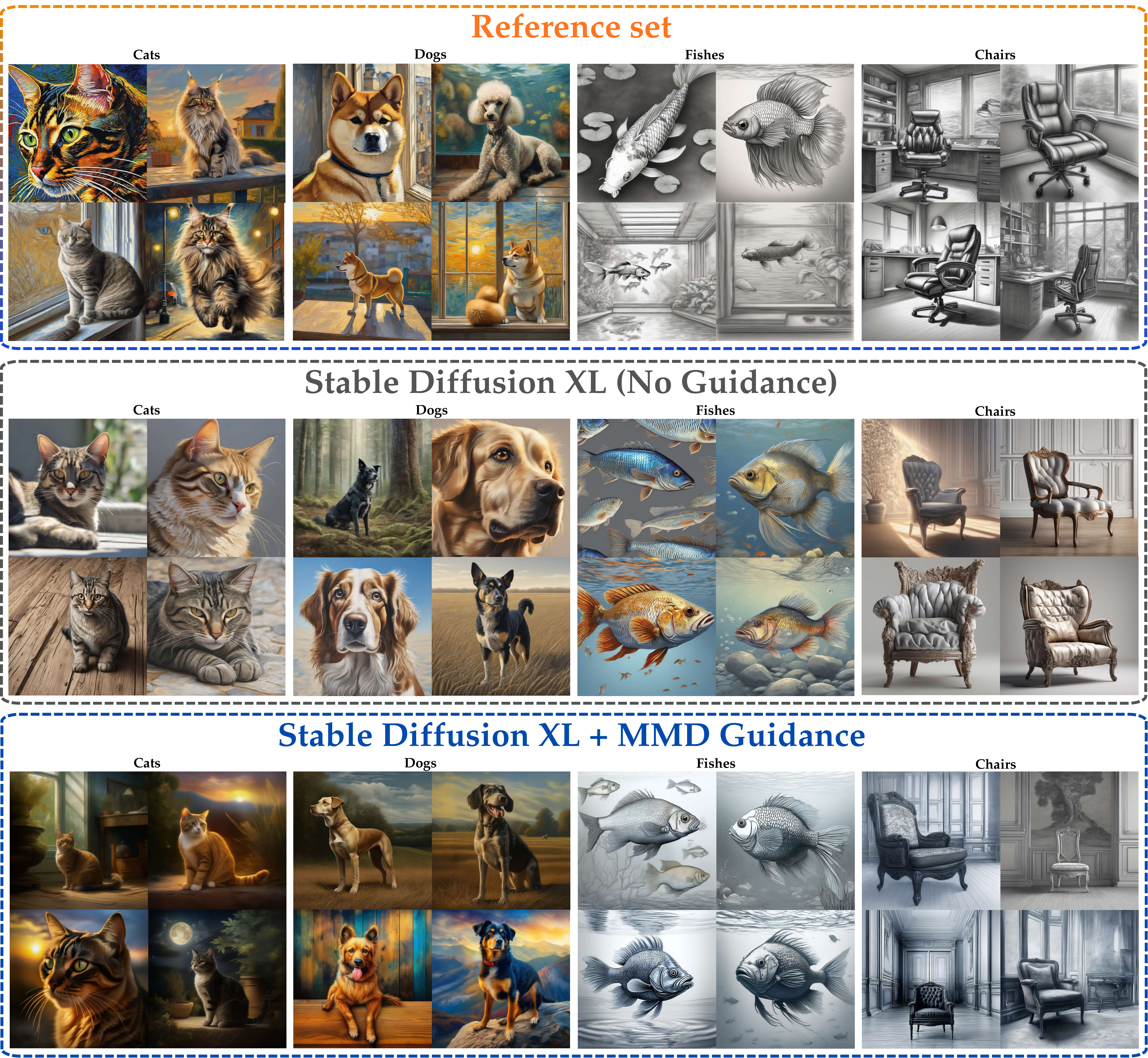}
    \caption{Qualitative comparison of reference set and MMD-guided image generation with SDXL.}
    \label{fig:sdxl_user_1}
\end{figure}

\begin{figure}
    \centering
    \includegraphics[width=\linewidth]{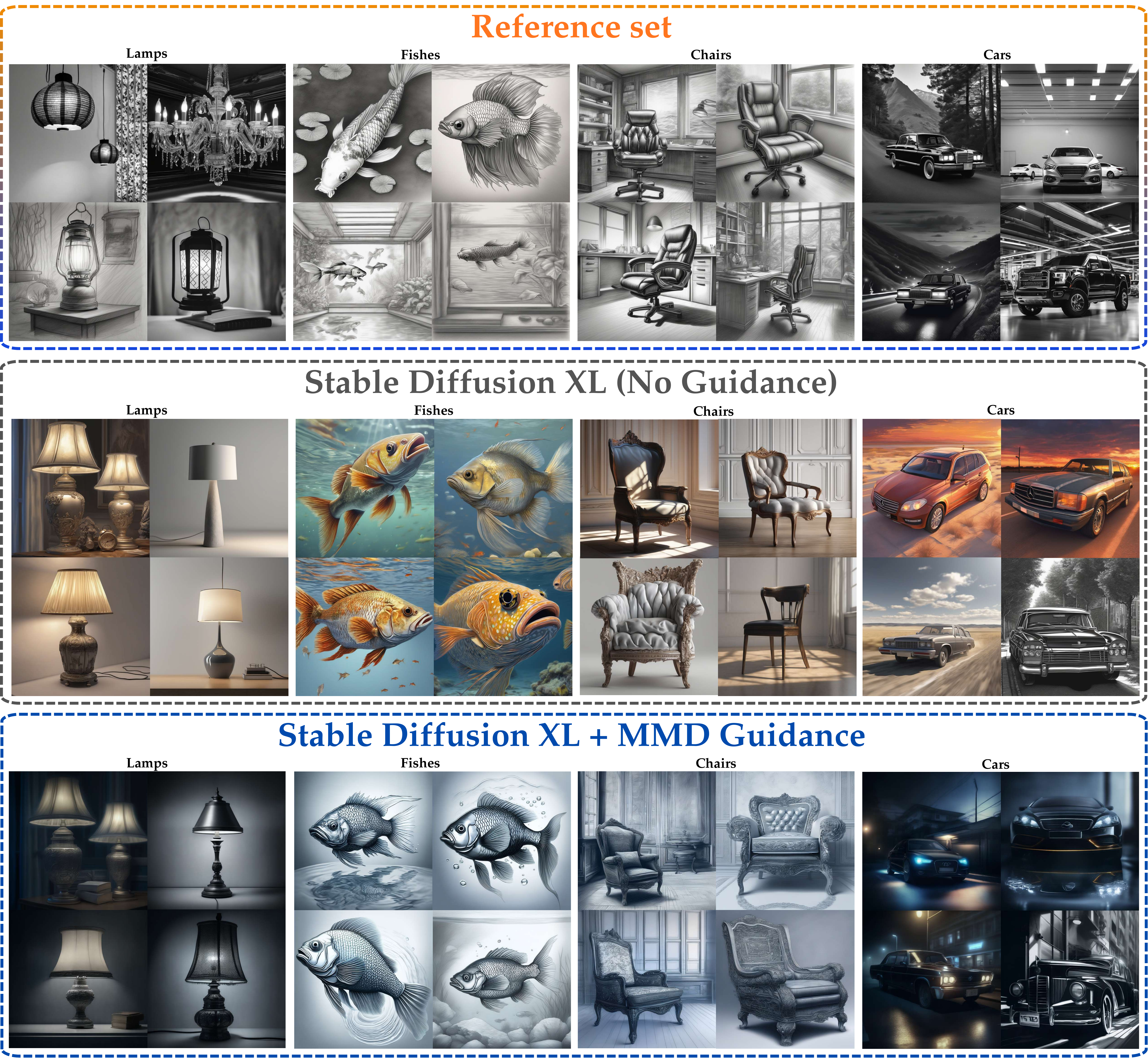}
    \caption{Qualitative comparison of reference set and MMD-guided image generation with SDXL.}
    \label{fig:sdxl_user_2}
\end{figure}

\begin{table}[!t]
    \caption{Evaluation metrics for AFHQ with No-Guidance-DM vs. MMD Guidance.}
    \label{afhq-prdc}
    \centering
    \resizebox{\linewidth}{!}{
    \begin{tabular}{ccccccc}
    \toprule
    User & Guidance & FD $\downarrow$ & KD $\downarrow$ & RRKE $\downarrow$ & Density ($\times 10^{2}$) $\uparrow$
    & Coverage ($\times 10^{2}$) $\uparrow$ \\
    \midrule
    \multirow{2}{*}{Cat} & No-Guidance-DM & $1154.87 \pm 231.71$ & $3.16 \pm 1.17$ & $1.37 \pm 0.10$ & $44.70 \pm 20.08$ & $38.16 \pm 2.99$ \\ 
    & MMD (Ours) & $1030.14 \pm 208.31$ & $2.85 \pm 1.02$ & $1.33 \pm 0.08$ & $42.81 \pm 18.25$ & $43.08 \pm 2.37$  \\
    \midrule
    \multirow{2}{*}{Dog} & No-Guidance-DM & $1309.05 \pm 271.71$ & $2.55 \pm 0.56$ & $1.40 \pm 0.08$ & $42.21 \pm 10.34$ & $60.96 \pm 7.94$ \\ 
    & MMD (Ours) & $1147.64 \pm 203.07$ & $1.91 \pm 0.36$ & $1.36 \pm 0.06$ & $42.50 \pm 7.47$ & $63.60 \pm 5.38$ \\
    \midrule
    \multirow{2}{*}{Wild} & No-Guidance-DM & $1301.12 \pm 176.96$ & $4.69 \pm 1.26$ & $1.39 \pm 0.06$ & $41.87 \pm 16.39$ & $21.64 \pm 6.46$ \\ 
    & MMD (Ours) & $1052.44 \pm 115.79$ & $3.45 \pm 0.90$ & $1.32 \pm 0.04$ & $35.47 \pm 14.14$ & $24.96 \pm 6.21$ \\
    \bottomrule
    \end{tabular}
    }
\end{table}

\begin{figure*}
    \centering
    \includegraphics[width=\linewidth]{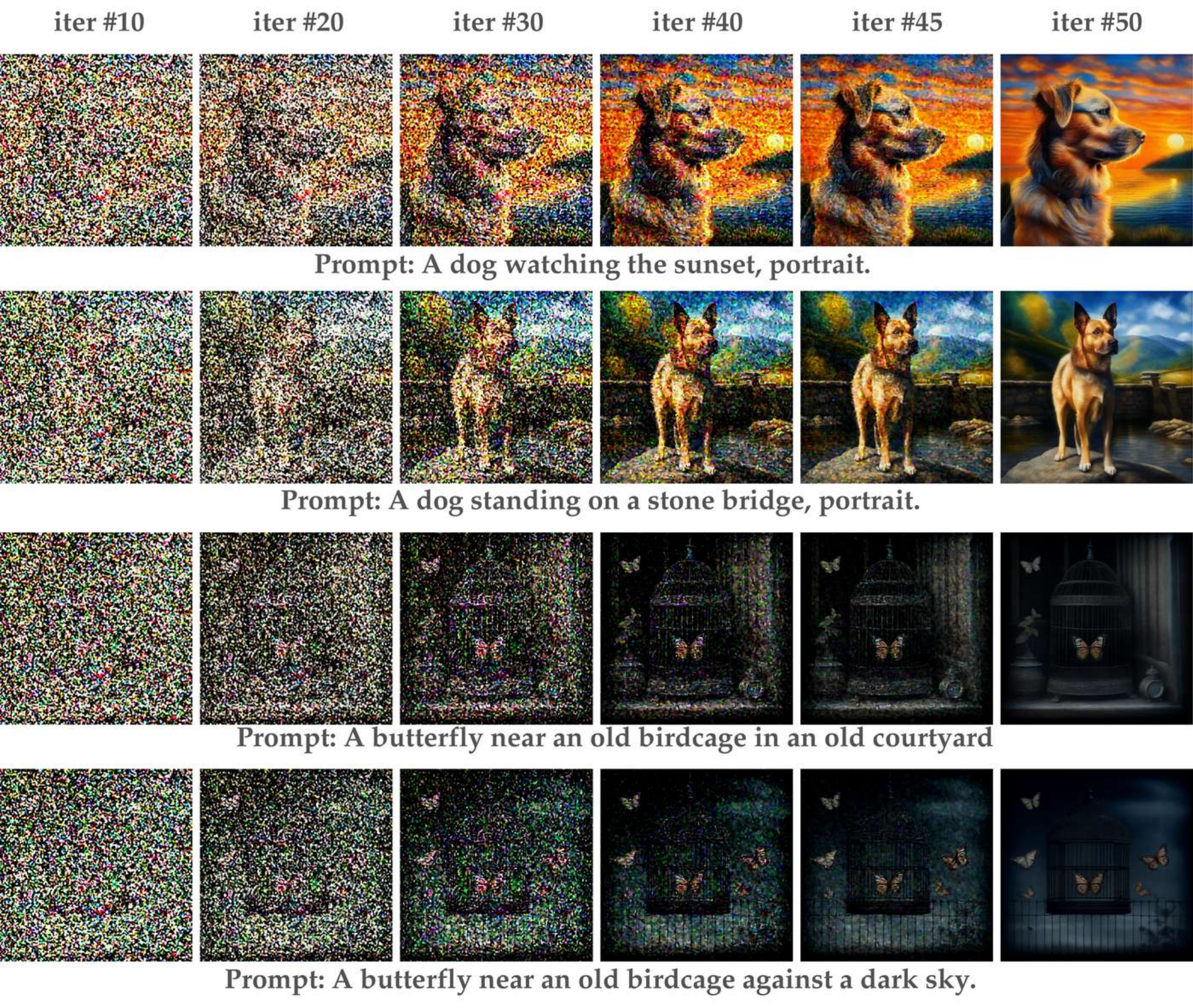}
    \caption{Visualization of the SDXL generation process under MMD guidance.}
    \label{fig:visualization_sdxl}
\end{figure*}

\begin{figure*}
    \centering
    \includegraphics[width=\linewidth]{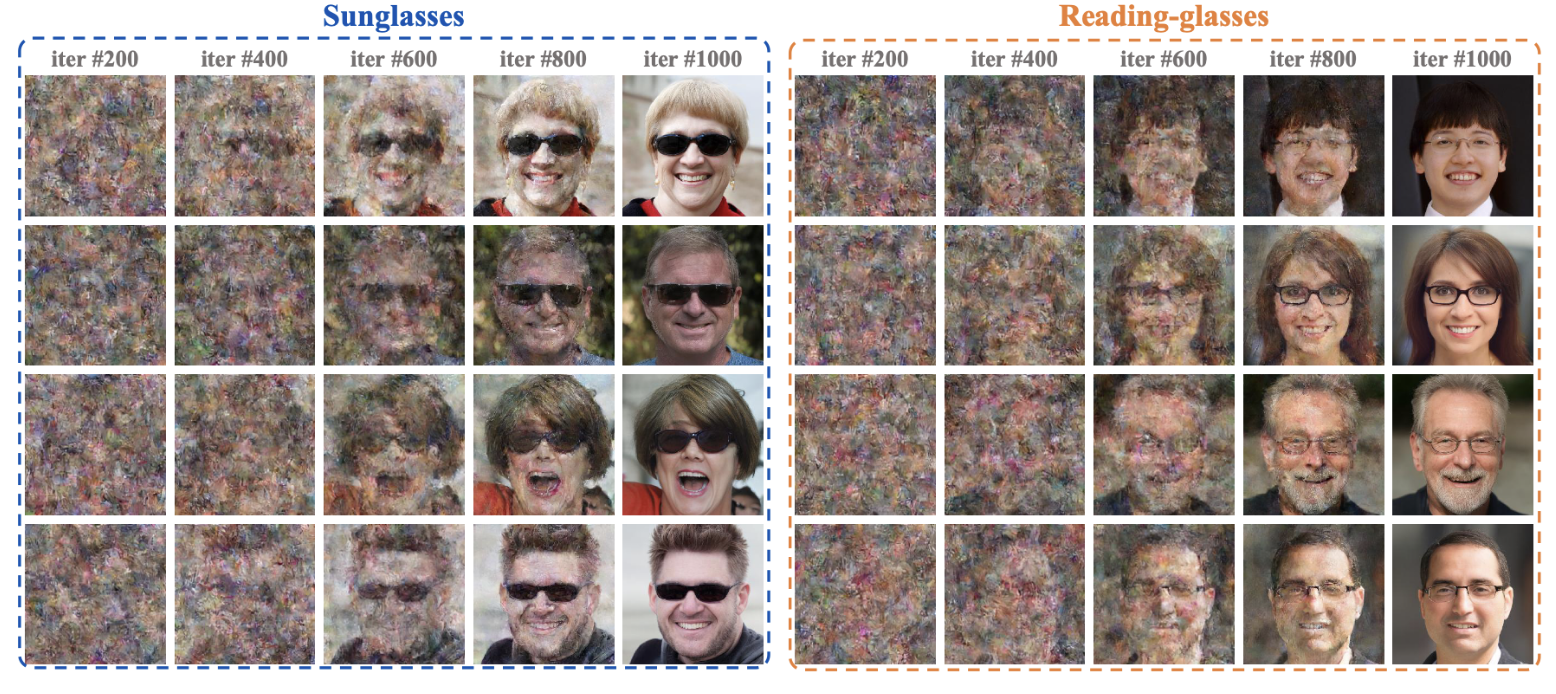}
    \caption{Visualization of the LDM generation process for the FFHQ dataset under MMD guidance.}
    \label{fig:visualization_ffhq}
\end{figure*}

\begin{table}[t]
\centering
\caption{Quantitative comparison on SDXL with full-step guidance and late-step guidance.}
\label{tab:sdxl_last30}
\begin{tabular}{lccccc}
\hline
Model & Guidance & FD $\downarrow$ & KD $\downarrow$ & Density ($\times 10^2$) $\uparrow$ & Coverage ($\times 10^2$) $\uparrow$ \\
\hline
SDXL & No-Guidance-DM & 1953.8 & 3.57 & 5.63 & 11.34 \\
SDXL & MMD (Ours) & 1674.4 & 2.49 & 18.01 & 34.20 \\
SDXL (last 30 steps) & MMD (Ours) & 1732.23 & 2.53 & 16.34 & 31.92 \\
\hline
\end{tabular}
\end{table}

\end{document}